\theoremstyle{plain}
\newtheorem{thm}{\protect\theoremname}
  \theoremstyle{definition}
  \theoremstyle{plain}
  \newtheorem{lem}[thm]{\protect\lemmaname}
  \newtheorem{prop}[thm]{\protect\propname}
  \providecommand{\propname}{Proposition}
  \providecommand{\condname}{Condition}
  \providecommand{\probname}{Problem}
  \providecommand{\definitionname}{Definition}
  \providecommand{\lemmaname}{Lemma}
  \providecommand{\corollaryname}{Corollary}
\providecommand{\theoremname}{Theorem}
\newtheorem{remark}{Remark}
\title{Doubly-Stochastic Normalization of the Gaussian Kernel \\ is Robust to Heteroskedastic Noise}
\author{ Boris Landa${^{1,*}}$~~~~Ronald R.Coifman${^{1}}$~~~~Yuval Kluger${^{1,2,3}}$\\
\small{${^1}$Program in Applied Mathematics, Yale University}\\
\small{${^2}$Interdepartmental Program in Computational Biology and Bioinformatics, Yale University}\\
\small{${^3}$Department of Pathology, Yale University School of Medicine}\\
\small{${^*}$Corresponding author. Email: boris.landa@yale.edu}
}
\begin{document}

\maketitle

\begin{abstract}
{A fundamental step in many data-analysis techniques is the construction of an affinity matrix describing similarities between data points. When the data points reside in Euclidean space, a widespread approach is to from an affinity matrix by the Gaussian kernel with pairwise distances, and to follow with a certain normalization (e.g. the row-stochastic normalization or its symmetric variant). We demonstrate that the doubly-stochastic normalization of the Gaussian kernel with zero main diagonal (i.e., no self loops) is robust to heteroskedastic noise. That is, the doubly-stochastic normalization is advantageous in that it automatically accounts for observations with different noise variances. Specifically, we prove that in a suitable high-dimensional setting where heteroskedastic noise does not concentrate too much in any particular direction in space, the resulting (doubly-stochastic) noisy affinity matrix converges to its clean counterpart with rate $m^{-1/2}$, where $m$ is the ambient dimension. We demonstrate this result numerically, and show that in contrast, the popular row-stochastic and symmetric normalizations behave unfavorably under heteroskedastic noise. Furthermore, we provide examples of simulated and experimental single-cell RNA sequence data with intrinsic heteroskedasticity, where the advantage of the doubly-stochastic normalization for exploratory analysis is evident.}
\end{abstract}

\section{Introduction}
\subsection{Affinity matrix constructions}
Given a dataset of points in Euclidean space, a useful approach for encoding the intrinsic geometry of the data is by a weighted graph, where the vertices represent data points, and the edge-weights describe similarities between them. Such a graph can be described by an affinity (or adjacency/similarity) matrix, namely a nonnegative matrix whose $(i,j)$'th entry holds the edge-weight between vertices $i$ and $j$. To measure the similarity between pairs of data points, one can employ the Gaussian kernel with pairwise (Euclidean) distance. In particular, given data points $\mathbf{x}_1,\ldots,\mathbf{x}_n \in \mathbb{R}^m$, we consider the matrix $K\in\mathbb{R}^{n \times n}$ given by
\begin{equation}
    K_{i,j} = \begin{dcases}
    \operatorname{exp}(-\Vert \mathbf{x}_i - \mathbf{x}_j\Vert^2/\varepsilon), & i\neq j, \\
    0, & i=j, 
    \end{dcases} \label{eq:K def}
\end{equation}
for $i,j=1,\ldots,n$, where $\varepsilon$ is the \textit{kernel width} parameter. For many applications it is a common practice to normalize $K$, so to equip the resulting affinity matrix with a useful interpretation and favourable properties. Two such normalizations, which are closely related to each other, are the \textit{row-stochastic} and the \textit{symmetric} normalizations:
\begin{flalign}
&    \text{(\textbf{Row-stochastic normalization})}   &
    W^{(\text{r})} &\overset{\text{def}}{=} \operatorname{diag}(\textbf{r})  K, \qquad\qquad r_i = \frac{1}{\sum_{j=1}^n K_{i,j}},  &   \label{eq:W_r def}\\
&    \text{(\textbf{Symmetric normalization})} &
   W^{(s)} & \overset{\text{def}}{=} \sqrt{\operatorname{diag}(\mathbf{r})} K \sqrt{\operatorname{diag}(\mathbf{r})},  &   \label{eq:W_s def}
\end{flalign}
where $\mathbf{r}= [r_1,\ldots,r_n]$, and $\operatorname{diag}(\mathbf{r})$ is a diagonal matrix with $\mathbf{r}$ on its main diagonal.

Notably, the matrix $W^{(\text{r})}$ is row-stochastic, i.e., the sum of every row of $W^{(\text{r})}$ is $1$, which allows for a useful interpretation of $W^{(r)}$ as a transition-probability matrix (in the sense of a Markov chain). 
An important characteristic of the row-stochastic affinity matrix $W^{(r)}$ is its relation to the heat kernel and the Laplace-Beltrami operator on a manifold~\cite{belkin2003laplacian,coifman2006diffusionMaps,hein2005graphs,singer2006graph,trillos2019error}. Specifically, under the ``manifold assumption'' -- where the points $\mathbf{x}_1,\ldots,\mathbf{x}_n$ are uniformly sampled from a smooth low-dimensional Riemannian manifold embedded in the Euclidean space -- $W^{(r)}$ approximates the heat kernel on the manifold, and the matrix $L^{(r)}=I-W^{(r)}$ (known as the \textit{random-walk graph Laplacian}) approximates the Laplace-Beltrami operator. This property of the row-stochastic normalization establishes the relation between $W^{(r)}$ and the intrinsic local geometry of the data, thereby justifying the use of $W^{(r)}$ as an affinity matrix.

The affinity matrix $W^{(s)}$ (obtained by the symmetric normalization) is closely-related to $W^{(r)}$, and in particular, since $W^{(s)} = [\operatorname{diag}(\textbf{r})]^{-1/2} W^{(r)} [\operatorname{diag}(\textbf{r})]^{1/2}$, $W^{(s)}$ shares the spectrum of $W^{(r)}$, and their eigenvectors are related through the vector $\mathbf{r}$. Even though $W^{(s)}$ is not a proper transition-probability matrix, it enjoys symmetry, which is advantageous in various applications. 

We also mention that the row stochastic and symmetric normalizations can be used in conjunction with a kernel with variable width, i.e., when a different value of $\varepsilon$ is taken for each row or column of $K$ (see for instance~\cite{berry2016variable} and references therein). We further discuss one such variant in the example in Section~\ref{subsec:SCRNAseq example}.

The matrices $W^{(r)}$ and $W^{(s)}$ (or equivalently, their corresponding graph Laplacians $I-W^{(r)}$ and $I-W^{(s)}$) are used extensively in data processing and machine learning, notably in non-linear dimensionality reduction (or manifold learning)~\cite{belkin2003laplacian,coifman2006diffusionMaps,nadler2006diffusion,maaten2008visualizing}, community detection and spectral-clustering ~\cite{shi2000normalized,ng2002spectral,sarkar2015role,von2007tutorial,fortunato2010community,shaham2018spectralnet,kluger2003spectral}, image denoising~\cite{buades2005non,pang2017graph,meyer2014perturbation,landa2018steerable,singer2009diffusion}, and in signal processing and supervised-learning over graph domains~\cite{shuman2013emerging,coifman2006diffusionWavelets,hammond2011wavelets,defferrard2016convolutional,bronstein2017geometric}.

\subsection{The doubly-stochastic normalization}
In this work, we focus on the \textit{doubly-stochastic} normalization of $K$:
\begin{flalign}
&\text{(\textbf{Doubly-stochastic normalization})} &   W^{(\text{d})} &\overset{\text{def}}{=} \operatorname{diag}(\textbf{d})  K \operatorname{diag}(\textbf{d}), & \label{eq:W_d def}
\end{flalign}
where $\mathbf{d}=[d_1,\ldots,d_n] > 0$ is a vector chosen such that $W^{(d)}$ is doubly-stochastic, i.e., such that the sum of every row and every column of $W^{(\text{d})}$ is $1$. 
The problem of finding $\mathbf{d}$ such that $W^{(d)}$ has prescribed row and column sums is known as a \textit{matrix scaling} problem, and the entries of $\mathbf{d}$ are often referred to as \textit{scaling factors}. Matrix scaling problems have a rich history, with a long list of applications and generalizations~\cite{bapat1997nonnegative,idel2016review}. Since the scaling factors are defined implicitly, their existence and uniqueness are not obvious, and depend on the zero-pattern of the matrix to be scaled. For the particular zero-pattern of $K$, existence and uniqueness are established by the following lemma.
\begin{lem}[Existence and uniqueness] \label{lem:existence and uniquness}
Suppose that $A \in \mathbb{R}^{n\times n}$, $n>2$, is symmetric with zero main diagonal and strictly positive off-diagonal entries. Then, there exist scaling factors $d_1,\ldots,d_n > 0$ such that $\sum_{j=1}^n d_i A_{i,j} d_j = 1$ for all $i=1,\ldots,n$, and moreover, $\{d_i\}_{i=1}^n$ are unique. 
\end{lem}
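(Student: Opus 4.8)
The plan is to recast the scaling equations as the first-order optimality conditions of a single convex function, and then to obtain existence and uniqueness by showing that this function is coercive and strictly convex. Writing $d_i = e^{y_i}$, the requirement $d_i\sum_{j} A_{i,j} d_j = 1$ becomes the stationarity condition for
\begin{equation}
  F(\mathbf{y}) = \sum_{i,j} A_{i,j}\, e^{y_i + y_j} - 2\sum_{i} y_i, \qquad \mathbf{y}\in\mathbb{R}^n, \nonumber
\end{equation}
since differentiating (using $A_{i,i}=0$ and the symmetry $A_{i,j}=A_{j,i}$) gives $\partial F/\partial y_k = 2\bigl(e^{y_k}\sum_j A_{k,j}e^{y_j} - 1\bigr)$, which vanishes precisely when $d_k\sum_j A_{k,j}d_j = 1$. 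As a nonnegative combination of the convex exponentials $e^{y_i+y_j}$ minus a linear term, $F$ is convex, so it suffices to produce a unique global minimizer.

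First I would prove \emph{existence} by verifying that $F$ is coercive. Along a ray $\mathbf{y} = t\mathbf{v}$ with $\mathbf{v}\neq 0$, any exponential term with $v_i+v_j>0$ drives $F\to+\infty$; the only way $F$ can stay bounded is to have $v_i+v_j\le 0$ for all $i\neq j$ (here the strict positivity of the off-diagonal entries is essential, since it makes every off-diagonal pair contribute), in which case $F(t\mathbf{v}) = -2t\sum_i v_i + O(1)$. A short sorting argument shows that for $n>2$ the constraints $v_i+v_j\le 0$ $(i\neq j)$ together with $\sum_i v_i\ge 0$ force $\mathbf{v}=0$: if the largest entry $\max_i v_i$ were positive, each of the remaining $n-1$ entries would be at most its negative, giving $\sum_i v_i \le (2-n)\max_i v_i < 0$, a contradiction; and if no entry is positive, then $\sum_i v_i\ge 0$ forces all entries to vanish. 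Hence $F$ grows without bound along every nonzero direction, which for a convex function makes its sublevel sets compact; therefore $F$ attains a global minimum at some $\mathbf{y}^\ast$, and $\nabla F(\mathbf{y}^\ast)=0$ yields scaling factors $d_i = e^{y_i^\ast}>0$.

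For \emph{uniqueness} I would show $F$ is strictly convex. Differentiating once more, the Hessian acts as the quadratic form
\begin{equation}
  \mathbf{u}^\top \nabla^2 F(\mathbf{y})\, \mathbf{u} = \sum_{i,j} A_{i,j}\, e^{y_i+y_j}\,(u_i+u_j)^2 \ge 0, \nonumber
\end{equation}
which is the signless-Laplacian form of the weighted complete graph with positive edge weights $A_{i,j}e^{y_i+y_j}$. It vanishes only if $u_i+u_j=0$ for every $i\neq j$; examining any three distinct indices shows this forces $\mathbf{u}=0$ once $n>2$. Thus $\nabla^2 F$ is positive definite everywhere, $F$ is strictly convex, and its minimizer---hence the collection $\{d_i\}$---is unique.

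The main obstacle is the existence (coercivity) step, since convexity alone does not guarantee a minimizer; the combinatorial argument ruling out the bad recession directions is where both the positivity of the off-diagonal entries and the hypothesis $n>2$ do the real work. It is worth noting that this hypothesis is sharp: for $n=2$ the direction $\mathbf{v}=(1,-1)$ leaves $F$ unchanged, reflecting the one-parameter family of solutions satisfying $d_1 d_2 = 1/A_{1,2}$, so both coercivity and strict convexity genuinely fail.
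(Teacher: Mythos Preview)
Your proof is correct, but it takes a genuinely different route from the paper's. The paper's argument is a two-line reduction: it observes that the specific zero pattern of $A$ (zeros only on the diagonal) makes $A$ \emph{fully indecomposable} for $n>2$, since any row/column permutation of $A$ still has exactly one zero per row and per column, precluding any nontrivial zero block; existence and uniqueness then follow by citing Lemma~4.1 in Knight's paper on the Sinkhorn--Knopp algorithm. Your approach, by contrast, is a self-contained variational argument: you realize the scaling equations as the first-order conditions of the strictly convex potential $F(\mathbf{y})=\sum_{i,j}A_{i,j}e^{y_i+y_j}-2\sum_i y_i$, establish coercivity via a recession-direction analysis (which is where you use both the strict positivity of the off-diagonal entries and $n>2$), and read off uniqueness from the signless-Laplacian Hessian. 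What your approach buys is independence from the matrix-scaling literature and an explicit mechanism showing \emph{why} $n>2$ is needed (your $n=2$ counterexample is a nice touch). What the paper's approach buys is brevity and a direct link to the Sinkhorn--Knopp convergence theory used elsewhere in the paper (Remark~\ref{remark:compuational complexity}); full indecomposability is exactly the hypothesis under which Knight proves linear convergence of the iteration, so the paper gets both existence/uniqueness and the algorithmic convergence rate from the same structural observation.
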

The proof can be found in Appendix~\ref{appendix:existence and uniqueness proof}, and is based on the simple zero-pattern of $A$ and on a lemma by Knight~\cite{knight2008sinkhorn}. On the computational side, the scaling factors $\mathbf{d}$ can be obtained by the classical Sinkhorn-Knopp algorithm~\cite{sinkhorn1967concerning} (known also as the RAS algorithm), or by more recent techniques based on optimization (see~\cite{allen2017much} and references therein). We detail a lean variant of the Sinkhorn-Knopp algorithm adapted to symmetric matrices (see~\cite{knight2008sinkhorn}) in Algorithm~\ref{alg:SK sym} below, and briefly discuss its convergence and computational complexity in Remark~\ref{remark:compuational complexity}. 
\begin{algorithm}
\caption{Sinkhorn-Knopp algorithm for symmetric matrices~\cite{knight2008sinkhorn}}\label{alg:SK sym}
\begin{algorithmic}[1]
\Statex{\textbf{Input:} Symmetric nonnegative $n\times n$ matrix $K$, tolerance $\delta>0$}.
\State Initialize: $d_{i}^{(0)} = (\sum_{j=1}^n K_{i,j})^{-1}$, $d_{i}^{(1)} = (\sum_{j=1}^n K_{i,j} d_{j}^{(0)})^{-1}$, $d_{i}^{(2)} = (\sum_{j=1}^n K_{i,j} d_{j}^{(1)})^{-1}$, $\tau = 2$.
\State While $\max_{1\leq i \leq n}\vert d_{i}^{(\tau-2)} / d_{i}^{(\tau)} - 1 \vert > \delta$, do:
\begin{itemize}
    \item $d_{i}^{{(\tau+1)}} = ({\sum_{j=1}^n K_{i,j} d_{j}^{(\tau)}})^{-1}$, for $i=1,\ldots,n$.
    \item Update $\tau \leftarrow \tau + 1$.
\end{itemize}
\State Return $d_i = \sqrt{d_i^{(\tau)} d_i^{(\tau-1)}}$, for $i=1,\ldots,n$.
\end{algorithmic}
\end{algorithm}

By definition, $W^{(d)}$ is a symmetric transition-probability matrix. Hence, it naturally combines the two favorable properties that $W^{(r)}$ and $W^{(s)}$ hold separately. It is worthwhile to point-out that $W^{(d)}$ is in fact the closest symmetric and row-stochastic matrix to $K$ in KL-divergence~\cite{brown1993order,zass2007doubly}, and interestingly, it can also be obtained by iteratively re-applying the symmetric normalization~\eqref{eq:W_s def} indefinitely (see~\cite{zass2005unifying}). 
Another appealing interpretation of the doubly-stochastic normalization is through the lens of optimal transport with entropy regularization~\cite{cuturi2013sinkhorn}, summarized by the following proposition.
\begin{prop} [Optimal transport interpretation]\label{prop:optimal trnasport interpretation}
 $W^{(d)}$ from~\eqref{eq:W_d def} is the unique solution to
\begin{align}
\begin{aligned}
    &\underset{W\in \mathbb{R}^{n\times n}_+}{\text{Minimize}} \quad \sum_{i,j=1}^n \Vert \mathbf{x}_i -\mathbf{x}_j \Vert^2 W_{i,j} + \varepsilon H(W), \\
    &\text{Subject to:} \quad W \mathbf{1} = \mathbf{1}, \quad W^T \mathbf{1} = \mathbf{1}, \quad W_{i,i} = 0, \;\; i=1\ldots,n,
    \end{aligned} \label{eq:optimal trnasport optim}
\end{align}
where $\mathbf{1}$ is a column vector of $n$ ones, and $H(W) = \sum_{i,j=1}^n W_{i,j} \log W_{i,j}$ is the negative entropy.
\end{prop}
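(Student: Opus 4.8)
The plan is to verify the optimization problem $\eqref{eq:optimal trnasport optim}$ by exploiting its convexity and writing down its Karush-Kuhn-Tucker (KKT) conditions, then checking that $W^{(d)}$ from $\eqref{eq:W_d def}$ satisfies them. First I would note that the objective is strictly convex on the relevant domain: the linear term $\sum_{i,j} \Vert \mathbf{x}_i - \mathbf{x}_j\Vert^2 W_{i,j}$ is linear in $W$, while $\varepsilon H(W) = \varepsilon \sum_{i,j} W_{i,j} \log W_{i,j}$ is strictly convex (the negative entropy has positive second derivative $\varepsilon/W_{i,j}$ in each coordinate). Since the feasible set defined by the linear equality constraints $W\mathbf{1} = \mathbf{1}$, $W^T\mathbf{1}=\mathbf{1}$, and $W_{i,i}=0$ intersected with the nonnegative orthant is convex and (for $n>2$) nonempty, strict convexity guarantees that a minimizer is unique, which is why the proposition can assert \emph{the} unique solution. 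I would handle the diagonal constraint $W_{i,i}=0$ by simply eliminating those variables from the problem, effectively optimizing only over the off-diagonal entries, which is consistent with $K$ having zero diagonal.

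Next I would form the Lagrangian with multipliers $\{\alpha_i\}$ for the row-sum constraints and $\{\beta_j\}$ for the column-sum constraints (the nonnegativity constraints will turn out to be inactive automatically, since the stationarity condition forces the off-diagonal entries to be strictly positive). Differentiating with respect to an off-diagonal entry $W_{i,j}$ ($i\neq j$) gives the stationarity condition
\begin{equation}
    \Vert \mathbf{x}_i - \mathbf{x}_j\Vert^2 + \varepsilon(\log W_{i,j} + 1) - \alpha_i - \beta_j = 0.
\end{equation}
Solving for $W_{i,j}$ yields
\begin{equation}
    W_{i,j} = \exp\!\left(\frac{\alpha_i + \beta_j}{\varepsilon} - 1\right)\exp\!\left(-\frac{\Vert \mathbf{x}_i - \mathbf{x}_j\Vert^2}{\varepsilon}\right) = u_i\, K_{i,j}\, v_j,
\end{equation}
where I have set $u_i = \exp(\alpha_i/\varepsilon - 1/2)$ and $v_j = \exp(\beta_j/\varepsilon - 1/2)$ and recognized the Gaussian kernel $K_{i,j}$ from $\eqref{eq:K def}$. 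This already shows that any stationary point has the diagonal-scaling form $\operatorname{diag}(\mathbf{u}) K \operatorname{diag}(\mathbf{v})$. By symmetry of $K$ and of the constraints (both row and column sums prescribed to be $\mathbf{1}$), together with the uniqueness of scaling factors from Lemma~\ref{lem:existence and uniquness}, one concludes $\mathbf{u} = \mathbf{v} = \mathbf{d}$, so the unique stationary point is exactly $W^{(d)} = \operatorname{diag}(\mathbf{d}) K \operatorname{diag}(\mathbf{d})$.

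The remaining step is to confirm that this stationary point is feasible and that the multipliers can indeed be chosen to enforce feasibility: substituting $W_{i,j} = u_i K_{i,j} v_j$ into the constraints $\sum_j W_{i,j} = 1$ and $\sum_i W_{i,j} = 1$ reproduces exactly the matrix-scaling equations whose solution is guaranteed to exist and be unique by Lemma~\ref{lem:existence and uniquness}. Thus a feasible stationary point exists, and by strict convexity it is the unique global minimizer. I anticipate the main subtlety — rather than a genuine obstacle — lies in justifying why the nonnegativity constraints need not be carried as explicit multipliers: the exponential form of the stationarity solution automatically produces strictly positive off-diagonal entries, so these constraints are never active and Slater's condition (strict feasibility of an interior point) holds, validating the KKT approach. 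One should also take care that the diagonal entries are correctly excluded throughout, matching the zero pattern of $K$ and ensuring consistency with the hypotheses of Lemma~\ref{lem:existence and uniquness}.
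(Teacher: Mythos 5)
Your proposal is correct and follows essentially the same route as the paper, which omits the details but states that its proof ``follows very closely with the proof of Lemma 2 in Cuturi's entropy-regularized optimal transport paper, with the additional use of Lemma~\ref{lem:existence and uniquness}'' --- i.e., exactly your Lagrangian/KKT derivation of the scaling form $W_{i,j}=u_iK_{i,j}v_j$ on the off-diagonal entries, combined with Lemma~\ref{lem:existence and uniquness} to handle the zero-diagonal constraint and identify the solution with $W^{(d)}$. The only step worth tightening is the passage from the two-sided scaling $(\mathbf{u},\mathbf{v})$ to the symmetric one: strictly one gets $\mathbf{u}=\lambda\mathbf{v}$ (e.g., because the unique minimizer must be symmetric under $W\mapsto W^T$, forcing $u_iv_j=u_jv_i$), and absorbing $\sqrt{\lambda}$ gives the symmetric scaling vector $\mathbf{d}$ whose uniqueness Lemma~\ref{lem:existence and uniquness} guarantees.
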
 
The proof of Proposition~\ref{prop:optimal trnasport interpretation} follows very closely with the proof of Lemma 2 in~\cite{cuturi2013sinkhorn}, with the additional use of Lemma~\ref{lem:existence and uniquness} (to account for the constraint $W_{i,i} = 0$), and is omitted for the sake of brevity.
In the optimal transport interpretation of the problem~\eqref{eq:optimal trnasport optim}, each point $\mathbf{x}_i$ holds a unit mass that should be distributed between all the other points $\mathbf{x}_j \neq \mathbf{x}_i$, while minimizing the transportation cost between the points (measured by the pair-wise distances $\Vert \mathbf{x}_i -\mathbf{x}_j \Vert^2$). The outcome of this process is constrained so that each point ends up holding a unit mass. In this context, the matrix $W$ describes the distribution of the masses from all points to all other points, and is therefore required to be doubly-stochastic. The negative entropy regularization term $\varepsilon H(W)$ controls the ``fairness'' of the mass allocation, such that each mass is distributed more evenly between the points for large values of $\varepsilon$. 

The optimization problem~\eqref{eq:optimal trnasport optim} can also be interpreted as an optimal graph construction. In this context, the term $\sum_{i,j=1}^n \Vert \mathbf{x}_i -\mathbf{x}_j \Vert^2 W_{i,j}$ can be considered as accounting for the regularity of the data (as a multivariate signal) with respect to the weighted graph represented by $W$, while the negative entropy term $\varepsilon H(W)$ controls the approximate sparseness of $W$. Since the solution to~\eqref{eq:optimal trnasport optim} is a symmetric matrix, $W^{(d)}$ can be thought of as describing the undirected weighted graph that optimizes the ``smoothness'' of the dataset, under the constraints of prescribed entropy (or approximate sparseness), no self-loops, and stochasticity (i.e., so that $W^{(d)}$ is a transition-probability matrix).

In the context of manifold learning, the relation between the doubly-stochastic normalization and the heat kernel (or the Laplace-Beltrami operator) on a Riemannian manifold has been recently established in~\cite{marshall2019manifold}. That is, under the manifold assumption (and under certain conditions) $W^{(d)}$ is expected to approximate the heat kernel on the manifold, and therefore to encode the local geometry of the data much like $W^{(r)}$. The doubly-stochastic normalization was also demonstrated to be useful for spectral clustering in~\cite{beauchemin2015affinity}, where it was shown to achieve the best clustering performance on several datasets. Last, we note that several other constructions of doubly-stochastic affinity matrices have appeared in the literature~\cite{wang2012improving,zass2007doubly}, typically involving a notion of closeness to $K$ other than KL-divergence (e.g. Frobenius norm).

\begin{remark}[Computational complexity of Algorithm~\ref{alg:SK sym}] \label{remark:compuational complexity} 
   It is evident that the computational complexity of each iteration in Algorithm~\ref{alg:SK sym} is dominated by the matrix-vector multiplication $K \mathbf{d}^{(\tau)}$, and is therefore $\mathcal{O}(n^2)$. As for the number of iterations required, in~\cite{knight2008sinkhorn} it was shown that if the matrix to be scaled is \textit{fully indecomposable}, then the scaling factors in the Sinkhorn-knopp algorithm admit a linear convergence whose rate is equal to the squared subdominant eigenvalue of the resulting doubly-stochastic matrix (see Theorem 4.4 in~\cite{knight2008sinkhorn}). In the proof of Lemma~\ref{lem:existence and uniquness} in Appendix~\ref{appendix:existence and uniqueness proof} we show that $K$ from~\eqref{eq:K def} is indeed fully indecomposable, hence the number of iterations in Algorithm~\ref{alg:SK sym} is expected to be $\mathcal{O}(1/\log (|\lambda_2\{W^{(d)}\}|^{-1}))$, where $\lambda_2\{W^{(d)}\}$ is the subdominant eigenvalue of $W^{(d)}$.
\end{remark}

\subsection{Robustness to noise}
When considering real-world datasets, it is desirable to construct affinity matrices that are robust to noise. 
Specifically, suppose that we do not have access to the points $\mathbf{x}_1,\ldots,\mathbf{x}_n$ (which are non-random in our setting), but rather to their noisy observations $\widetilde{\mathbf{x}}_1,\ldots,\widetilde{\mathbf{x}}_n$, given by
\begin{equation}
    \widetilde{\mathbf{x}}_i = \mathbf{x}_i + \eta_{i},  \label{eq:x_tilde def}
\end{equation}
where $\eta_{1},\ldots,\eta_n \in \mathbb{R}^m$ are pairwise independent noise vectors satisfying
\begin{equation}
    \mathbb{E}[\eta_i] = \mathbf{0}, \qquad \mathbb{E}[\eta_i \eta_i^T] = \Sigma_i, \label{eq:noise def}
\end{equation}
for all $i=1,\ldots,n$, where $\mathbf{0}$ is the zero column vector in $\mathbb{R}^m$, and $\Sigma_i$ is the covariance matrix of $\eta_i$. We then define $\widetilde{W}^{(r)}$, $\widetilde{W}^{(s)}$, $\widetilde{W}^{(d)}$, $\widetilde{K}$, and $\{\widetilde{d}_i\}$ analogously to ${W}^{(r)}$, ${W}^{(s)}$, ${W}^{(d)}$, $K$ , and $\{{d}_i\}$, respectively, when replacing $\{\mathbf{x}_i\}$ in~\eqref{eq:K def} with $\{\widetilde{\mathbf{x}}_i\}$.
For the noise model described above, we say that the noise is \textit{homoskedastic} if $\Sigma_1 = \Sigma_2 = \ldots = \Sigma_n$, and \textit{heteroskedastic} otherwise.

The influence of homoskedastic noise on kernel matrices (such as $K$) was investigated in~\cite{el2010information}, and the results therein imply that $\widetilde{W}^{(r)}$ and $\widetilde{W}^{(s)}$ are robust to high-dimensional homoskedastic noise. Specifically, in the high-dimensional setting considered in~\cite{el2010information}, $\widetilde{K}$ converges to a biased version $K$ where all the off-diagonal entries of $\widetilde{K}$ admit the same multiplicative bias. Such bias can therefore be corrected by applying either the row-stochastic or the symmetric normalizations (see~\cite{el2016graph}). However, this is not the case in the more general setting of heteroskedastic noise. 

Heteroskedastic noise is a natural assumption for many real-world applications. For example, heteroskedastic noise arises in certain biological, photon-imaging, and Magnetic Resonance Imaging (MRI) applications~\cite{cao2017multi,salmon2014poisson,hafemeister2019normalization,foi2011noise}, where observations are modeled as samples from random variables whose variances depend on their means, such as in binomial, negative-binomial, multinomial, Poisson, or Rice distributions. In natural image processing, heteroskedastic noise occurs due to the spatial clipping of values in an image~\cite{foi2009clipped}. Additionally, heteroskedastic noise is encountered when the experimental setup varies during the data collection process, such as in spectrophotometry and atmospheric data acquisition~\cite{cochran1977statistically,tamuz2005correcting}. Generally, many modern datasets are inherently heteroskedastic as they are formed by aggregating observations collected at different times and from different sources. Last, we mention that heteroskedastic noise can be considered as a natural relaxation to the popular manifold assumption. In particular, heteroskedastic noise arises whenever data points are sampled from the high-dimensional surroundings of a low-dimensional manifold embedded in the ambient space, where the size of the sampling neighborhood (in the ambient space) around the manifold is determined locally by the manifold itself. See Figure~\ref{fig:uniform noise varying radius 2d} and the corresponding example in Section~\ref{subsubsec:circle example 2}.

\subsection{Contributions}
Our main contribution is to establish the robustness of the doubly-stochastic normalization of the Gaussian kernel (with zero main diagonal) to high-dimensional heteroskedastic noise. 
In particular, we prove that in the high-dimensional setting where the number of points $n$ is fixed, the dimension $m$ is increasing, and the noise does not concentrate too much in specific direction in space, $\widetilde{W}^{(d)}$ converges to $W^{(d)}$ with rate $m^{-1/2}$. See Theorem~\ref{thm:Main theorem} in Section~\ref{sec:main result}. An intuitive justification of the robustness of the doubly-stochastic normalization to heteroskedastic noise, and also why zeroing-out the main diagonal of $K$ is important, can be found in Section~\ref{sec:main result}, equations~\eqref{eq:noisy distances bias}--\eqref{eq:K_tilde diagonal bias}.
The proof of Theorem~\ref{thm:Main theorem}, see Appendix~\ref{sec:proof of main theorem}, relies on a perturbation analysis of the doubly-stochastic normalization. 

We demonstrate the robustness of $\widetilde{W}^{(d)}$ to heteroskedastic noise in several examples (see Section~\ref{sec:numerical example}). In Section~\ref{subsubsec:circle example 1} we corroborate Theorem~\ref{thm:Main theorem} numerically, and exemplify that $W^{(r)}$ and $W^{(s)}$ suffer from inherent point-wise bias due to heteroskedastic noise (see Figures~\ref{fig:err vs m}--\ref{fig:W_tilde vs W heat map}). In Section~\ref{subsubsec:circle example 2} we demonstrate the robustness of the leading eigenvectors of $W^{(d)}$ to heteroskedastic noise whose characteristics depend locally on the manifold of the clean data (see Figures~\ref{fig:uniform noise varying radius 2d}--\ref{fig:embedding 2d}). In Section~\ref{subsec:SCRNAseq example} we apply the doubly-stochastic normalization for both simulated and experimental single-cell RNA sequence data with inherent heteroskedasticity, showcasing its ability to accurately recover the underlying structure of the data despite the noise (see Figures~\ref{fig:Noisy affinity matrices small epsilon},\ref{fig:t-SNE single-cell RNA seq},\ref{fig:scRNA-seq real data affinities},\ref{fig:scRNA-seq real data error in nearest neighbors}).

\section{Main result} \label{sec:main result}
We now place ourselves in the high-dimensional setting where the dimension $m$ is increasing while the number of points $n$ and the kernel parameter $\varepsilon$ are fixed. 
Formally, let $\mathbf{x}_i^{(m)}$, $\widetilde{\mathbf{x}}_i^{(m)}$, $\eta_i^{(m)}$, $\Sigma_{i}^{(m)}$, $K^{(m)}$, $\widetilde{K}^{(m)}$, ${W}^{(d),(m)}$, and $\widetilde{W}^{(d),(m)}$ be the same as $\mathbf{x}_i$, $\widetilde{\mathbf{x}}_i$, $\eta_i$, $\Sigma_{i}$, $K$, $\widetilde{K}$, ${W}^{(d)}$, and $\widetilde{W}^{(d)}$, respectively, and consider a sequence of each of the former quantities (with superscript $(m)$) in $m=M, M+1, \ldots, \infty$, where $M$ is a positive integer. Our main result is as follows, where $\mathcal{O}_p$ stands for \textit{order in probability}~\cite{mann1943stochastic} (or \textit{stochastic boundedness}). 
\begin{thm}[Convergence of $\widetilde{W}^{(d),(m)}$ to $W^{(d),(m)}$] \label{thm:Main theorem}
\sloppy Suppose that $\Vert \mathbf{x}_i^{(m)} \Vert \leq 1$ and $\Vert \Sigma_{i}^{(m)} \Vert_2 \leq {C_\eta}{m}^{-1}$ for all $i=1,\ldots,n$ and $m\geq M$, where $C_\eta$ is a universal constant (independent of $m$). Then, 
\begin{equation}
    \Vert \widetilde{W}^{(d),(m)} - W^{(d),(m)} \Vert_F = \mathcal{O}_p (m^{-1/2}). \label{eq: main result}
\end{equation}
\end{thm}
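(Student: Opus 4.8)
The plan is to push the analysis down to the kernel matrix, exploit a scaling invariance of the doubly-stochastic normalization, and then invoke a perturbation bound for the Sinkhorn scaling with constants uniform in $m$. Throughout write $\nu_i = \operatorname{tr}(\Sigma_i^{(m)})$ and note that $\Vert\Sigma_i^{(m)}\Vert_2\le C_\eta m^{-1}$ forces $\nu_i\le C_\eta$, so the expected squared noise magnitudes stay bounded as $m\to\infty$. Expanding the noisy squared distances gives, for $i\neq j$,
\begin{equation*}
\Vert\widetilde{\mathbf x}_i^{(m)} - \widetilde{\mathbf x}_j^{(m)}\Vert^2 = \Vert\mathbf x_i^{(m)} - \mathbf x_j^{(m)}\Vert^2 + \nu_i + \nu_j + \xi_{i,j},
\end{equation*}
with $\xi_{i,j} = (\Vert\eta_i\Vert^2-\nu_i)+(\Vert\eta_j\Vert^2-\nu_j)-2\eta_i^T\eta_j+2(\mathbf x_i^{(m)}-\mathbf x_j^{(m)})^T(\eta_i-\eta_j)$. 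Each summand is centered with variance $\mathcal O(m^{-1})$: the cross term has variance $\operatorname{tr}(\Sigma_i\Sigma_j)\le\Vert\Sigma_i\Vert_2\nu_j$, the linear term has variance $4(\mathbf x_i-\mathbf x_j)^T\Sigma_i(\mathbf x_i-\mathbf x_j)\le 16 C_\eta m^{-1}$, and (under the finite fourth-moment control making $\operatorname{Var}(\Vert\eta_i\Vert^2)=2\operatorname{tr}(\Sigma_i^2)\le 2\Vert\Sigma_i\Vert_2\nu_i = \mathcal O(m^{-1})$ in the Gaussian case) so does the quadratic term; hence $\xi_{i,j}=\mathcal O_p(m^{-1/2})$, uniformly over the finitely many pairs. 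Writing $g_i=e^{-\nu_i/\varepsilon}\in[e^{-C_\eta/\varepsilon},1]$ and $D_g=\operatorname{diag}(g_1,\ldots,g_n)$, a first-order expansion of the exponential yields $\widetilde K^{(m)}_{i,j}=g_ig_jK^{(m)}_{i,j}(1+\mathcal O_p(m^{-1/2}))$ for $i\neq j$, while $\widetilde K^{(m)}_{i,i}=K^{(m)}_{i,i}=0$. Crucially, zeroing the diagonal makes the bias purely multiplicative and diagonal on \emph{every} entry, so $\widetilde K^{(m)} = D_g K^{(m)} D_g + E^{(m)}$ with $\Vert E^{(m)}\Vert_F = \mathcal O_p(m^{-1/2})$, using that $K^{(m)}_{i,j}\le 1$ and $g_i\le 1$.

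Next I use scaling invariance. If $\mathbf d$ scales $K^{(m)}$ to the doubly-stochastic $W^{(d),(m)}$, then $\operatorname{diag}(\mathbf d/\mathbf g)(D_gK^{(m)}D_g)\operatorname{diag}(\mathbf d/\mathbf g)=\operatorname{diag}(\mathbf d)K^{(m)}\operatorname{diag}(\mathbf d)=W^{(d),(m)}$ is doubly-stochastic. Since $D_gK^{(m)}D_g$ has the zero-pattern required by Lemma~\ref{lem:existence and uniquness}, uniqueness forces the doubly-stochastic normalization of $D_gK^{(m)}D_g$ to equal $W^{(d),(m)}$ exactly, with scaling vector $\mathbf d':=\mathbf d/\mathbf g$ (itself uniformly bounded, since $\mathbf g$ is bounded away from $0$ and $1$). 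It therefore suffices to compare the scaling of $\widetilde K^{(m)}=D_gK^{(m)}D_g+E^{(m)}$ against that of its unperturbed counterpart $D_gK^{(m)}D_g$.

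The crux is a perturbation bound for the scaling, with constants uniform in $m$. The scaling vector of a matrix $A$ solves $F(\mathbf d,A):=\operatorname{diag}(\mathbf d)A\mathbf d-\mathbf 1=0$, whose Jacobian in $\mathbf d$ at a doubly-stochastic scaling equals $(I+W)\operatorname{diag}(\mathbf d)^{-1}$, where $W=\operatorname{diag}(\mathbf d)A\operatorname{diag}(\mathbf d)$. I would establish uniform invertibility as follows. Because $\Vert\mathbf x_i^{(m)}\Vert\le 1$ we have $K^{(m)}_{i,j}\in[e^{-4/\varepsilon},1]$, and a standard two-sided estimate on the scaling equations bounds the factors $0<c_1(n,\varepsilon)\le d_i\le c_2(n,\varepsilon)<\infty$ uniformly in $m$; hence every off-diagonal entry of $W^{(d),(m)}$ is bounded below by some $\beta=\beta(n,\varepsilon)>0$ uniformly in $m$. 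Decomposing $W^{(d),(m)}=\beta(\mathbf 1\mathbf 1^T-I)+(1-(n-1)\beta)P$ with $P$ symmetric doubly-stochastic and applying Weyl's inequality gives $\lambda_{\min}(W^{(d),(m)})\ge -1+(n-2)\beta$, so for $n>2$ we obtain $I+W^{(d),(m)}\succeq(n-2)\beta I$ and $\Vert(I+W^{(d),(m)})^{-1}\Vert_2\le[(n-2)\beta]^{-1}$ uniformly. With the Jacobian uniformly invertible, the bounded entries of $D_gK^{(m)}D_g$ controlling the second derivatives of $F$, and $\Vert E^{(m)}\Vert_F=\mathcal O_p(m^{-1/2})\to 0$, a quantitative implicit function theorem (equivalently a Newton/contraction argument) produces scaling factors $\tilde{\mathbf d}$ of $\widetilde K^{(m)}$ with $\Vert\tilde{\mathbf d}-\mathbf d'\Vert=\mathcal O_p(m^{-1/2})$, all constants independent of $m$.

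Finally, writing $\widetilde W^{(d),(m)}-W^{(d),(m)}=\operatorname{diag}(\tilde{\mathbf d})\widetilde K^{(m)}\operatorname{diag}(\tilde{\mathbf d})-\operatorname{diag}(\mathbf d')(D_gK^{(m)}D_g)\operatorname{diag}(\mathbf d')$ and substituting $\widetilde K^{(m)}=D_gK^{(m)}D_g+E^{(m)}$ and $\tilde{\mathbf d}=\mathbf d'+\Delta$ with $\Vert\Delta\Vert=\mathcal O_p(m^{-1/2})$, every term of the resulting expansion carries a factor of either $E^{(m)}$ or $\Delta$ times uniformly bounded matrices and vectors, so $\Vert\widetilde W^{(d),(m)}-W^{(d),(m)}\Vert_F=\mathcal O_p(m^{-1/2})$, which is \eqref{eq: main result}. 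I anticipate the third step to be the principal difficulty: not the invertibility of $I+W^{(d),(m)}$ itself (which fails only in the bipartite/periodic case excluded by the positive off-diagonal for $n>2$), but securing all perturbation constants uniformly in the growing dimension $m$ — which is precisely why the uniform off-diagonal lower bound $\beta$ and the Weyl-based spectral-gap estimate are essential. A secondary care point is the moment control in the first step needed to guarantee $\operatorname{Var}(\Vert\eta_i\Vert^2)=\mathcal O(m^{-1})$.
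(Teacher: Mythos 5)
Your overall architecture is essentially the paper's: factor the noisy kernel as a symmetric diagonal bias of the clean kernel plus a small perturbation, use uniqueness of the scaling (Lemma~\ref{lem:existence and uniquness}) to identify the comparison target, and control the scaling error through a uniformly invertible Jacobian --- your $(I+W^{(d)})\operatorname{diag}(\mathbf{d})^{-1}$ is exactly the paper's $G=I_n+[\operatorname{diag}(\mathbf{u})]^2H$ of Lemma~\ref{lem:G is invertible} after the similarity transform. Your Weyl-based gap estimate $\lambda_{\min}(W^{(d)})\geq -1+(n-2)\beta$ is a nice quantitative alternative to the paper's route (primitivity of $W^{(d)}$ plus a compactness argument), although the ``standard two-sided estimate'' you invoke for the uniform bounds $c_1\leq d_i\leq c_2$ is asserted rather than proved; the paper obtains the analogous bounds in Proposition~\ref{prop:H_ij and u_i bounds} by compactness, and you would need to supply that (or an explicit estimate) to make $\beta$ genuinely uniform in $m$.

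The genuine gap is your first step, and it is not the ``secondary care point'' you call it. You need $\Vert\eta_i\Vert^2-\operatorname{tr}(\Sigma_i)=\mathcal{O}_p(m^{-1/2})$, i.e.\ concentration of the squared noise norm, but the theorem assumes only $\mathbb{E}[\eta_i]=\mathbf{0}$ and $\mathbb{E}[\eta_i\eta_i^T]=\Sigma_i$ with $\Vert\Sigma_i\Vert_2\leq C_\eta m^{-1}$; no fourth moments are specified, and your identity $\operatorname{Var}(\Vert\eta_i\Vert^2)=2\operatorname{tr}(\Sigma_i^2)$ is a Gaussian fact. Under the stated hypotheses the step can fail outright: take $\eta_i=B_i\,g_i/\sqrt{p}$ with $B_i\sim\operatorname{Bernoulli}(p)$, $g_i\sim\mathcal{N}(\mathbf{0},I_m/m)$ independent, and $p=1/2$; then $\Sigma_i=I_m/m$ satisfies the hypotheses, yet $\Vert\eta_i\Vert^2$ is close to $0$ or close to $2$ with probability $1/2$ each, so with your deterministic $D_g=\operatorname{diag}(e^{-\nu_i/\varepsilon})$ the remainder $\widetilde{K}-D_gKD_g$ has entries of order one and your $\Vert E^{(m)}\Vert_F=\mathcal{O}_p(m^{-1/2})$ is false. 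The paper sidesteps this precisely where your proof stumbles: it factors out the \emph{full random} diagonal, $\widetilde{K}_{i,j}=e^{-\Vert\widetilde{\mathbf{x}}_i\Vert^2/\varepsilon}\,\widetilde{H}_{i,j}\,e^{-\Vert\widetilde{\mathbf{x}}_j\Vert^2/\varepsilon}$ with $\widetilde{H}_{i,j}=e^{2\langle\widetilde{\mathbf{x}}_i,\widetilde{\mathbf{x}}_j\rangle/\varepsilon}$, so the problematic $\Vert\eta_i\Vert^2$ terms are absorbed into the scaling factors by the very scaling invariance you already use, and never need to concentrate. What must then vanish, $\widetilde{H}-H$, involves only the cross terms $\langle\mathbf{x}_i,\eta_j\rangle$, $\langle\eta_i,\mathbf{x}_j\rangle$, $\langle\eta_i,\eta_j\rangle$ for $i\neq j$, whose variances ($\mathbf{x}_i^T\Sigma_j\mathbf{x}_i$, $\mathbf{x}_j^T\Sigma_i\mathbf{x}_j$, and $\operatorname{tr}(\Sigma_i\Sigma_j)$ by pairwise independence) are controlled by second moments alone, as in Lemma~\ref{lem:H_tilde error}. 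Re-centering your diagonal bias at the random quantity $e^{-\Vert\widetilde{\mathbf{x}}_i\Vert^2/\varepsilon}$ rather than at $e^{-\operatorname{tr}(\Sigma_i)/\varepsilon}$ repairs your argument and, in effect, turns it into the paper's proof.
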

In other words, under the conditions in Theorem~\ref{thm:Main theorem}, it follows that for any probability $p>0$ there exist a constant $C^{'}$ and an integer $M^{'}$ (both of which may depend on $n$, $p$, $\varepsilon$, and $C_\eta$) such that for all $m\geq M^{'}$ we have $\operatorname{Pr}\{\Vert \widetilde{W}^{(d),(m)} - W^{(d),(m)} \Vert_F > C^{'} m^{-1/2}\} \leq p$. The proof of Theorem~\ref{thm:Main theorem} is detailed in Appendix~\ref{sec:proof of main theorem}. For simplicity of the presentation, we omit the superscript ${(m)}$ from all quantities in the rest of this section, as it should be clear that all quantities associated with Theorem~\ref{thm:Main theorem} are sequences in the dimension $m$ (where $n$ and $\varepsilon$ are fixed).

We now provide some remarks on the conditions in Theorem~\ref{thm:Main theorem}. 
Evidently, the constant $1$ in the condition $\Vert \mathbf{x}_i \Vert \leq 1$ is arbitrary and can be replaced with any other constant (since $\widetilde{\mathbf{x}}_i$ can always be normalized appropriately). Additionally, note that even though the quantities $\Vert \Sigma_{i} \Vert_2$ are required to decrease with $m$, the expected noise magnitudes $\mathbb{E}\Vert \eta_i \Vert^2$ (which are equal to $\operatorname{Tr}\{\Sigma_i\}$) can remain constant, and can possibly be large compared to the magnitudes of the clean data points $\Vert \mathbf{x}_i \Vert^2$. For example, if we have $ \Sigma_{i} = m^{-1} I_m$ for all $i$, where $I_m$ is the $m\times m$ identity matrix, then it follows that $\mathbb{E}\Vert \eta_i \Vert^2 = \operatorname{Tr}\{\Sigma_i\} = 1$, asserting that the magnitude of the noise is greater or equal to that of the clean data points (under the condition $\Vert \mathbf{x}_i \Vert \leq 1$). In this regime of non-vanishing high-dimensional noise, the condition $\Vert \Sigma_{i} \Vert_2 \leq {C_\eta} m^{-1}$ guarantees that the noise spreads-out in Euclidean space, and does not concentrate too much in any particular direction (observe that $\Vert \Sigma_{i} \Vert_2$ is the largest singular value of $\Sigma_{i}$, and is therefore the variance of the noise in the direction where it is largest). Hence, the condition $\Vert \Sigma_i \Vert_2 \leq C_\eta m^{-1}$ is primarily a convenience for considering noise that has bounded magnitude regardless of the ambient dimension (since $\mathbb{E}\Vert \eta_i \Vert_2^2 \leq C_\eta$), and whose variance in any particular direction is not too large. In many situations, the data can be normalized appropriately to satisfy this condition, see Remark~\ref{remark:high-dimensional noise model} below and the discussion in Section~\ref{subsubsec:scRNA-seq model asymptotic discussion}.
Clearly, the setup of Theorem~\ref{thm:Main theorem} accommodates for heteroskedastic noise, and importantly, the ratios between the noise magnitudes $\mathbb{E}\Vert \eta_i \Vert^2$ for different data points can be arbitrary. 

The main reason behind the robustness of the doubly-stochastic normalization to high-dimensional heteroskedastic noise, is that it is invariant to the type of bias introduced by heteroskedastic noise. Specifically, our analysis in the proof of Theorem~\ref{thm:Main theorem} (see Appendix~\ref{sec:proof of main theorem}) shows that for $i\neq j$,
\begin{equation}
    \Vert \widetilde{\mathbf{x}}_i - \widetilde{\mathbf{x}}_j \Vert^2 \overset{p}{\longrightarrow} \mathbb{E} \Vert \widetilde{\mathbf{x}}_i - \widetilde{\mathbf{x}}_j \Vert^2 = \mathbb{E}\Vert \eta_i \Vert^2 + \Vert {x}_i - {x}_j \Vert^2 + \mathbb{E}\Vert \eta_j \Vert^2, \label{eq:noisy distances bias}
\end{equation}
where $\overset{p}{\longrightarrow}$ stands for convergence in probability, and correspondingly,
\begin{equation}
    \widetilde{K}_{i,j} \overset{p}{\longrightarrow} \operatorname{exp}( - \mathbb{E}{\Vert {\eta}_i\Vert^2}/{\varepsilon}) \cdot K_{i,j} \cdot \operatorname{exp}( - \mathbb{E}{\Vert {\eta}_j \Vert^2}/{\varepsilon}), \label{eq:K_tilde diagonal bias}
\end{equation}
for all $i,j$ (since $\widetilde{K}_{i,i} = K_{i,i} = 0)$.
Crucially, $\widetilde{K}$ in~\eqref{eq:K_tilde diagonal bias} is biased by symmetric diagonal scaling, which is precisely the type of bias corrected automatically by the doubly-stochastic normalization~\eqref{eq:W_d def}.

Equations~\eqref{eq:noisy distances bias} and~\eqref{eq:K_tilde diagonal bias} also highlight why zeroing-out the main diagonal of the Gaussian kernel (see Eq.~\eqref{eq:K def}) is important. Without it, the entries on the main diagonal of $\widetilde{K}$ would be $1$, while the off-diagonal entries of $\widetilde{K}$ would be small due to the bias in the noisy pairwise distances~\eqref{eq:noisy distances bias}. Thus, $\widetilde{K}$ would be close to the identity matrix, which would render any normalization (row-stochastic, symmetric, or doubly-stochastic) ineffective. 

\begin{remark} \label{remark:high-dimensional noise model}
   Consider an alternative setting for high-dimensionality where the noise is only required to have bounded variance in each direction, i.e., $\Vert \Sigma_i \Vert_2 \leq C_\eta$ for some universal constant $C_\eta$. This assumption holds, for instance, in the standard model where $\eta_{i}$ has bounded variance in each coordinate and is uncorrelated between different coordinates. In addition, as the dimension $m$ increases, suppose that the newly-added clean data coordinates are determined by a latent variable that is sampled from some underlying distribution. Specifically, suppose that each clean observation $\mathbf{x}_i$ is given by
   \begin{equation}
        \mathbf{x}_{i} = [F_i(y_1),\ldots,F_i(y_m)]^T, 
   \end{equation}
   where $F_i$ is a bounded function, and $y_1,\ldots,y_m$ are i.i.d samples from some latent ``coordinate'' random variable $Y$ (which can be multivariate or reside in a non-Euclidean space). In this case, one has
   \begin{align}
       &\Vert \mathbf{x}_i \Vert^2 = \sum_{k=1}^m (F_i(y_k))^2 = m \left( \mathbb{E}_{y\sim Y} (F_i(y))^2 + \mathcal{O}_p(m^{-1/2}) \right), \\
       &\Vert \mathbf{x}_i - \mathbf{x}_j \Vert^2 =  \sum_{k=1}^m (F_i(y_k) - F_j(y_k) )^2 = m \left( \mathbb{E}_{y\sim Y} |F_i(y) - F_j(y) |^2 + \mathcal{O}_p(m^{-1/2}) \right),
   \end{align}
   which is due to Hoeffding's inequality~\cite{hoeffding1994probability} (for sums of independent and bounded random variables). 
   Evidently, a natural distance between $\mathbf{x}_i$ and $\mathbf{x}_j$ in this setting is $\mathbb{E}_{y\sim Y} |F_i(y) - F_j(y) |^2$ as it does not depend on the ambient dimension $m$ and allows for a constant kernel parameter $\varepsilon$ to be used for all $m$.
   This suggests that the noisy observations $\widetilde{\mathbf{x}}_i$ should be normalized by $\sqrt{m}$, which places us in the setting of Theorem~\ref{thm:Main theorem} since $\Vert \mathbf{x}_i/\sqrt{m} \Vert^2 = \mathbb{E}_{y\sim Y} (F_i(y))^2 + \mathcal{O}_p(m^{-1/2}) = \mathcal{O}_p(1)$, and $\Vert \Sigma_i/{m} \Vert_2 \leq C_\eta {m}^{-1}$.
\end{remark}

\section{Examples} \label{sec:numerical example}
\subsection{Example 1: The unit circle embedded in high-dimensional space}
In our first example, we sampled $n=10^3$ points uniformly from the unit circle in $\mathbb{R}^2$, and embedded them in $\mathbb{R}^m$, for $m\in [10,10^4]$, using randomly-generated orthogonal transformations. In more details, we first sampled angles $\theta_1,\ldots,\theta_n$ independently and uniformly from $[0,2\pi]$. Then, for each embedding dimension $m$, we generated a random orthogonal matrix $R_m\in \mathbb{R}^{m \times 2}$ (i.e., such that $R_m^T R_m = I_m$), and computed the data points $\{\mathbf{x}_i\}$ as
\begin{equation}
\mathbf{x}_i =  R_m \cdot 
\begin{bmatrix}
\cos(\theta_i) \\
\sin(\theta_i)
\end{bmatrix},
\qquad i = 1, \ldots, n. \label{eq:x_i circle def}
\end{equation}
Note that as a result, the magnitude of all points is constant, with $\Vert \mathbf{x}_i \Vert = 1$ for all $1\leq i \leq n$ and embedding dimension $m$.

\subsubsection{Gaussian noise with arbitrary variances} \label{subsubsec:circle example 1}
We begin by demonstrating Theorem~\ref{thm:Main theorem} numerically. Towards that end, we created the noise as follows. For every embedding dimension $m$, we set $\Sigma_i = \operatorname{diag}([\sigma^2_{i,1},\ldots,\sigma^2_{i,m}])$ (so that the noise is uncorrelated between coordinates), and generated the noise standard-deviations $\sigma_{i,j}$ according to
\begin{equation}
    \sigma_{i,j} = \sqrt{\frac{\alpha_i \beta_j}{m}}, \label{eq:noise std sim}
\end{equation}
where $\{\alpha_i\}_{i=1}^n$, $\{\beta_j\}_{j=1}^m$ were sampled (independently) from the uniform distribution over $[0.05,0.5]$. Therefore, the noise magnitudes $\mathbb{E}\Vert \eta_{i}\Vert ^2$ satisfy
\begin{equation}
    \frac{1}{400} \leq \mathbb{E} \Vert \eta_{i}\Vert ^2 \leq \frac{1}{4}, \label{eq:noise variance sim bounds}
\end{equation}
for all $1\leq i \leq n$, and can take any values in that range. Importantly, the noise magnitudes can vary substantially between data points, which is key in our setting.
Then, $\{\eta_{i}[j]\}_{i,j}$ were sampled (independently) according to
\begin{equation}
    \eta_{i}[j] \sim \mathcal{N} (0,\sigma_{i,j}^2), \qquad i=1,\ldots,n, \quad j = 1,\ldots,m, \label{eq:noise sim sampling}
\end{equation}
where $\eta_{i}[j]$ stands for the $j$'th entry of $\eta_i$.
Once we generated the noisy data points $\widetilde{\mathbf{x}}_1,\ldots,\widetilde{\mathbf{x}}_n$ according to~\eqref{eq:x_tilde def}, we formed the clean and noisy kernel matrices $K$ and $\widetilde{K}$ with $\varepsilon=0.1$, and computed $W^{(d)}$, $\widetilde{W}^{(d)}$ using Algorithm~\ref{alg:SK sym} with $\delta=10^{-12}$. Last, we also evaluated $W^{(r)}$, $W^{(s)}$ and $\widetilde{W}^{(r)}$, $\widetilde{W}^{(s)}$ using $K$ and $\widetilde{K}$, respectively, according to~\eqref{eq:W_r def} and~\eqref{eq:W_s def}.

The behavior of the errors $\Vert \widetilde{W}^{(d)} - W^{(d)} \Vert_F^2$, $\Vert \widetilde{W}^{(r)} - W^{(r)} \Vert_F^2$, $\Vert \widetilde{W}^{(s)} - W^{(s)} \Vert_F^2$ as a function of $m$ can be seen in Figure~\ref{fig:err vs m}. It is evident that for $m>100$ the error for the doubly-stochastic normalization is substantially smaller than that for the row-stochastic normalization or for the symmetric normalization. Additionally, the error for the doubly-stochastic normalization decreases linearly in logarithmic scale, while the errors for the row-stochastic and the symmetric normalizations reach saturation and never fall below a certain value. In this experiment, the slope of $\log (\Vert \widetilde{W}^{(d)} - W^{(d)} \Vert_F^2)$ versus $\log m$ (between $m=10^2$ and $m=10^4$) was $-0.9996$, matching the slope suggested by the upper bound in Theorem~\ref{thm:Main theorem} (which implies a slope of $-1$ for the squared Frobenius norm).

\begin{figure} 
  \centering
  	{
    \includegraphics[width=0.5\textwidth]{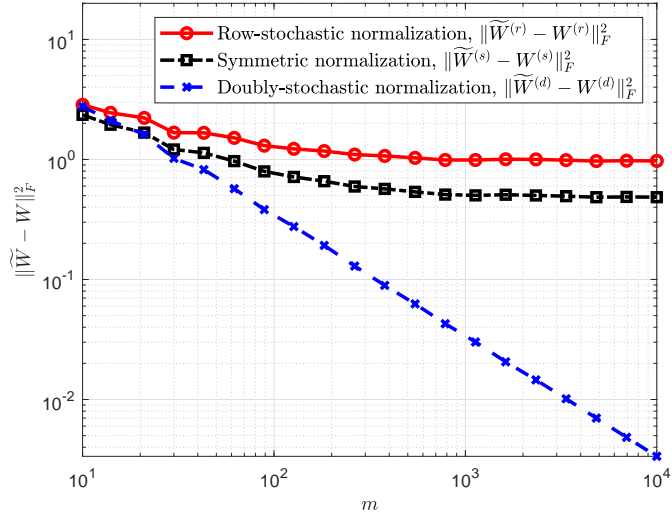} 
    }
    \caption
    {Squared Frobenius loss (averaged over $10$ trials) between clean and noisy affinity matrices from different normalizations, versus the dimension $m$. The dataset is the unit circle embedded in different dimensions (see~\eqref{eq:x_i circle def}), with $n=10^3$ and heteroskedastic noise simulated according to~\eqref{eq:noise std sim}--\eqref{eq:noise sim sampling}} \label{fig:err vs m}
    \end{figure} 
    
In Figure~\ref{fig:W_tilde vs W} we depict the noisy affinities $\widetilde{W}^{(d)}_{i,j}$, $\widetilde{W}^{(r)}_{i,j}$, $\widetilde{W}^{(s)}_{i,j}$ versus their corresponding clean affinities $W^{(d)}_{i,j}$, $W^{(r)}_{i,j}$, $W^{(s)}_{i,j}$, for $m=10^4$. It can be observed that the noisy affinities from the doubly-stochastic normalization concentrate near their corresponding clean affinities, while the noisy affinities from the row-stochastic and symmetric normalizations deviate substantially from their clean counterparts, particularly for larger affinity values.

\begin{figure} 
  \centering
  	{
  	\subfloat[Doubly-stochastic normalization~\eqref{eq:W_d def}]  
  	{
    \includegraphics[width=0.33\textwidth]{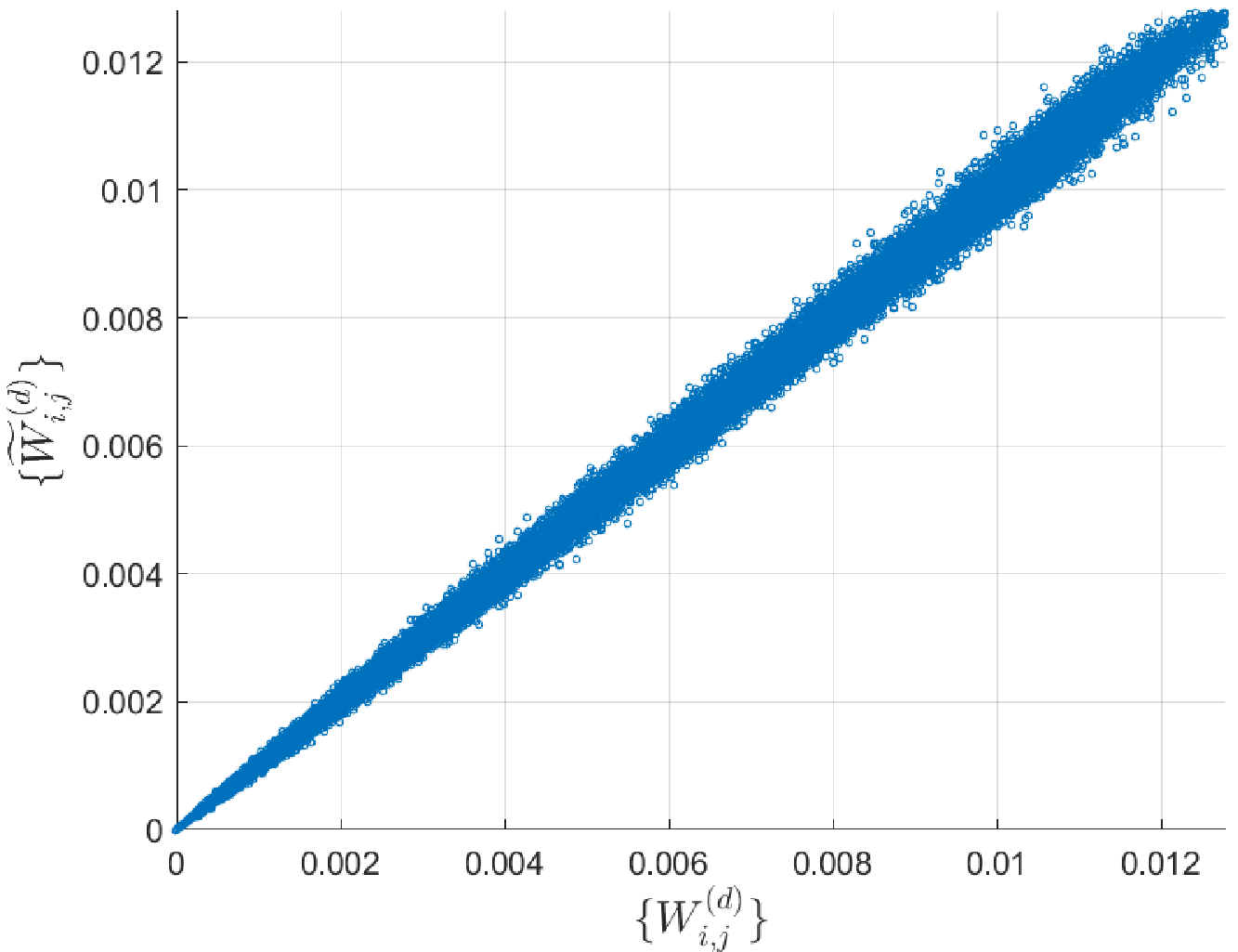} 
    }
    \subfloat[Row-stochastic normalization~\eqref{eq:W_r def}]  
  	{
    \includegraphics[width=0.33\textwidth]{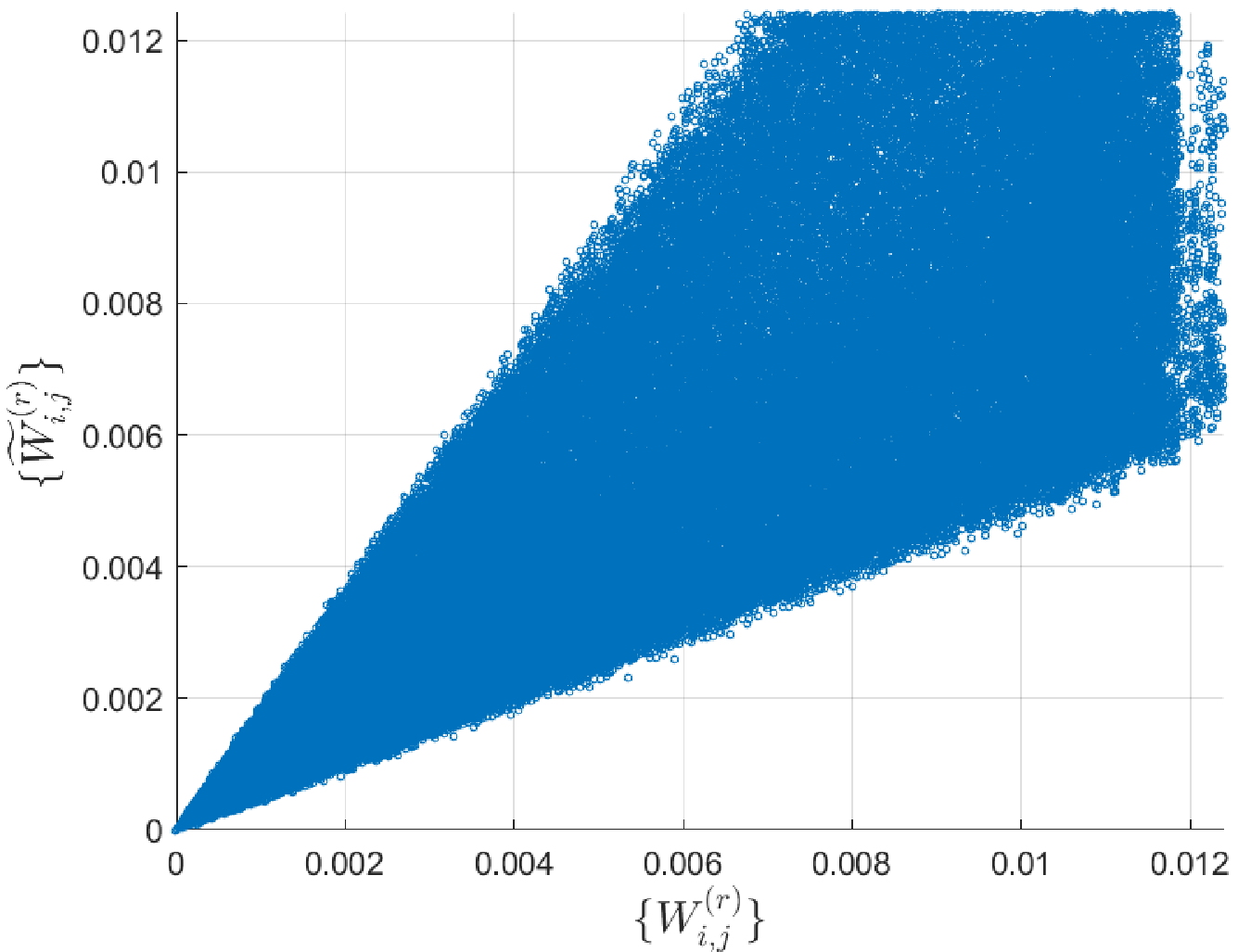}
    }
    \subfloat[Symmetric normalization~\eqref{eq:W_s def}]  
  	{
    \includegraphics[width=0.33\textwidth]{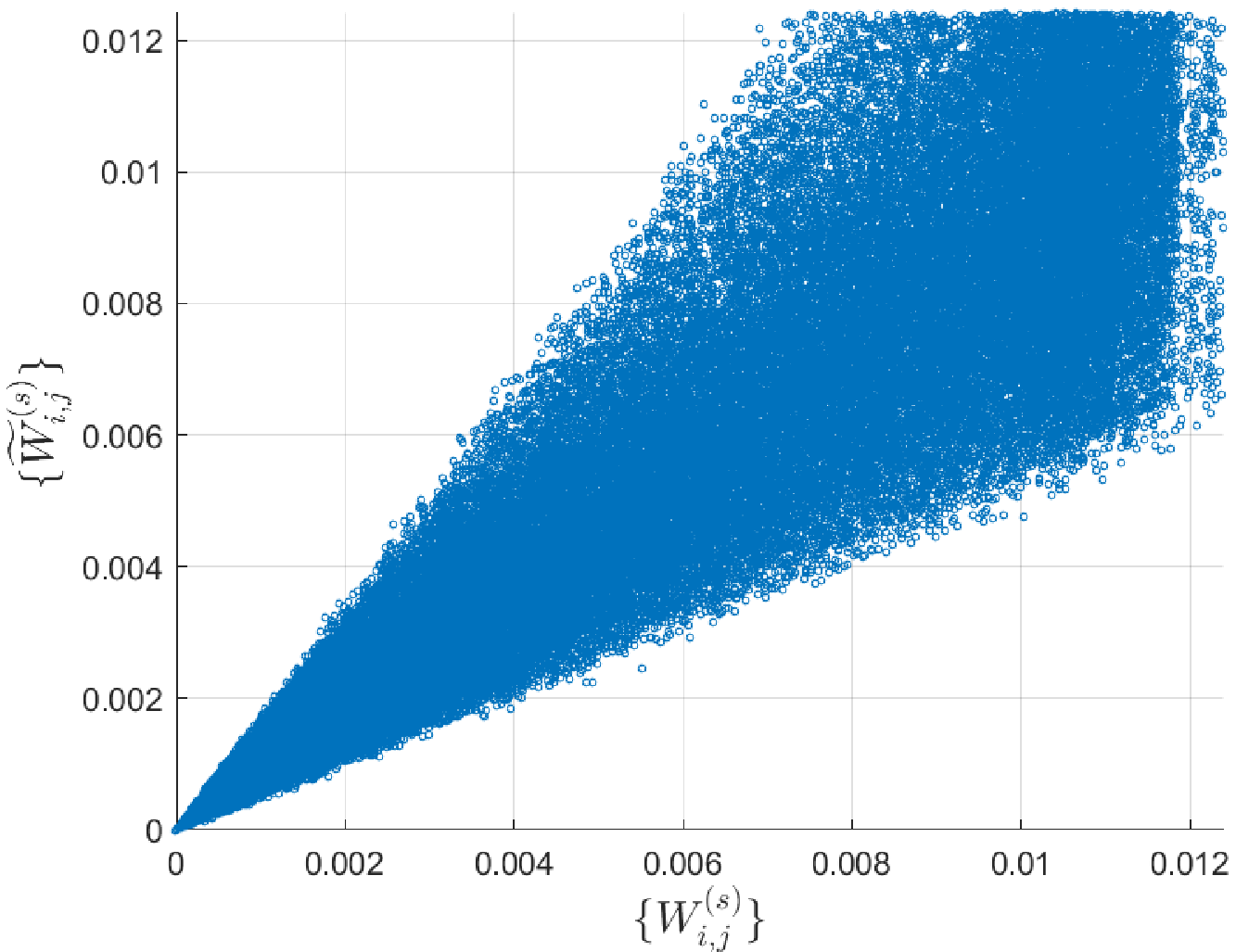}
    }
    }
    \caption
    {Entries of the noisy affinity matrices (y-axis) versus the corresponding entries in the clean affinity matrices (x-axis), using different normalizations. The dataset is the unit circle (see~\eqref{eq:x_i circle def}), with $n=10^3$, $m=10^4$, and heteroskedastic noise simulated according to~\eqref{eq:noise std sim}--\eqref{eq:noise sim sampling}.} \label{fig:W_tilde vs W}
    \end{figure} 

Last, in Figure~\ref{fig:W_tilde vs W heat map} we visually demonstrate the first row of the clean and noisy affinity matrices $W^{(d)}$, $W^{(r)}$, $W^{(s)}$ and $\widetilde{W}^{(d)}$, $\widetilde{W}^{(r)}$, $\widetilde{W}^{(s)}$, using $m=10^4$. Note that we only display about a quarter of all the entries, since all the other entries are vanishingly small.
It can be seen that the clean row-stochastic, clean symmetric, and clean doubly-stochastic affinities are all very similar, and resemble a Gaussian.  This is explained by the fact that both $W^{(d)}$ and $W^{(r)}$ are expected to approximate the heat kernel on the unit circle (see~\cite{coifman2006diffusionMaps,marshall2019manifold} and other related references given in the introduction), which is close to the Gaussian kernel with geodesic distance (for sufficiently small $\varepsilon$). Additionally, since the sampling density on the circle is uniform, $\operatorname{diag}(\mathbf{r})$ (from~\eqref{eq:W_r def}) is close to a multiple of the identity, and hence $W^{(s)}$ is expected to be close to $W^{(r)}$ (recall that $W^{(s)} = [\operatorname{diag}(\textbf{r})]^{-1/2} W^{(r)} [\operatorname{diag}(\textbf{r})]^{1/2}$). Indeed, we found that $\Vert W^{(d)} - W^{(r)} \Vert_F^2 \approx \Vert W^{(d)} - W^{(s)} \Vert_F^2 \approx 0.01$.

Importantly, the doubly-stochastic normalization recovers the true affinities with high accuracy, with an almost perfect match between the corresponding clean and noisy affinities. On the other hand, there is an evident discrepancy between the corresponding clean and noisy affinities from the row-stochastic normalization and from the symmetric normalization.

\begin{figure} 
  \centering
  	{
  	\subfloat[Doubly-stochastic normalization~\eqref{eq:W_d def}]  
  	{
    \includegraphics[width=0.33\textwidth]{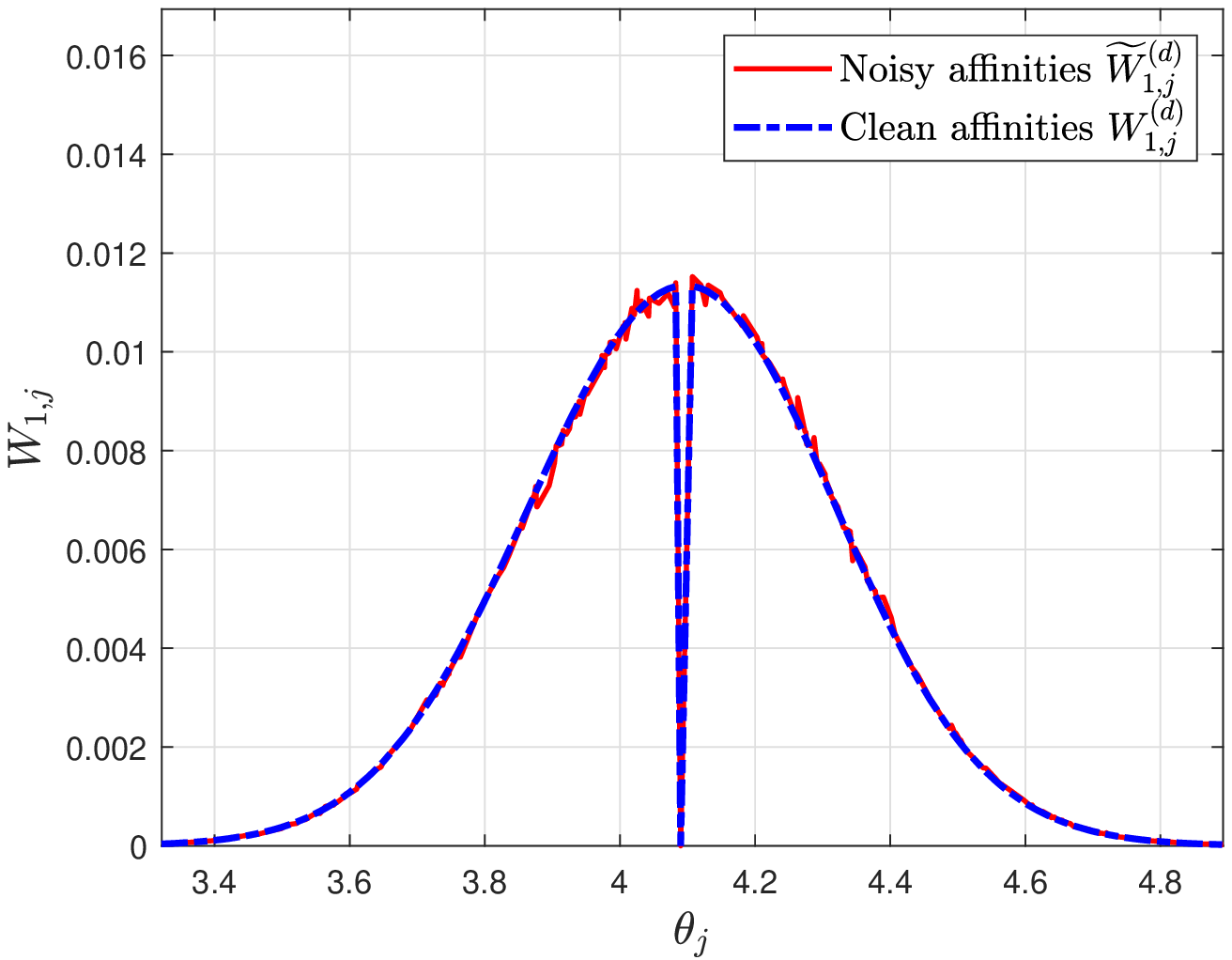} 
    }
    \subfloat[Row-stochastic normalization~\eqref{eq:W_r def}]  
  	{
    \includegraphics[width=0.33\textwidth]{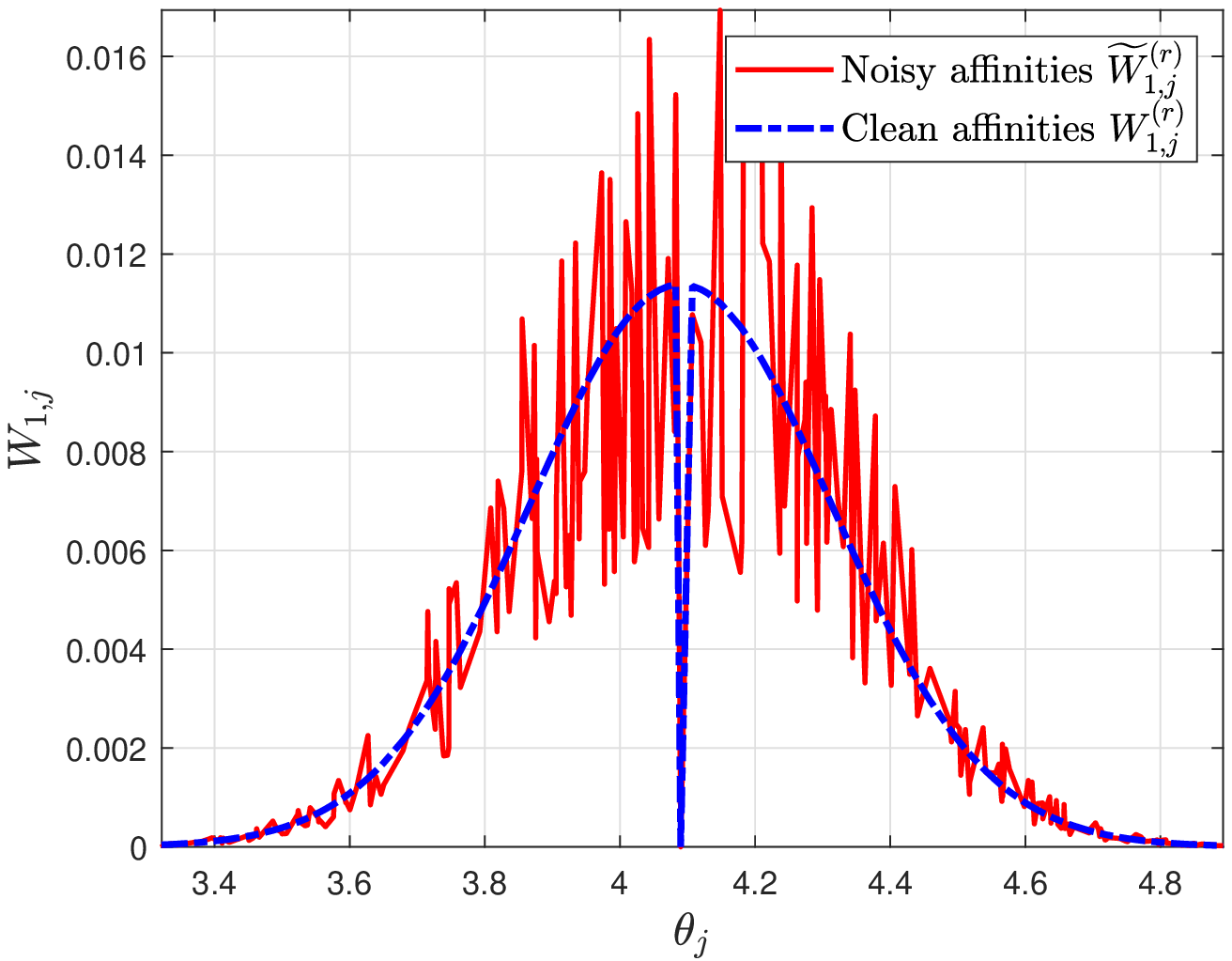}
    }
    \subfloat[Symmetric normalization~\eqref{eq:W_s def}]  
  	{
    \includegraphics[width=0.33\textwidth]{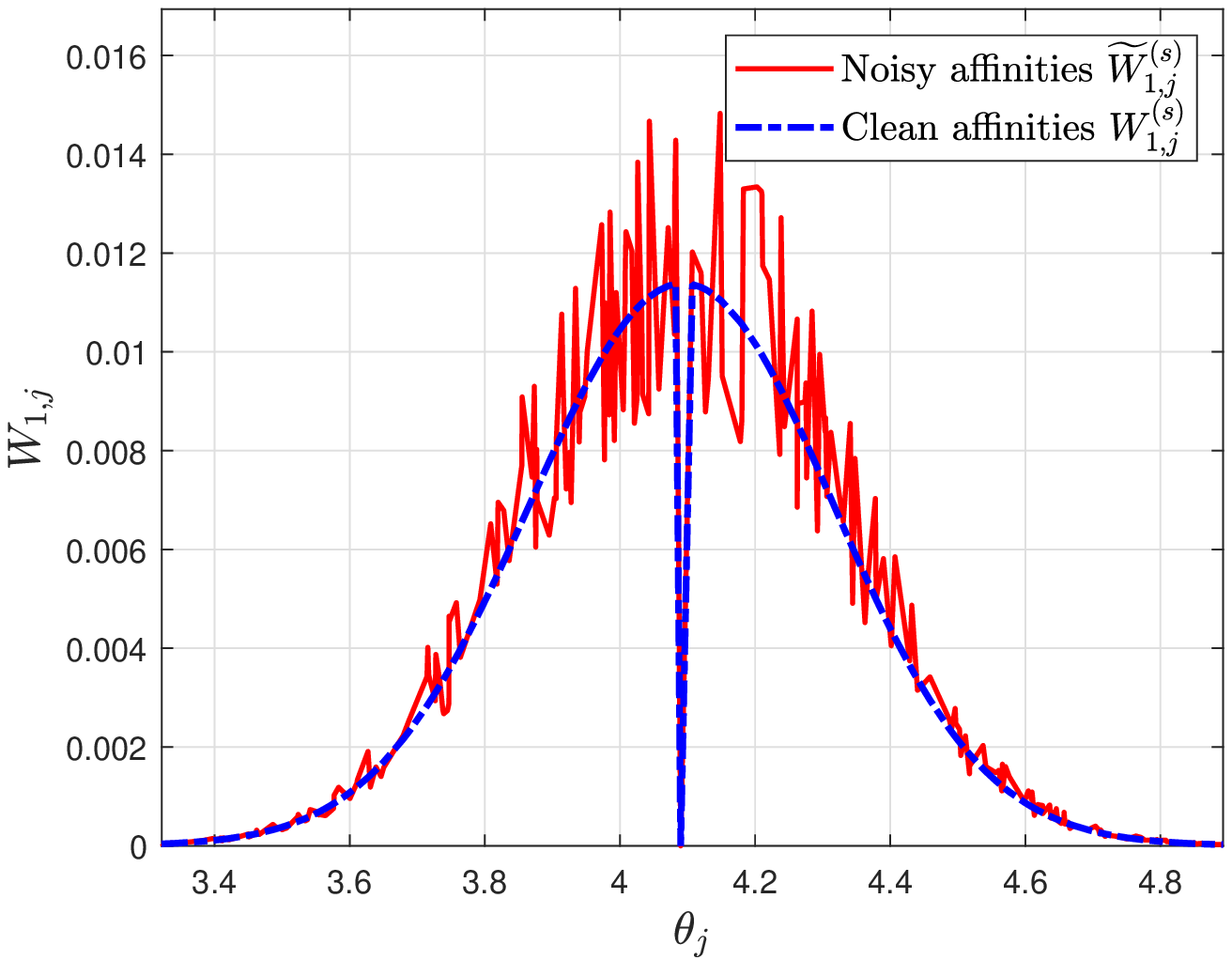}
    }
    }
    \caption
    {First row of the clean and noisy affinity matrices obtained using different normalizations. The dataset is the unit circle (see~\eqref{eq:x_i circle def}), with $n=10^3$, $m=10^4$, and heteroskedastic noise simulated according to~\eqref{eq:noise std sim}--\eqref{eq:noise sim sampling}.} \label{fig:W_tilde vs W heat map}
    \end{figure} 
    
\subsubsection{Noise sampled uniformly from a ball with smoothly varying radius} \label{subsubsec:circle example 2}
Next, we proceed by demonstrating the robustness of the leading eigenvectors from the doubly-stochastic normalization under heteroskedastic noise, and in particular, in the presence of noise whose magnitude depends on the local geometry of the clean data. Specifically, we simulated heteroskedastic noise whose magnitude varies smoothly according to the angle $\theta_i$ of each point $\mathbf{x}_i$ on the circle (see~\eqref{eq:x_i circle def}), according to
\begin{equation}
    \eta_i \sim U\left( \mathcal{B}_{\rho(\theta_i)}\right), \qquad \qquad \rho(\theta) = 0.01 + 0.99\frac{1+\cos(2\theta)}{2}, \label{eq:noise model uniform}
\end{equation}
where $U\left( \mathcal{B}_{r}\right)$ stands for the uniform distribution over $\mathcal{B}_{r}$, which is a ball with radius $r$ in $\mathbb{R}^m$ (centered at the origin). That is, every noisy observation $\widetilde{\mathbf{x}}_i$ is sampled uniformly from a ball whose center is $\mathbf{x}_i$ and its radius is $\rho(\theta_i)$ from~\eqref{eq:noise model uniform}.
Consequently, the maximal noise magnitude varies smoothly between $0.01$ (for $\theta=\pi/2, 3\pi/2$) and $1$ (for $\theta=0,\pi$).
A typical array of clean and noisy points arising from the noise model~\eqref{eq:noise model uniform} for dimension $m=2$ can be seen in Figure~\ref{fig:uniform noise varying radius 2d}.

\begin{figure} 
  \centering
  	{
    \includegraphics[width=0.4\textwidth]{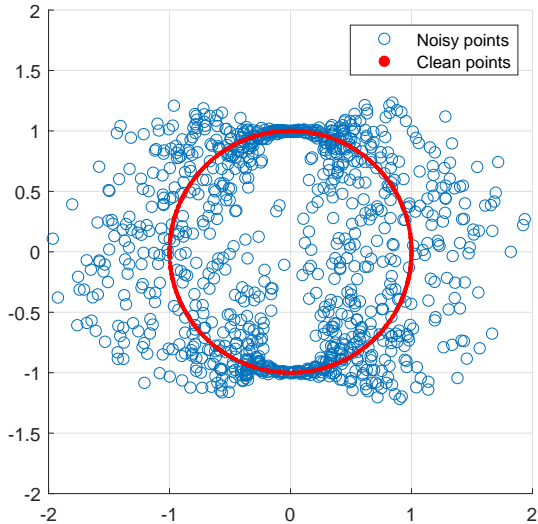} 
    }
    \caption
    {Typical array of clean and noisy data points for $n=1000$, $m=2$, and additive noise sampled uniformly from a sphere whose radius depends on the angle of the corresponding clean point (on the unit circle) according to~\eqref{eq:noise model uniform}.} \label{fig:uniform noise varying radius 2d}
\end{figure} 

We generated the noisy data points $\widetilde{\mathbf{x}}_1,\ldots,\widetilde{\mathbf{x}}_n$ according to~\eqref{eq:x_tilde def} for dimension $m=500$, and formed the noisy kernel matrix $\widetilde{K}$ with $\varepsilon=0.1$. We next computed $\widetilde{W}^{(d)}$ using Algorithm~\ref{alg:SK sym} with $\delta=10^{-12}$, and evaluated $\widetilde{W}^{(r)}$, $\widetilde{W}^{(s)}$ using $\widetilde{K}$ according to~\eqref{eq:W_r def} and~\eqref{eq:W_s def}. 

Figure~\ref{fig:noisy eigenvectors} displays the five leading (right) eigenvectors of ${W}^{(d)}$, ${W}^{(r)}$, ${W}^{(s)}$, denoted by $\{{\psi}_{k}^{(d)}\}_{k=1}^5$, $\{{\psi}_{k}^{(r)}\}_{k=1}^5$, $\{{\psi}_{k}^{(s)}\}_{k=1}^5$, respectively, and the five leading (right) eigenvectors of $\widetilde{W}^{(d)}$, $\widetilde{W}^{(r)}$, $\widetilde{W}^{(s)}$, denoted by $\{\widetilde{\psi}_{k}^{(d)}\}_{k=1}^5$, $\{\widetilde{\psi}_{k}^{(r)}\}_{k=1}^5$, $\{\widetilde{\psi}_{k}^{(s)}\}_{k=1}^5$, respectively.
It can be seen that the leading eigenvectors from the doubly-stochastic normalization are almost unaffected by the noise, and approximate sines and cosines, which are the eigenfunctions of the Laplace-Beltrami operator on the circle. As sines and cosines are advantageous for expanding periodic functions, it is natural to employ the eigenvectors of $\widetilde{W}^{(d)}$ for the purposes of regression, interpolation, and classification over the dataset. It is important to mention that other useful bases and frames can potentially be constructed from $\widetilde{W}^{(d)}$ (see~\cite{coifman2006diffusionWavelets,hammond2011wavelets}). On the other hand, the eigenvectors obtained from $\widetilde{W}^{(r)}$ and $\widetilde{W}^{(s)}$ are strongly biased due to the heteroskedastic noise, and exhibit undesired effects such as discontinuities and localization. Specifically, as evident from Figure~\ref{fig:noisy eigenvectors}, the leading eigenvectors of $\widetilde{W}^{(r)}$ are discontinuous at $\theta=0$ and $\theta=\pi$, and the leading eigenvectors of $\widetilde{W}^{(s)}$ are localized around $\theta = \pi/2$ and $\theta = 3\pi/2$ (i.e., their values are close to $0$ around $\theta = 0$ and $\theta = \pi$). Clearly, this behaviour of the leading eigenvectors of $\widetilde{W}^{(r)}$ and $\widetilde{W}^{(s)}$ does not reflect the geometry of the clean data, but rather the characteristics of the noise (since the noise variance is smallest at $\theta = \pi/2,3\pi/2$  and largest at $\theta = 0,\pi$). 
\begin{figure} 
  \centering
  	{
  	\subfloat[Clean Doubly-stochastic]  
  	{
    \includegraphics[width=0.33\textwidth]{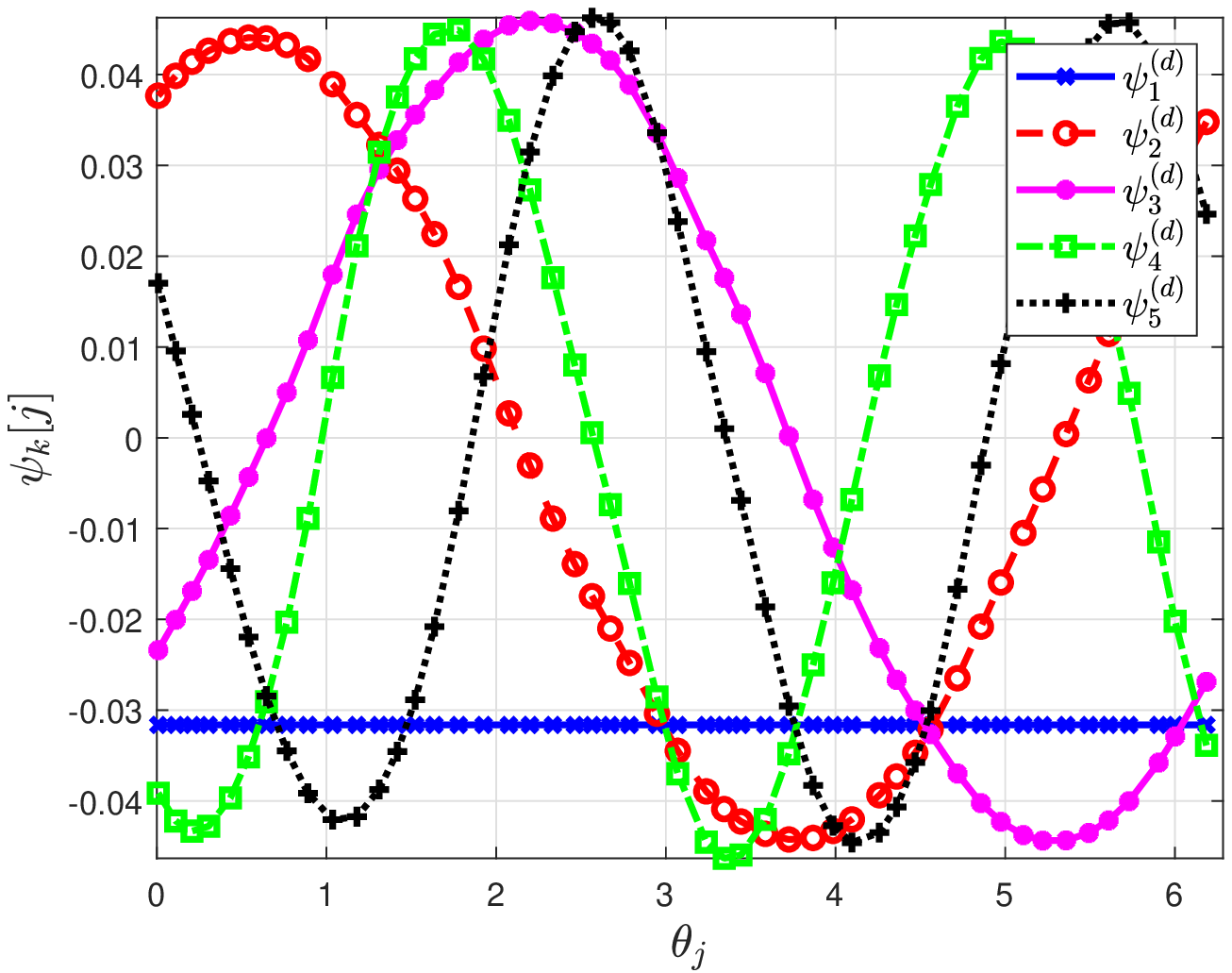} 
    }
    \subfloat[Clean Row-stochastic]  
  	{
    \includegraphics[width=0.33\textwidth]{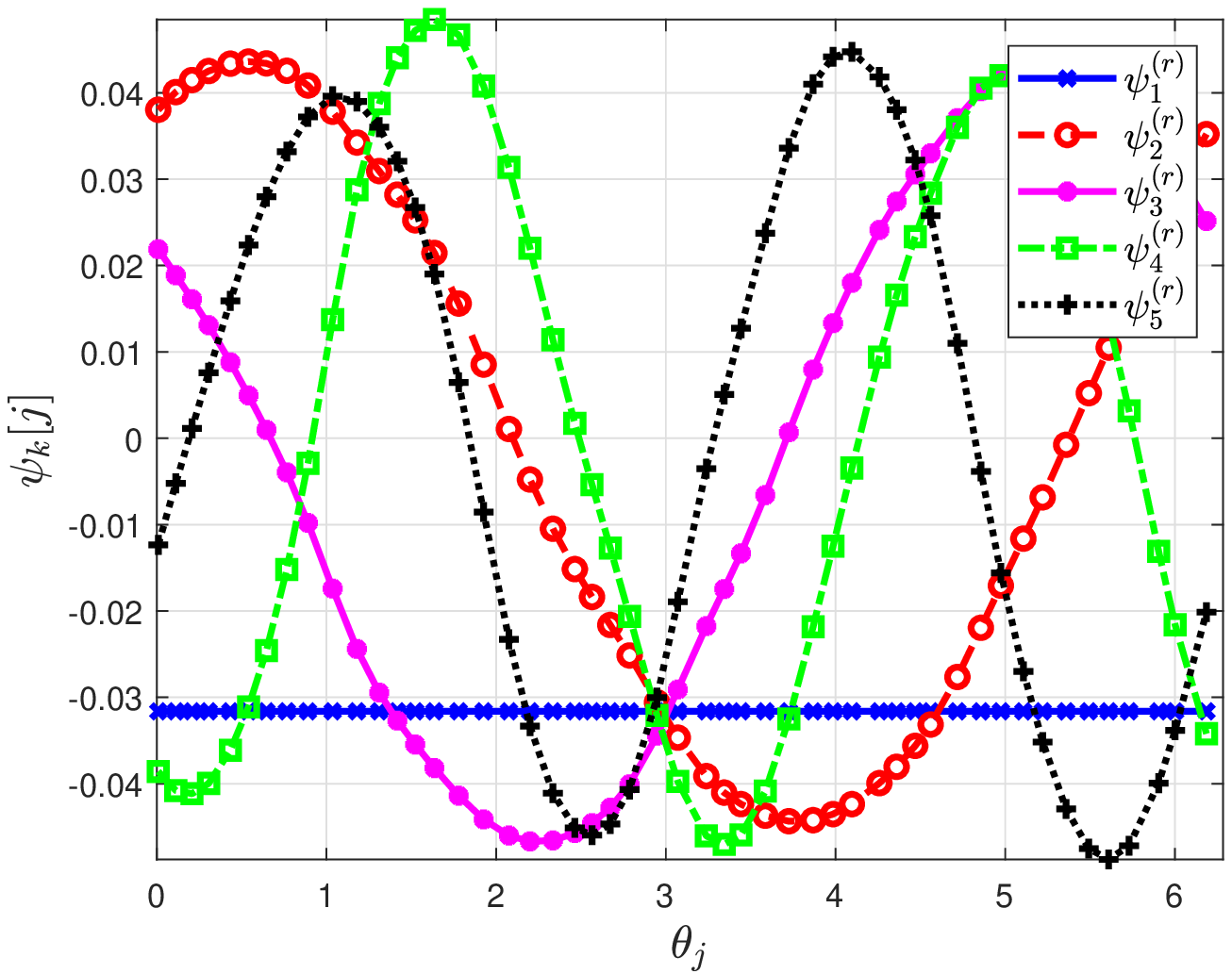}
    }
    \subfloat[Clean Symmetric]  
  	{
    \includegraphics[width=0.33\textwidth]{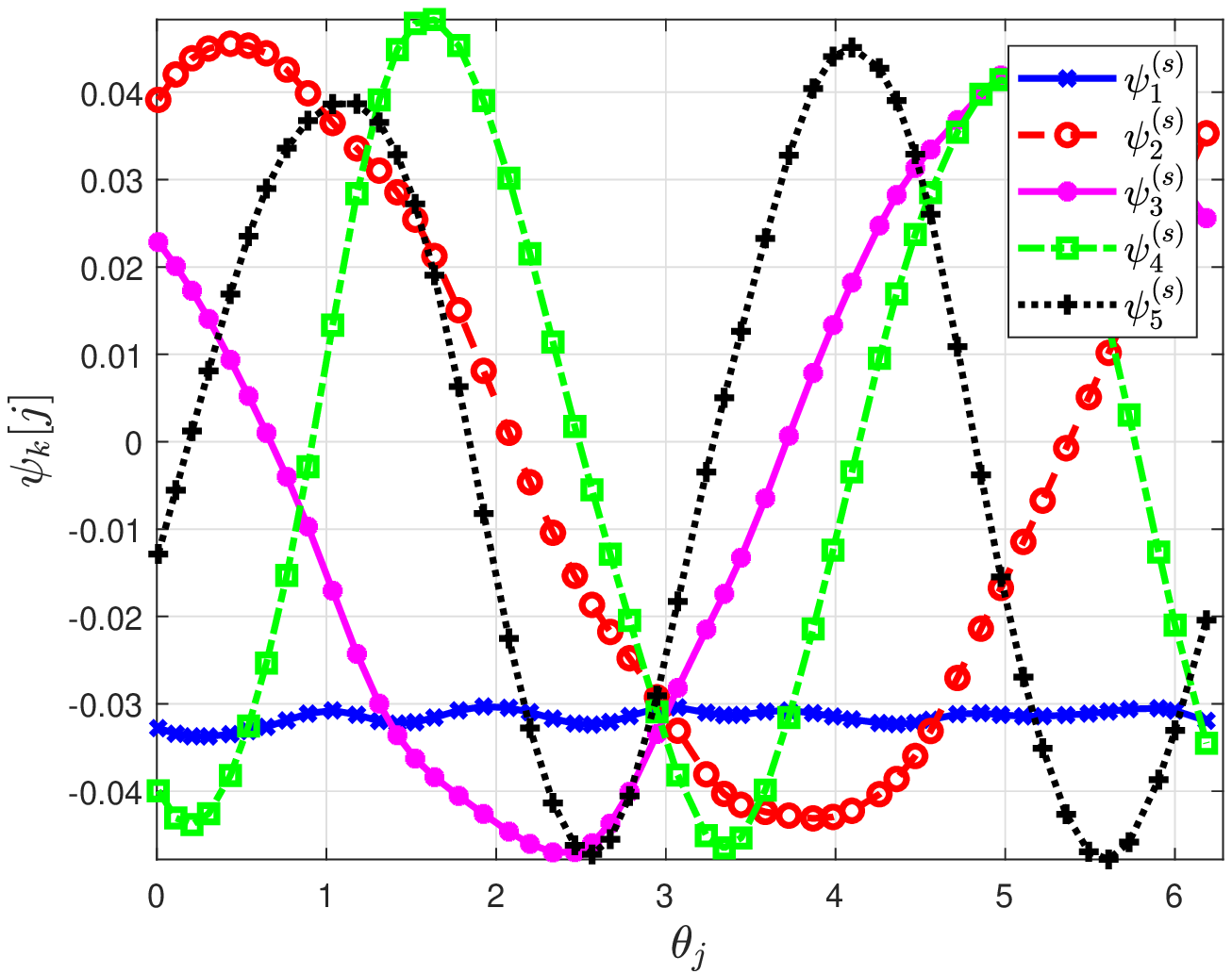}
    }\\
  	\subfloat[Noisy Doubly-stochastic]  
  	{
    \includegraphics[width=0.33\textwidth]{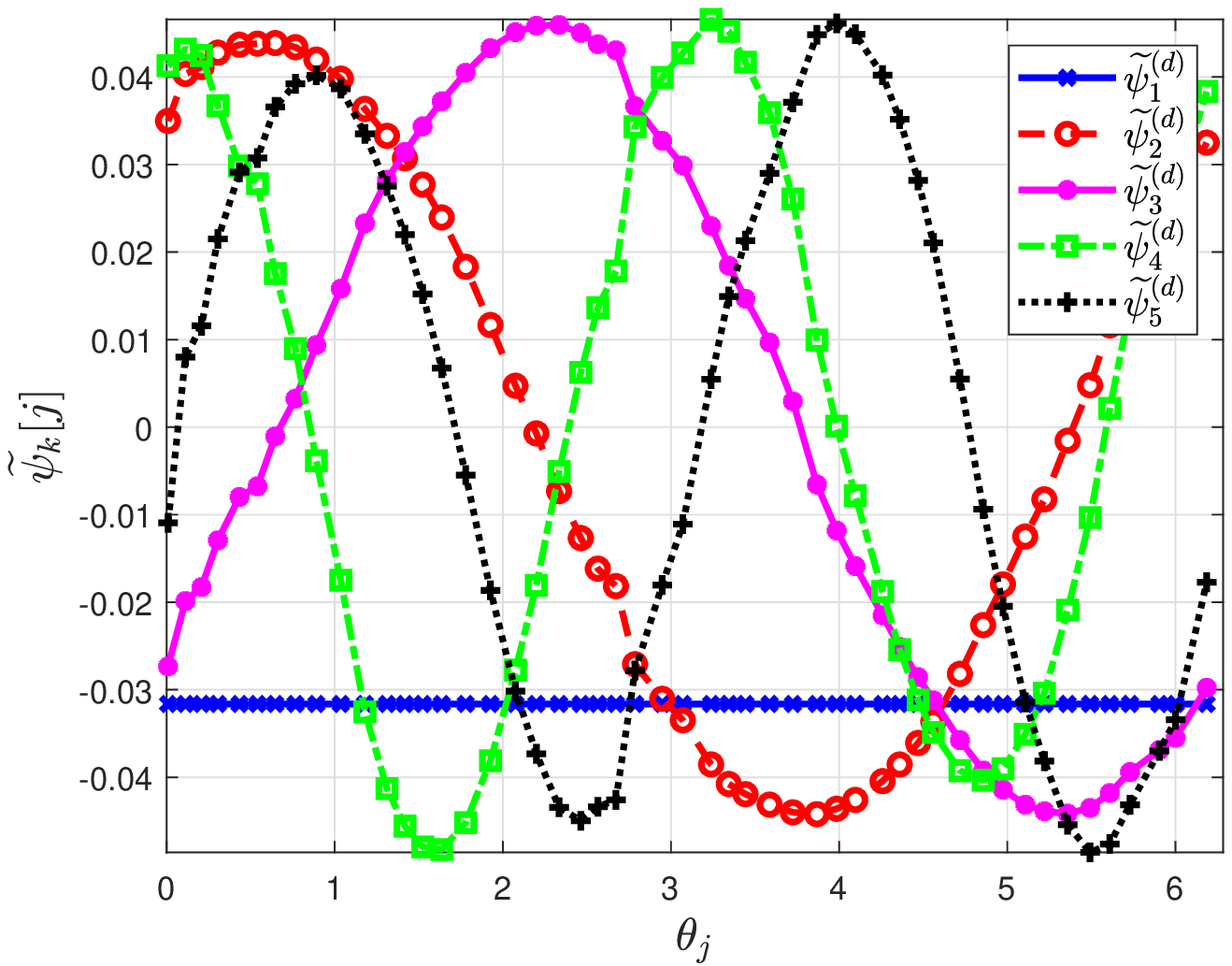} 
    }
    \subfloat[Noisy Row-stochastic]  
  	{
    \includegraphics[width=0.33\textwidth]{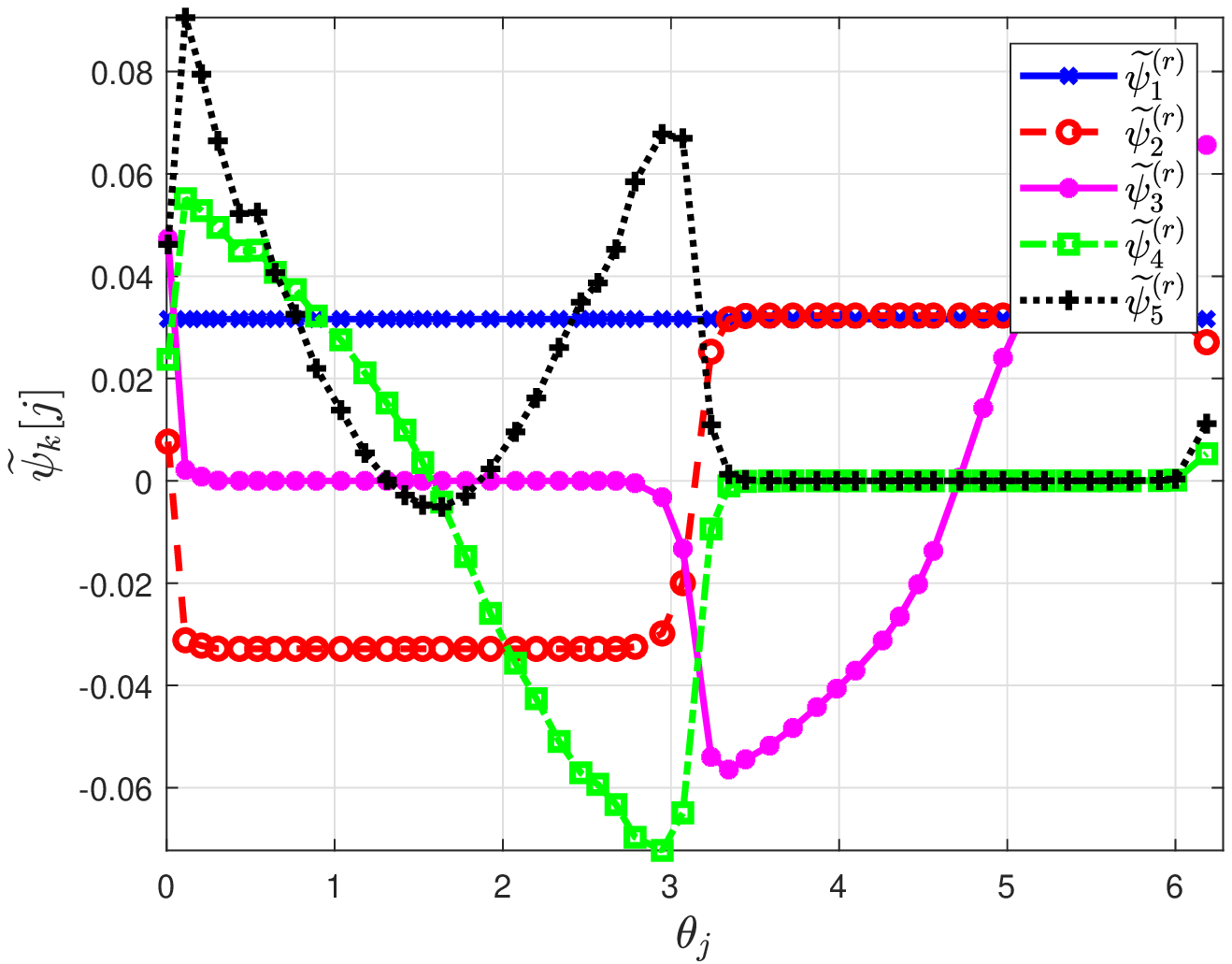}
    }
    \subfloat[Noisy Symmetric]  
  	{
    \includegraphics[width=0.33\textwidth]{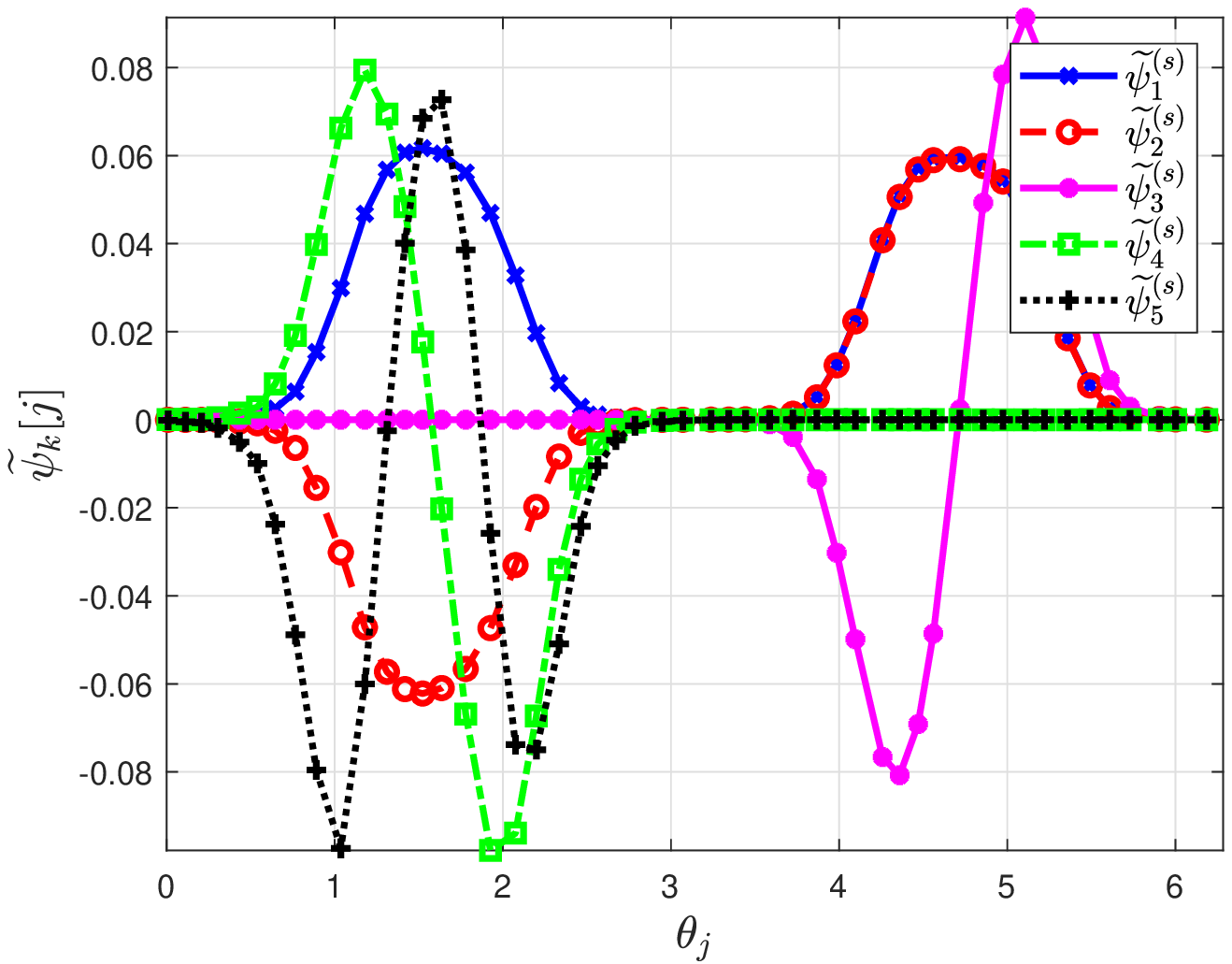}
    }
    }
    \caption
    {Eigenvectors corresponding to the five largest eigenvalues of the clean and noisy affinity matrices obtained from different normalizations. The top row corresponds to clean affinity matrices (from left to right: ${W}^{(d)}$, ${W}^{(r)}$, ${W}^{(s)}$) and the bottom row corresponds to noisy affinity matrices (from left to right: $\widetilde{W}^{(d)}$, $\widetilde{W}^{(r)}$, $\widetilde{W}^{(s)}$). The dataset is the unit circle, with $n=10^3$, $m=500$, and heteroskedastic noise generated according to~\eqref{eq:noise model uniform}.} \label{fig:noisy eigenvectors}
    \end{figure} 

In Figure~\ref{fig:embedding 2d} we illustrate the two-dimensional embedding of the noisy data points $\widetilde{\mathbf{x}}_1,\ldots,\widetilde{\mathbf{x}}_n$ using the second and third eigenvectors of $\widetilde{W}^{(d)}$, $\widetilde{W}^{(r)}$, and $\widetilde{W}^{(s)}$ (corresponding to their second- and third-largest eigenvalues). That is, the $x$-axis and $y$-axis values for each embedding are given by the entries of $\widetilde{\psi}^{(d)}_{2}$ and $\widetilde{\psi}^{(d)}_{3}$ for the doubly-stochastic normalization, $\widetilde{\psi}^{(r)}_2$ and $\widetilde{\psi}^{(r)}_3$ for the row-stochastic normalization, and $\widetilde{\psi}^{(s)}_2$ and $\widetilde{\psi}^{(s)}_3$ for the symmetric normalization (see also~\cite{belkin2003laplacian,coifman2006diffusionMaps}). 
It is clear that the embedding due to the doubly-stochastic normalization reliably represents the intrinsic structure of the clean dataset -- a unit circle with uniform density, whereas the embeddings due to the row-stochastic and the symmetric normalizations are incoherent with the geometry and density of the clean points.

\begin{figure} 
  \centering
  	{
  	\subfloat[Doubly-stochastic normalization~\eqref{eq:W_d def}]  
  	{
    \includegraphics[width=0.33\textwidth]{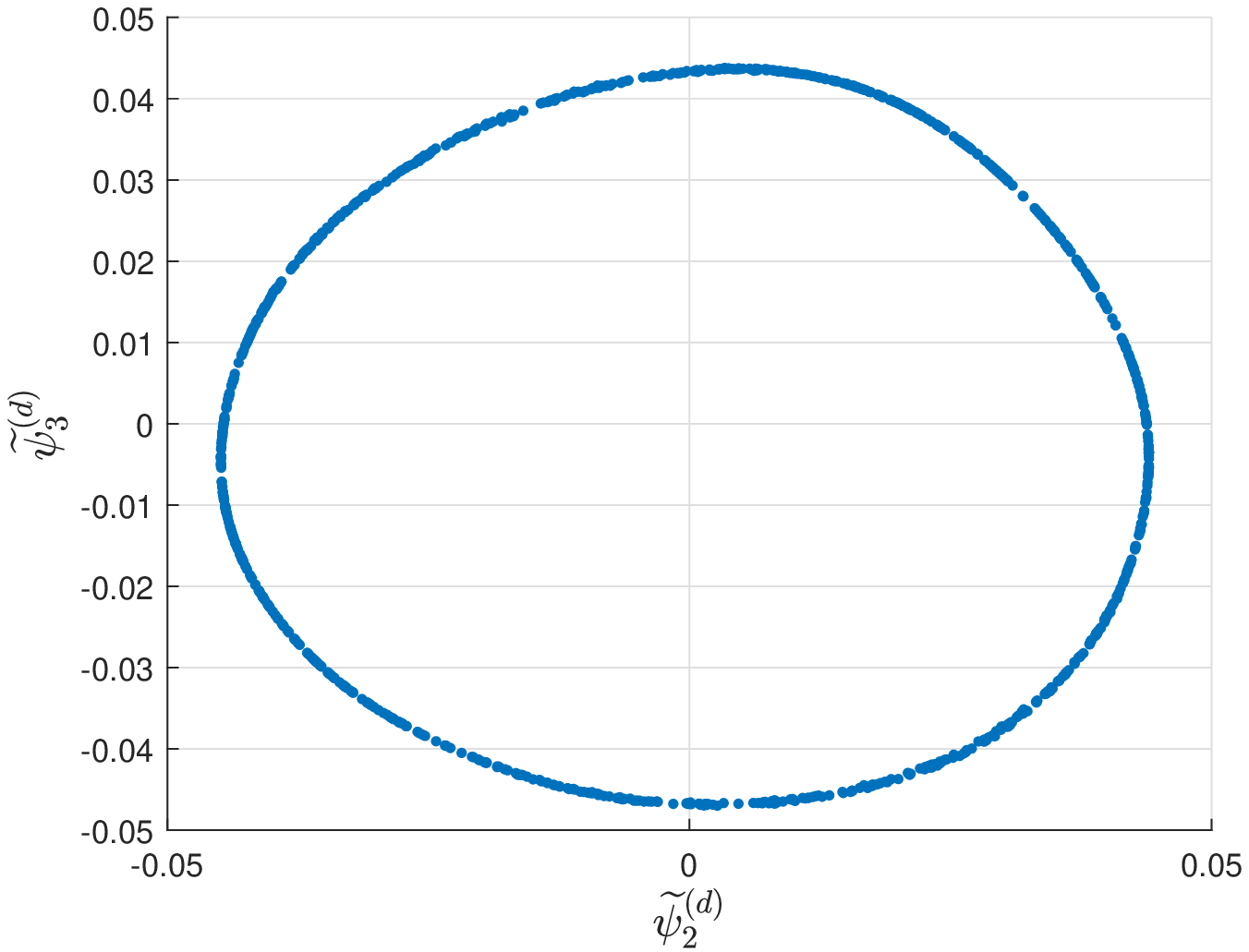} 
    }
    \subfloat[Row-stochastic normalization~\eqref{eq:W_r def}]  
  	{
    \includegraphics[width=0.33\textwidth]{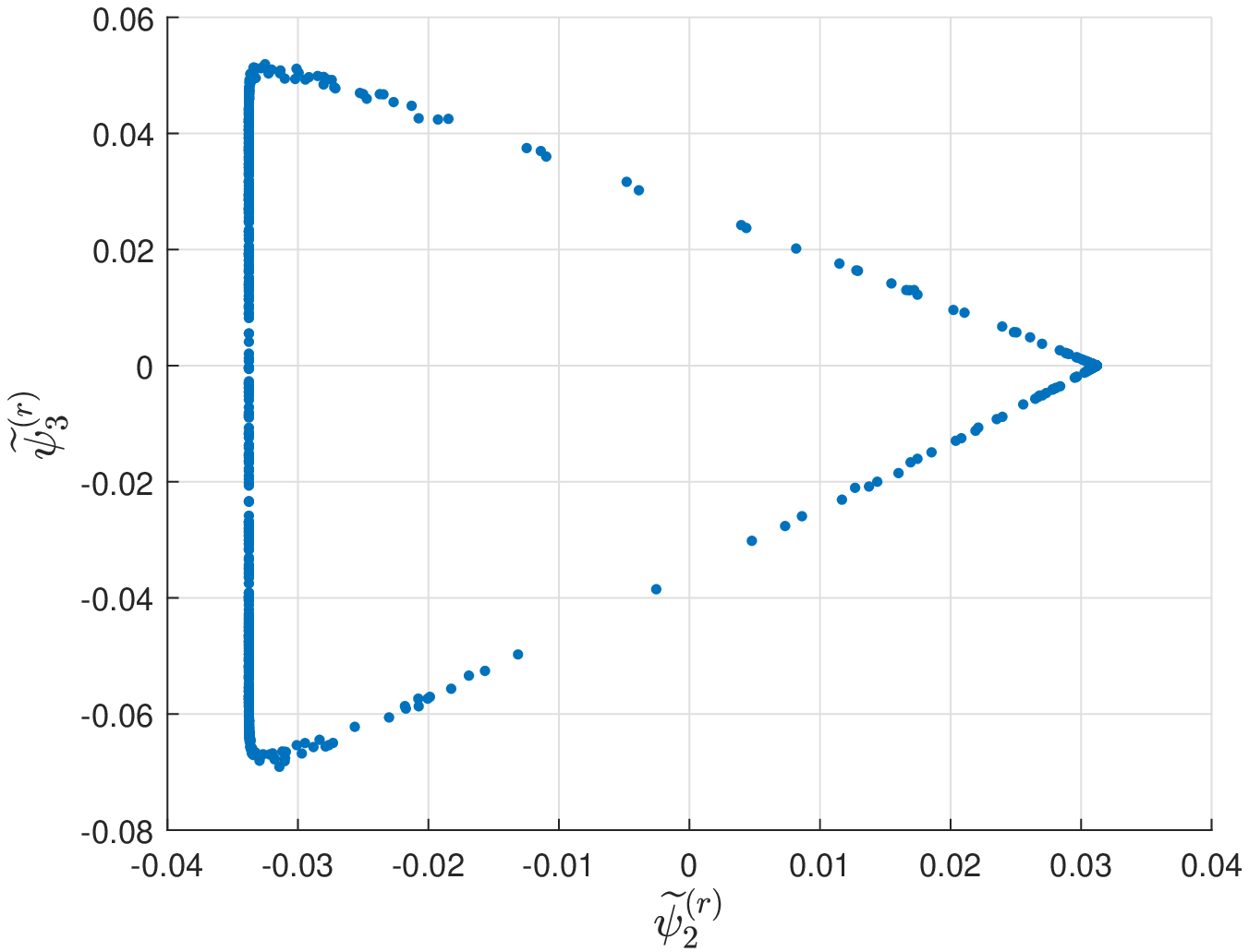}
    }
    \subfloat[Symmetric normalization~\eqref{eq:W_s def}]  
  	{
    \includegraphics[width=0.33\textwidth]{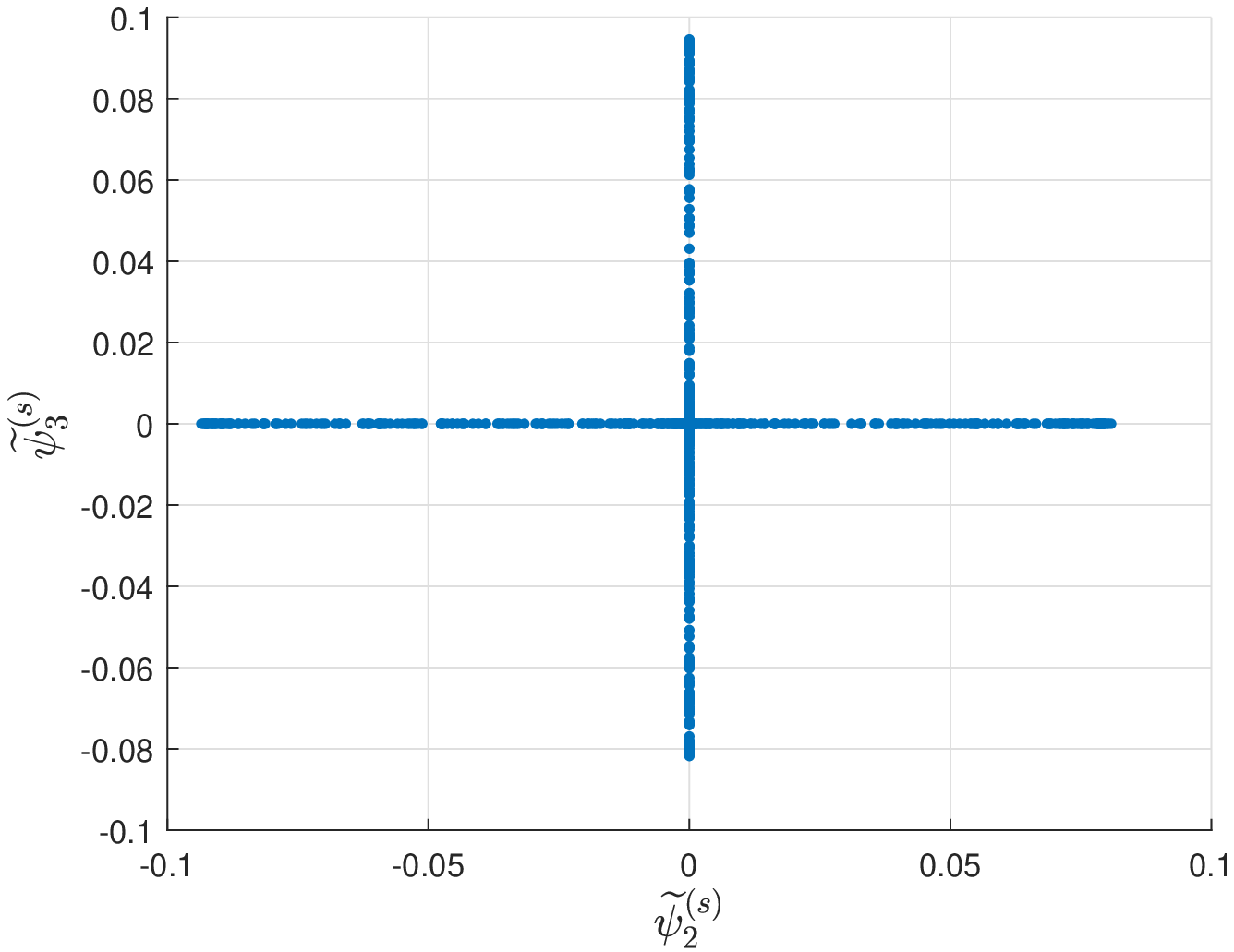}
    }
    }
    \caption
    {Two-dimensional embedding using the second and third eigenvectors (corresponding to the second- and third-largest eigenvalues) of affinity matrices obtained from different normalizations. The dataset is the unit circle, with $n=10^3$, $m=500$, and heteroskedastic noise generated according to~\eqref{eq:noise model uniform}.} \label{fig:embedding 2d}
    \end{figure} 
    
    \subsection{Example 2: Single-cell RNA sequence data} \label{subsec:SCRNAseq example}
Single-cell RNA sequencing (scRNA-seq) is a revolutionary technique for measuring target gene expressions of individual cells in large and heterogeneous samples~\cite{tang2009mrna,macosko2015highly}. Due to the method's high resolution (single-cell level) it allows for the discovery of rare cell populations, which is of paramount importance in immunology and developmental biology. A typical scRNA-seq dataset is an $m\times n$ nonnegative matrix corresponding to $n$ cells and $m$ genes, where its $(i,j)$'th entry is an integer called the \textit{read count}, describing the expression level of $i$'th gene in the $j$'th cell. Importantly, the total number of read counts (or in short \textit{total reads}) per cell (i.e., column sums) may vary substantially within a sample~\cite{kim2020demystifying}. 
We next exemplify the advantage of using the doubly-stochastic normalization for exploratory analysis of scRNA-seq data. 

\subsubsection{Simulated data} \label{subsubsec:scRNA-seq simulated data}
We begin with a simple prototypical example where the gene expression levels of cells are measured in two different batches, such that the number of total reads (per cell) within each batch is constant, but is substantially different between the batches. Therefore, the noise variance (modeled by the variance of the multinomial distribution, to be described shortly) differs between the observations in the two batches, giving rise to heteroskedastic noise. Such a scenario can arise naturally in scRNA-seq, either from the intrinsic read count variability common to such datasets, or when two datasets from two independent experiments are merged for unified analysis. 

We consider a simulated dataset which includes only two cell types, denoted by $\mathbf{p}_1,\mathbf{p}_2\in\mathbb{R}_{+}^{m}$, with $m=4000$ genes. The prototypes $\mathbf{p}_1$ and $\mathbf{p}_2$ were created by first sampling their entries uniformly (and independently) from $[0,1]$, and then normalizing them so that they sum to $1$. That is,
\begin{equation}
    \mathbf{p}_{1}[j] = \frac{\mathbf{z}_{1}[j]}{\sum_{k=1}^m \mathbf{z}_{1}[k]}, \qquad \mathbf{p}_{2}[j] = \frac{\mathbf{z}_{2}[j]}{\sum_{k=1}^m \mathbf{z}_{2}[k]}, \qquad \mathbf{z}_{1}[j],\; \mathbf{z}_{2}[j] \sim U[0,1], \qquad j=1,\ldots,m.  \label{eq:scRNAseq data gen 1}
\end{equation}
Next, each noisy observation $\widetilde{\mathbf{x}}_i$ was drawn from a multinomial distribution using either $\mathbf{p}_1$ or $\mathbf{p}_2$ as the probability vector, and normalized to sum to $1$, as described next. First, we generated a batch containing $500$ observations of $\mathbf{p}_1$ and $250$ observations of $\mathbf{p}_2$, each with $1000$ multinomial trials. Second, we added a batch containing $250$ observations of $\mathbf{p}_2$ only, each with $10^4$ multinomial trials. To summarize, the total number of observations is $n=1000$, given explicitly by
\begin{equation}
    \widetilde{\mathbf{x}}_i = \frac{\hat{\mathbf{x}}_i}{\sum_{j=1}^m \hat{\mathbf{x}}_{i}[j]}, \qquad 
    \hat{\mathbf{x}}_i \sim 
    \begin{dcases}
    \operatorname{Multinomial}(10^3,\mathbf{p}_1), & 1\leq i \leq 500, \\
    \operatorname{Multinomial}(10^3,\mathbf{p}_2), & 501 \leq i \leq 750, \\
    \operatorname{Multinomial}(10^4,\mathbf{p}_2), & 751 \leq i \leq 1000.
    \end{dcases} \label{eq:scRNAseq data gen 2}
\end{equation}
Therefore, the dataset consists of $500$ (normalized) multinomial observations of $\mathbf{p}_1$, followed by $500$ (normalized) multinomial observations of $\mathbf{p}_2$. While all observations of $\mathbf{p}_1$ are with $10^3$ multinomial trials, the observations of $\mathbf{p}_2$ are split between $250$ observations with $10^3$ multinomial trials, and $250$ observations with $10^4$ multinomial trials. Evidently, we can write
\begin{equation}
\widetilde{\mathbf{x}}_i = \mathbb{E} [\widetilde{\mathbf{x}}_i] + \eta_i = \mathbf{p}_{\ell_i} + \eta_i, \qquad \ell_i = \begin{dcases}
1, & 1\leq i \leq 500, \\
2, & 501 \leq i \leq 1000, 
\end{dcases} \label{eq:scRNAseq data gen 3}
\end{equation}
where $\eta_i$ is a zero-mean noise vector (arising from the multinomial sampling) satisfying that $\mathbb{E}\Vert \eta_i\Vert^2$ is significantly smaller (by a factor of $10$ roughly) for $751 \leq i \leq 1000$ compared to $1\leq i \leq 750$.

Using the noisy observations $\widetilde{\mathbf{x}}_1,\ldots,\widetilde{\mathbf{x}}_n$, we formed the noisy kernel matrix $\widetilde{K}$ of~\eqref{eq:K def} with $\varepsilon = 2\cdot 10^{-5}$, computed the corresponding matrix $\widetilde{W}^{(d)}$ using Algorithm~\ref{alg:SK sym} with $\delta=10^{-12}$, and evaluated the matrices $\widetilde{W}^{(r)}$, $\widetilde{W}^{(s)}$ according to~\eqref{eq:W_r def} and~\eqref{eq:W_s def}. Our methodology for choosing $\varepsilon$ was to take it to be the smallest possible such that Algorithm~\ref{alg:SK sym} converges within the desired tolerance (specifically, in this experiment we set a maximum of $10^6$ iterations for the algorithm). We note that if $\varepsilon$ is too small, then $\widetilde{K}$ becomes too sparse (approximately), and the doubly-stochastic normalization may become numerically ill-posed. 

Figure~\ref{fig:Noisy affinity matrices small epsilon} illustrates the values (in logarithmic scale) of the obtained affinity matrices $\widetilde{W}^{(d)}$, $\widetilde{W}^{(r)}$, $\widetilde{W}^{(s)}$. It is evident that the affinity matrix from the doubly-stochastic normalization accurately describes the relationships between the data points. That is, $\widetilde{W}^{(d)}$ indicates the similarities within the two groups of cell types (i.e., $\mathbf{p}_1$ and $\mathbf{p}_2$), but also the dissimilarities between them, regardless of batch association. On the other hand, the affinity matrices from the row-stochastic and the symmetric normalizations are not loyal to the grouping according to cell types, but rather to batch association. In particular, $\widetilde{W}^{(r)}$ and $\widetilde{W}^{(s)}$ highlight the observations from the second batch (observations $751$--$1000$) as being most similar to all other observations. Clearly, the fundamental issue here is the heteroskedasticity of the noise, and specifically, the fact that the noise in the last $250$ observations is considerably smaller than the noise in all the other observations. 

\begin{figure} 
  \centering
  	{
  	\subfloat[][Doubly-stochastic normalization~\eqref{eq:W_d def}]  
  	{
    \includegraphics[width=0.33\textwidth]{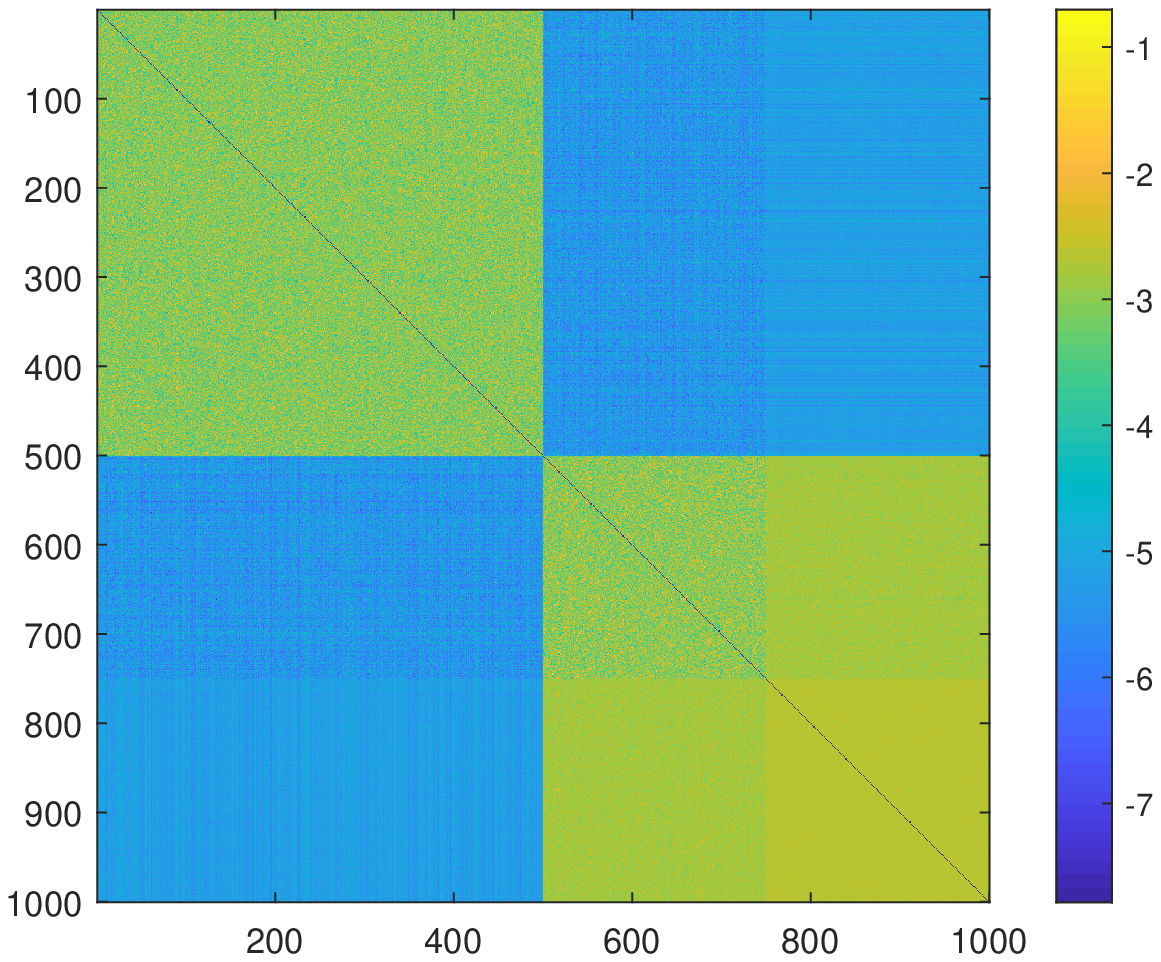} 
    }
    \subfloat[][Row-stochastic normalization~\eqref{eq:W_r def}] 
  	{
    \includegraphics[width=0.33\textwidth]{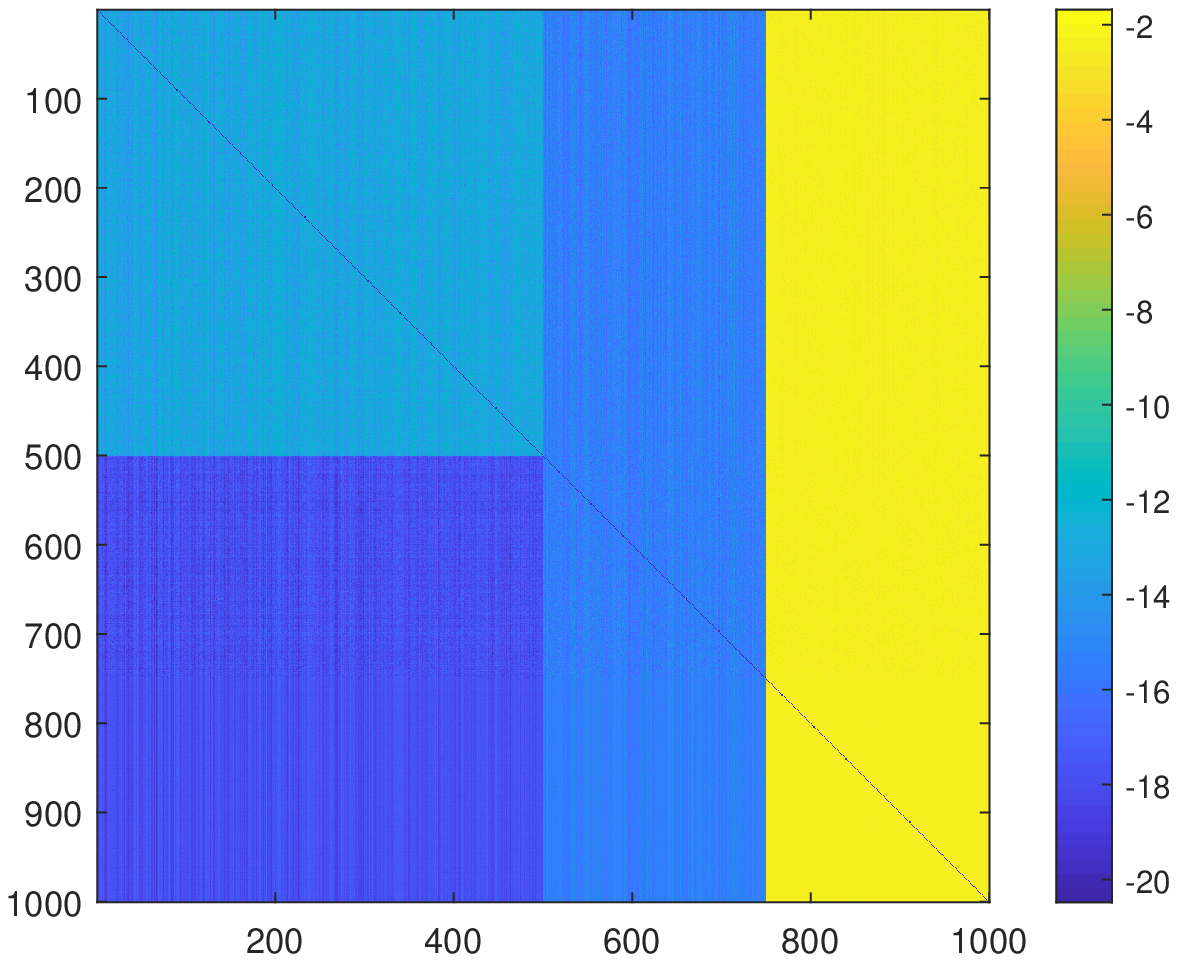} \label{fig:Noisy affinity matrices small epsilon, row-stochastic}
    }
    \subfloat[][Symmetric normalization~\eqref{eq:W_s def}]  
  	{
    \includegraphics[width=0.33\textwidth]{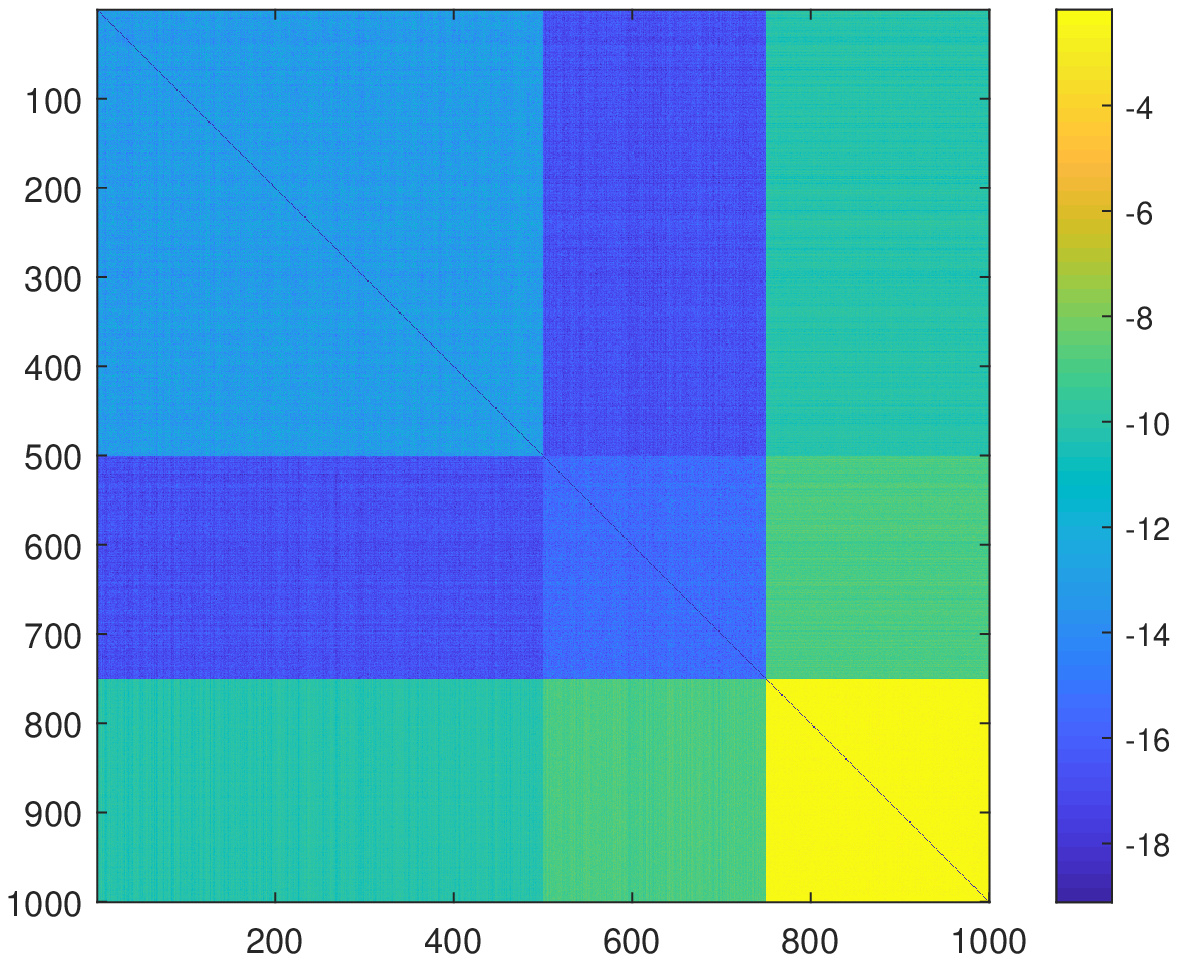}
    } 
    }
    \caption
    {Entries of the affinity matrices obtained from different normalizations in logarithmic scale (from left to right: $\log_{10}(\widetilde{W}^{(d)})$, $\log_{10}(\widetilde{W}^{(r)})$, $\log_{10}(\widetilde{W}^{(s)})$), for single-cell RNA sequence data simulated according to~\eqref{eq:scRNAseq data gen 1}--\eqref{eq:scRNAseq data gen 3}, with $n=1000$, $m=4000$.} \label{fig:Noisy affinity matrices small epsilon}
    \end{figure} 

One of the main goals of exploratory analysis of scRNA-seq data is to identify different cell types. Towards that end, non-linear dimensionality reduction techniques are often employed, among which \textit{t-distributed stochastic neighbor embedding} (t-SNE)~\cite{maaten2008visualizing} is perhaps the most prominent~\cite{linderman2019fast,tirosh2016dissecting,villani2017single,habib2016div}. For its operation, t-SNE employs an affinity matrix which is a close variant of the row-stochastic normalization~\eqref{eq:W_r def}, where the kernel width parameter $\varepsilon$ in~\eqref{eq:K def} is allowed to vary between different rows of $K$, and the resulting row-stochastic matrix is symmetrized by averaging it with its transpose. The different kernel widths are determined by a parameter called the \textit{perplexity}, which is related to the entropy of each row of the resulting affinity matrix. 

Even though the affinity matrix employed by t-SNE is a modification of the standard row-stochastic normalization, and uses a different value of $\varepsilon$ for each row, it is still expected to suffer from the inherent bias observed in Figure~\ref{fig:Noisy affinity matrices small epsilon, row-stochastic}. Specifically, note that the order of the entries in each row of $\widetilde{W}^{(r)}$ (when sorted by their values) does not depend on $\varepsilon$, and only on the noisy pair-wise distances $\Vert \widetilde{\mathbf{x}}_i - \widetilde{\mathbf{x}}_j \Vert^2$, which are strongly biased by the magnitudes of the noise, as evident from Figure~\ref{fig:Noisy affinity matrices small epsilon, row-stochastic}.

In Figures~\ref{fig:t-sne perp 10},\ref{fig:t-sne perp 30},\ref{fig:t-sne perp 100} we demonstrate the two-dimensional visualization obtained from t-SNE for the dataset $\widetilde{\mathbf{x}}_1,\ldots,\widetilde{\mathbf{x}}_n$, using typical perplexity values of $10,30,100$. We used MATLAB's standard implementation of t-SNE, activating the option of forcing the algorithm to be exact (i.e., without approximating the affinity matrix). All other parameters of t-SNE were set to their default values suggested by the code (we also mention that the default suggested perplexity is $30$). 

In Figure~\ref{fig:t-sne doubly-stochastic} we display the two-dimensional visualization obtained from t-SNE when replacing its default affinity matrix construction with the doubly-stochastic matrix $\widetilde{W}^{(d)}$ (obtained using $\varepsilon=2\cdot 10^{-5}$), while leaving all other aspects of t-SNE unchanged. Since the optimization procedure in t-SNE is affected by randomness, we ran the experiment several times to verify that the results we exhibit are consistent.

While there are only two types of cell in the data ($\mathbf{p}_1$ and $\mathbf{p}_2$), no clear evidence of this fact can be found in the visualizations by t-SNE (Figures~\ref{fig:t-sne perp 10},\ref{fig:t-sne perp 30},\ref{fig:t-sne perp 100}). Furthermore, the visualizations by t-SNE do not provide any noticeable separation between the cell types. On the other hand, the visualization obtained by modifying the t-SNE to employ the doubly-stochastic affinity matrix $\widetilde{W}^{(d)}$  (Figure~\ref{fig:t-sne doubly-stochastic}) allows one to easily identify and distinguish between the two cell types. 

\begin{figure} 
  \centering
  	{
  	\subfloat[][t-SNE, $\text{perplexity}=10$]
  	{
    \includegraphics[width=0.3\textwidth]{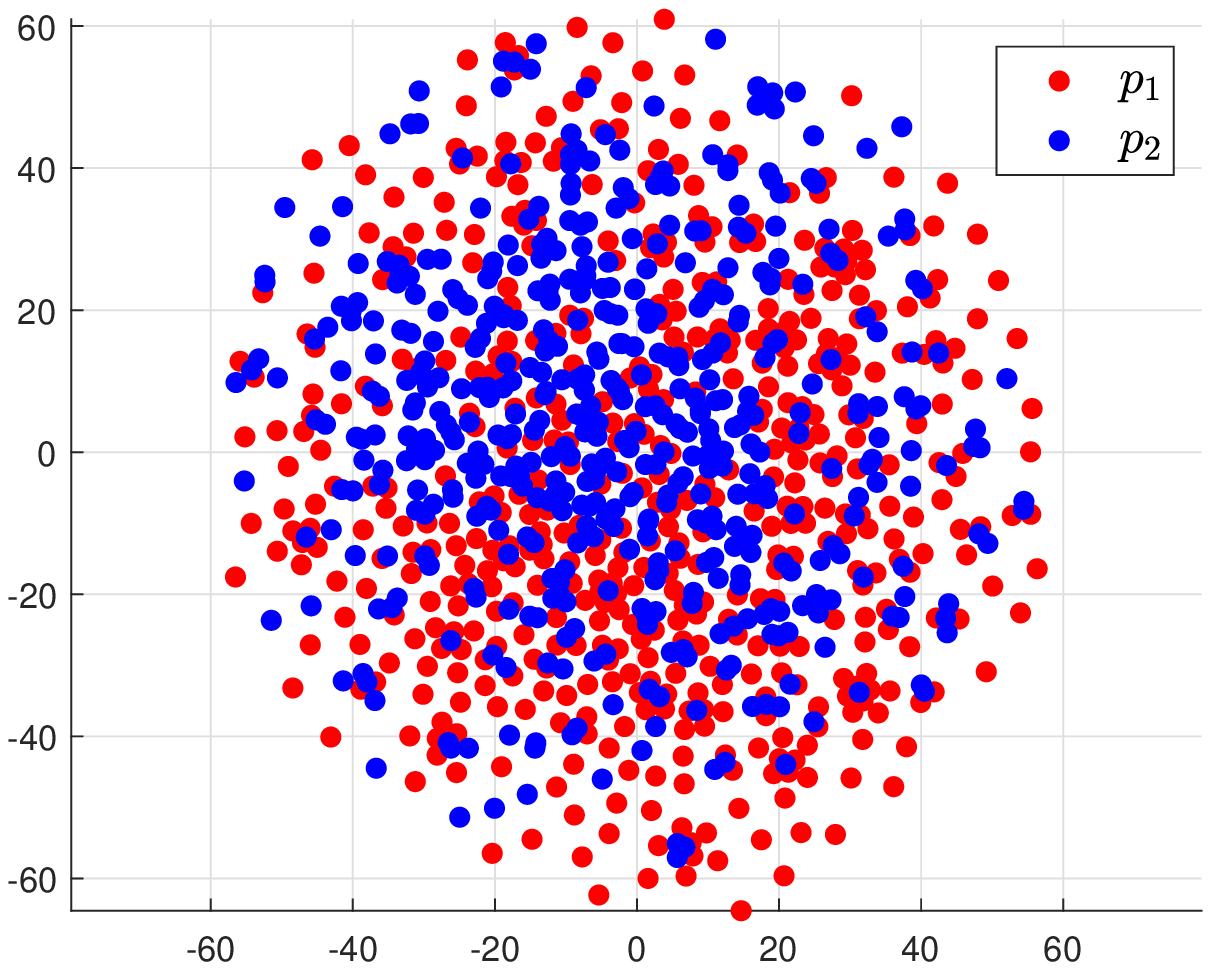} \label{fig:t-sne perp 10}
    }
    \subfloat[][t-SNE, $\text{perplexity}=30$] 
  	{
    \includegraphics[width=0.3\textwidth]{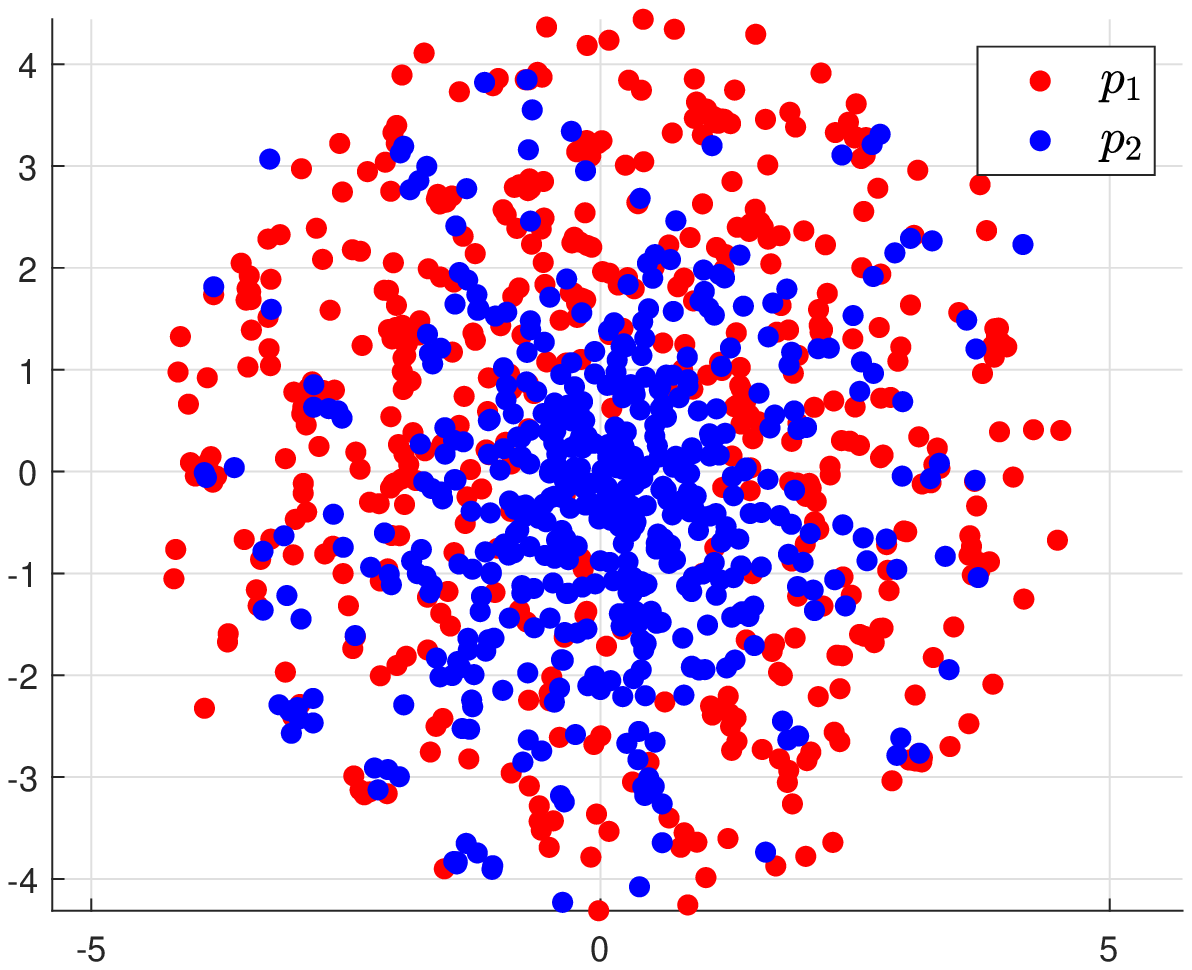} \label{fig:t-sne perp 30}
    }
    \\
    \subfloat[][t-SNE, $\text{perplexity}=100$]  
  	{
    \includegraphics[width=0.3\textwidth]{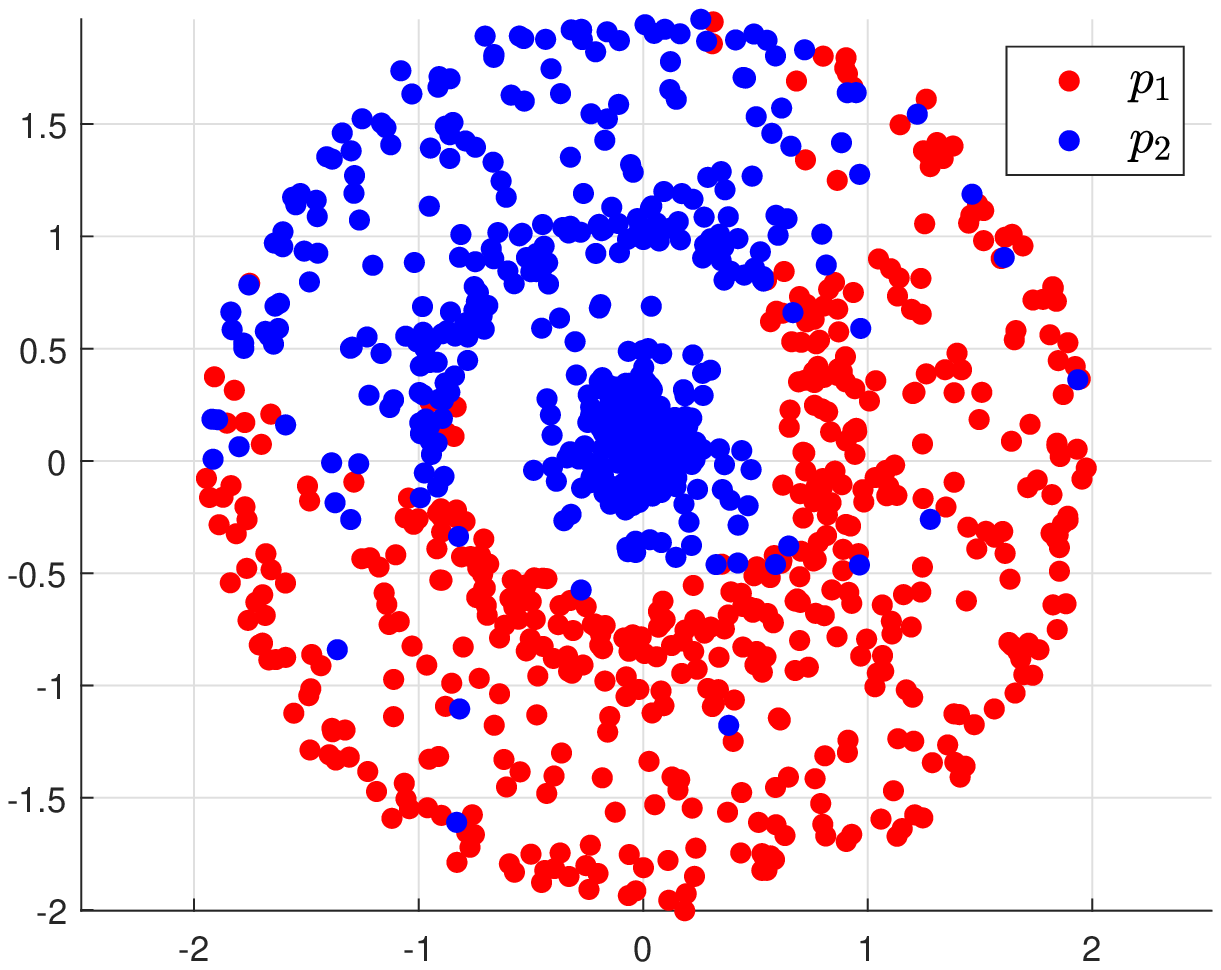}  \label{fig:t-sne perp 100}
    }
    \subfloat[][t-SNE with doubly-stochastic affinity $\widetilde{W}^{(d)}$] 
  	{
    \includegraphics[width=0.3\textwidth]{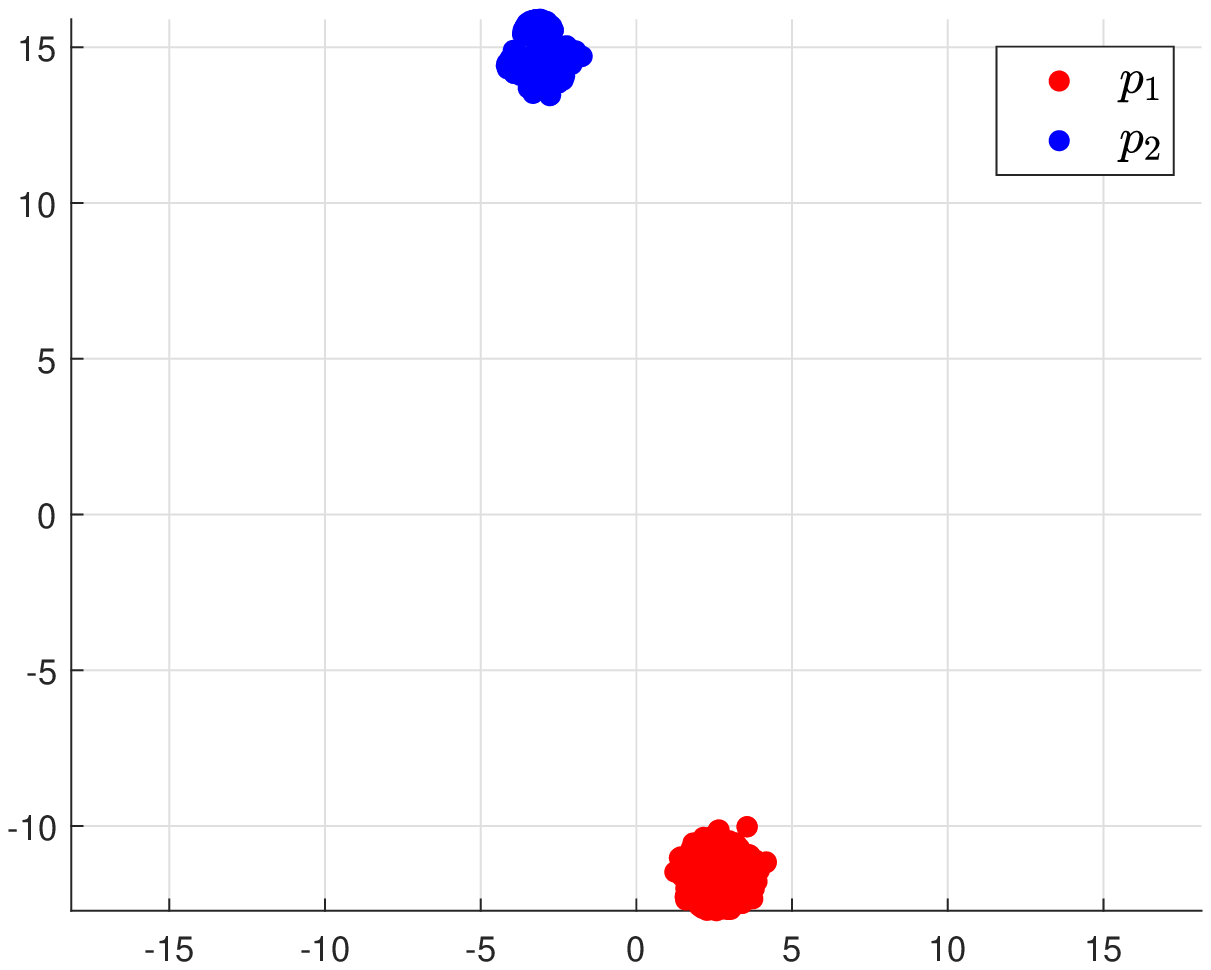} \label{fig:t-sne doubly-stochastic}
    } 

    }
    \caption
    {Two-dimensional visualization from t-SNE with different perplexity values (Figures~\ref{fig:t-sne perp 10},\ref{fig:t-sne perp 30},\ref{fig:t-sne perp 100}), and from t-SNE modified to use the doubly-stochastic affinity matrix $\widetilde{W}^{(d)}$ (Figure~\ref{fig:t-sne doubly-stochastic}). The dataset is a simulated single-cell RNA sequence data (see~\eqref{eq:scRNAseq data gen 1}--\eqref{eq:scRNAseq data gen 3}) with $n=1000$, $m=4000$.} \label{fig:t-SNE single-cell RNA seq}
    \end{figure} 
    
\subsubsection{Experimental data}
In our second example for scRNA-seq, we use an experimental dataset of purified Peripheral Blood Mononuclear Cells (PBMC)~\cite{zheng2017massively}, which includes $94654$ cells and $32733$ genes. This dataset is particularly advantageous for our purposes since each cell in the experiment was labeled according to a known cell type (with $10$ different types in total). While this particular dataset does not include different experimental batches (as in the previous simulated example), there is nonetheless inherent variability in the read counts associated with different cell types. To demonstrate the advantage of the doubly-stochastic normalization over the row-stochastic or symmetric normalizations, we focus on the two cell types in the data that have the largest difference in their read counts (on average), which are the CD14 and CD34 cells. Specifically, the CD34 cells have roughly four times more read counts on average than the CD14 cells. 

We randomly sampled $n_1= 10^3$ cells out of all CD14 cells, sampled $n_2= 10^3$ cells out of all CD34 cells, and concatenated their gene expressions (using all genes) into a matrix of size $32733 \times (n_1+n_2)$. We then normalized each column of this matrix to sum to $1$ (which is a standard procedure used in scRNA-seq for normalizing the read count of each cell, see also Section~\ref{subsubsec:scRNA-seq simulated data}), and denoted the resulting columns by $\widetilde{\mathbf{x}}_1,\ldots,\widetilde{\mathbf{x}}_{n_1+n_2}$. That is, $\widetilde{\mathbf{x}}_1,\ldots,\widetilde{\mathbf{x}}_{n_1}$ are the normalized gene expressions of the sampled CD14 cells, and $\widetilde{\mathbf{x}}_{n_1+1},\ldots,\widetilde{\mathbf{x}}_{n_1+n_2}$ are the normalized gene expressions of the sampled CD34 cells.  We then formed the kernel matrix $\widetilde{K}$ of~\eqref{eq:K def} with $\varepsilon = 5\cdot 10^{-4}$, computed the corresponding matrix $\widetilde{W}^{(d)}$ using Algorithm~\ref{alg:SK sym} with $\delta=10^{-12}$, and evaluated the matrices $\widetilde{W}^{(r)}$, $\widetilde{W}^{(s)}$ according to~\eqref{eq:W_r def} and~\eqref{eq:W_s def}. We mention that other values of $\varepsilon$ produce similar results to what we report next.

Figure~\ref{fig:scRNA-seq real data affinities} illustrates the values (in logarithmic scale) of the obtained affinity matrices $\widetilde{W}^{(d)}$, $\widetilde{W}^{(r)}$, $\widetilde{W}^{(s)}$. It is evident that the affinity matrices form the doubly-stochastic and the symmetric normalizations accurately reflect the structure of the data, as they assign large affinities between cells of the same type and small affinities between cells of different type. The row-stochastic normalization, on the other hand, assigns large affinities between CD14 cells and CD34 cells, which is clearly a bias from the fact that the CD34 cells are less noisy compared to the CD14 cells (due to the difference between their read counts). 
\begin{figure} 
  \centering
  	{
  	\subfloat[][Doubly-stochastic normalization~\eqref{eq:W_d def}]  
  	{
    \includegraphics[width=0.33\textwidth]{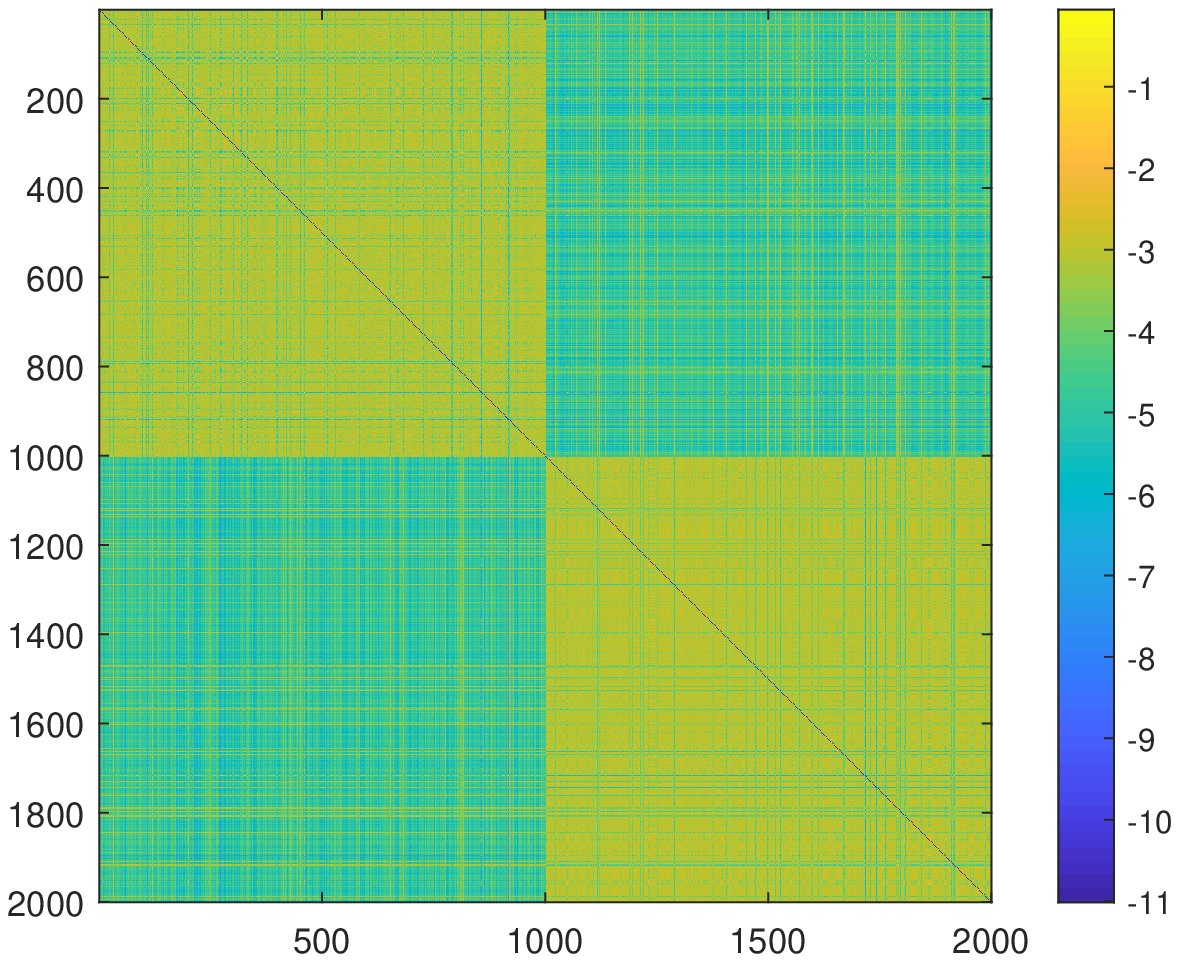} 
    }
    \subfloat[][Row-stochastic normalization~\eqref{eq:W_r def}] 
  	{
    \includegraphics[width=0.33\textwidth]{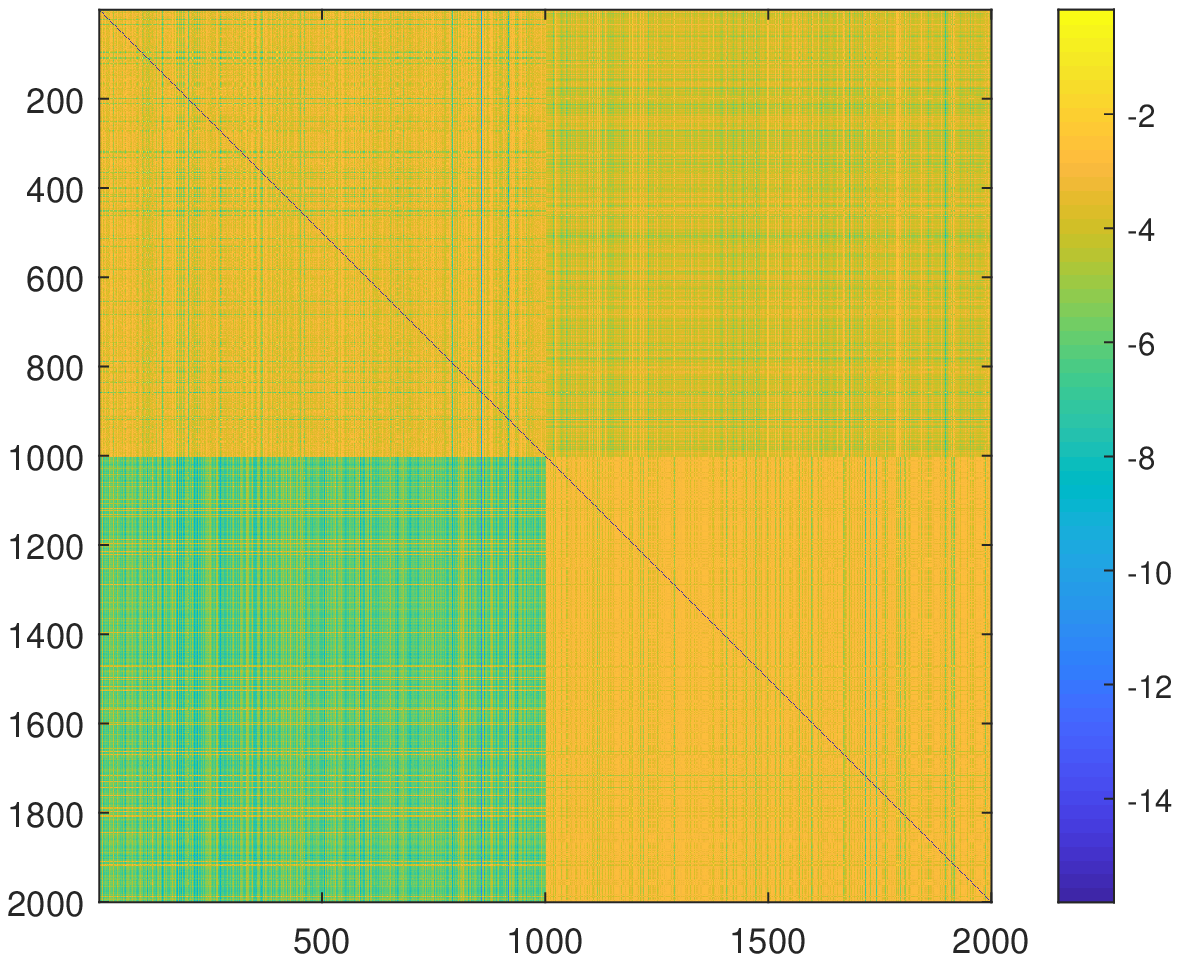} \label{fig:Noisy affinity matrices small epsilon, row-stochastic}
    }
    \subfloat[][Symmetric normalization~\eqref{eq:W_s def}]  
  	{
    \includegraphics[width=0.33\textwidth]{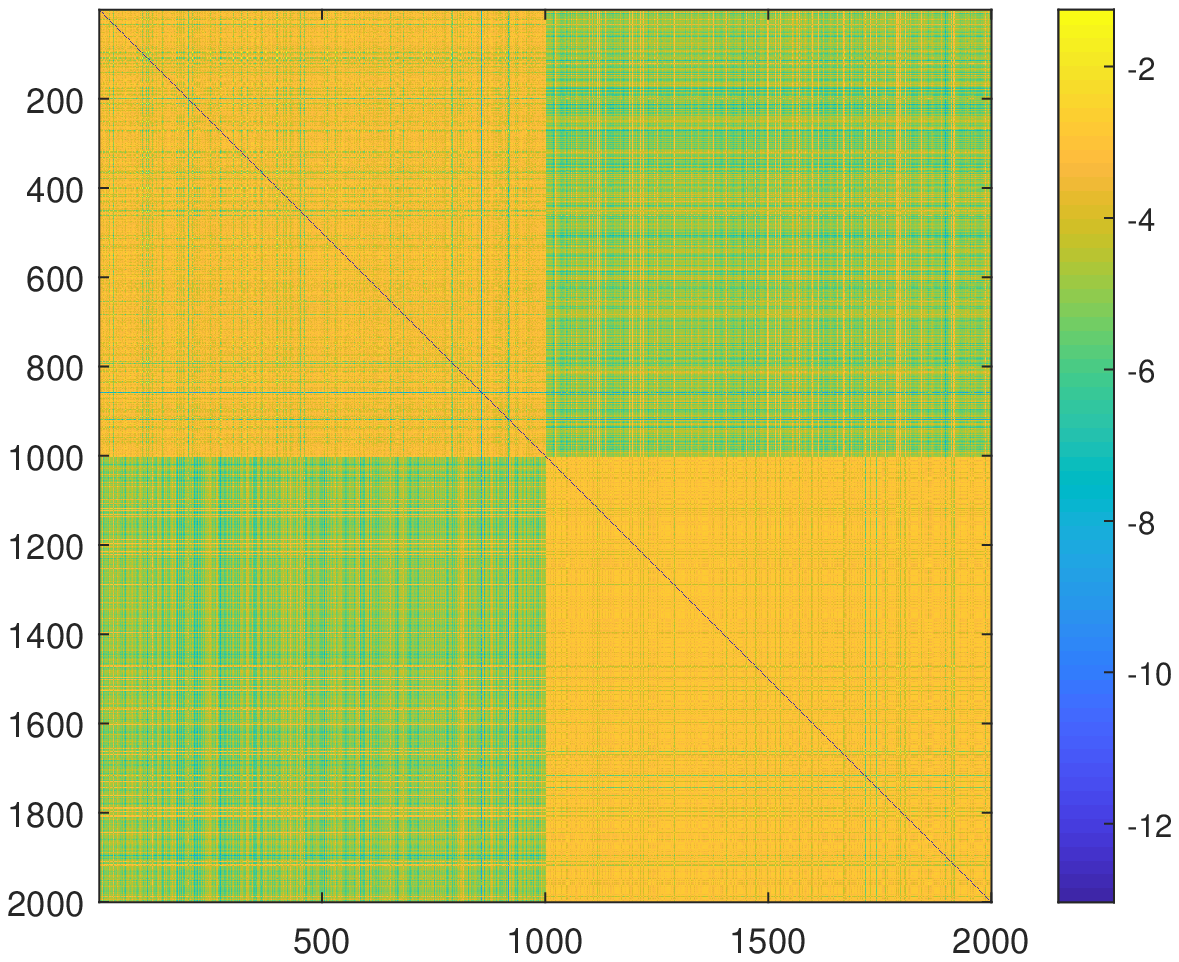}
    } 
    }
    \caption
    {Entries of the affinity matrices obtained from different normalizations in logarithmic scale (from left to right: $\log_{10}(\widetilde{W}^{(d)})$, $\log_{10}(\widetilde{W}^{(r)})$, $\log_{10}(\widetilde{W}^{(s)})$), computed from $1000$ CD14 cells and $1000$ CD34 cells from the purified PBMC dataset~\cite{zheng2017massively}.} \label{fig:scRNA-seq real data affinities}
    \end{figure} 

Aside from the qualitative differences between the affinity matrices depicted in Figure~\ref{fig:scRNA-seq real data affinities}, it is of interest to quantitatively assess their accuracy. Even though we do not have access to the corresponding clean affinity matrices ${W}^{(d)}$, ${W}^{(r)}$, ${W}^{(s)}$, we can make use of the logical reasoning that the nearest neighbors of any given reference cell, defined as the cells with largest affinities to that reference cell, should belong to the same cell type as the reference cell. Following this logic, for each cell $i$ we first found its $k$ nearest neighbors, which are given by the $k$ indices with largest entries in the $i$'th row of a given affinity matrix ($\widetilde{W}^{(d)}$, $\widetilde{W}^{(r)}$, or $\widetilde{W}^{(s)}$). Then, as a measure of error, for each cell $i$ we found the proportion of its $k$ nearest neighbors that do not share its cell type. We averaged this proportion for all cells $i=1,\ldots,n_1+n_2$, and furthermore averaged these results over $20$ randomized trials of sampling from the CD14 and CD34 cells.

Figure~\ref{fig:scRNA-seq real data error in nearest neighbors} depicts the resulting proportions of inconsistent cell types as a function of the number of nearest neighbors $k$ for each of the affinity matrices $\widetilde{W}^{(d)}$, $\widetilde{W}^{(r)}$, and $\widetilde{W}^{(s)}$. It is evident that according to our measure of cell type consistency, the affinity matrix from the doubly-stochastic normalization provides the lowest proportion of incorrectly determined near neighbors for all choices of $k$, establishing the advantage of the doubly-stochastic normalization. In particular, the proportion of the nearest neighbor (i.e., $k=1$) with inconsistent cell type using the doubly-stochastic normalization is about $5$ times less than that of the symmetric normalization, and about $20$ times less than that of the row-stochastic normalization. 

\begin{figure} 
  \centering
  	{
    \includegraphics[width=0.5\textwidth]{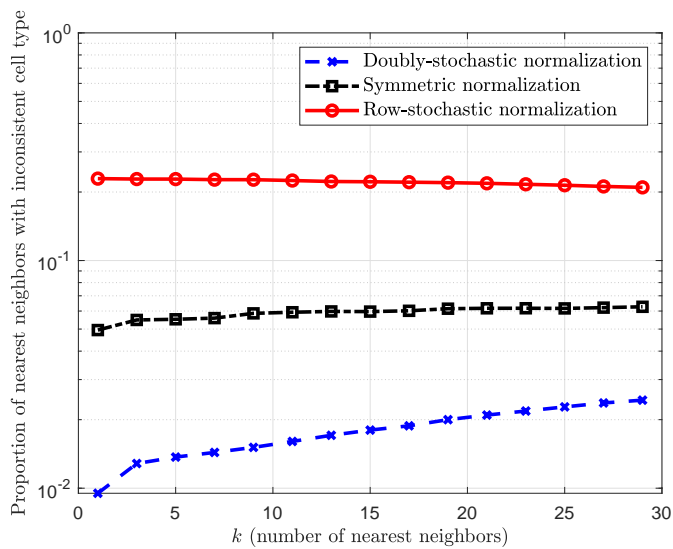} 
    }
    \caption
    {Inconsistency between the type of a cell and the types of its nearest neighbours, according to each of the affinity matrices $\widetilde{W}^{(d)}$, $\widetilde{W}^{(r)}$, $\widetilde{W}^{(s)}$, and using $1000$ CD14 cells and $1000$ CD34 cells from the purified PBMC dataset~\cite{zheng2017massively}. The $x$-axis is the number of nearest neighbors chosen for each cell (according to the largest entries in each row of a given affinity matrix), and the $y$-axis is the proportion of these nearest neighbors with a cell type that is different from the type of the reference cell (whose nearest neighbors are considered), averaged over all cells in the dataset and over $20$ randomized trials (of sampling from the CD14 cells and the CD34 cells). } \label{fig:scRNA-seq real data error in nearest neighbors}
    \end{figure} 

\subsubsection{Validity of the asymptotic model for scRNA-seq} \label{subsubsec:scRNA-seq model asymptotic discussion}
We conclude our scRNA-seq example with a brief discussion on the validity of Theorem~\ref{thm:Main theorem} in an asymptotic setting of scRNA-seq, where the number of cells $n$ and the kernel width $\varepsilon$ are fixed, and the number of genes $m$ is increasing. Suppose that the gene expression levels for each cell are sampled from a multinomial variable~\cite{townes2019feature}, where the number of multinational trials may differ between cells. That is, let $\hat{\mathbf{x}}_{i}\sim\operatorname{Multinomial(r_i,\mathbf{p}_i)}$, where $r_i$ is the read count for the $i$'th cell, $\mathbf{p}_{i}[j]$ is the underlying proportion for the expression level of the $j$'th gene in the $i$'th cell, and $\sum_{j=1}^m \mathbf{p}_{i}[j] = 1$ for all $i$. When preprocessing scRNA-seq data, an important first step is to normalize the gene expression levels by the number of read counts. Hence, we define $\widetilde{\mathbf{x}}_{i} = \hat{\mathbf{x}}_{i}/r_i$, so that $\mathbb{E}[\widetilde{\mathbf{x}}_{i}] = \mathbf{x}_{i} =  \mathbf{p}_{i}$. In this case, the matrix $\Sigma_i = \mathbb{E}[\eta_i \eta_i^T]$ is equal to the covariance matrix of the multinomial $\hat{\mathbf{x}}_i$ divided by $r_i^2$. Using Theorem 1 in~\cite{benasseni2012new} (which provides an inequality on the eigenvalues of the covariance of a multinomial),
\begin{equation}
    \Vert \Sigma_i \Vert_2 \leq \max_{j=1,\ldots,m} \mathbf{p}_{i}[j]/r_i \leq 1/r_i.
\end{equation}
To consider an asymptotic setting, we think of an experimental setup where an increasing number of genes is sequenced. It is clear that in this experimental setup the number of read counts $r_i$ for each cell needs to be controlled appropriately as a function of $m$. Since $\Vert \mathbf{x}_i \Vert^2_2 = \sum_{j=1}^m (\mathbf{p}_{i}[j])^2 \leq \sum_{j=1}^m \mathbf{p}_{i}[j] = 1$, it is evident that the conditions in Theorem~\ref{thm:Main theorem} hold if $r_i = \Omega(m)$, a condition which was recently identified as important for large-scale scRNA-seq experiments~\cite{zhang2020determining}.

\section{Summary and discussion}
In this work, we investigated the robustness of the doubly-stochastic normalization to heteroskedastic noise, both from a theoretical perspective and from a numerical one. Our results imply that the doubly-stochastic normalization is advantageous over the popular row-stochastic and symmetric normalizations, particularly when the data at hand is high-dimensional and suffers from inherent heteroskedasticity. Moreover, our experiments suggest that incorporating the doubly-stochastic normalization into various data analysis, visualization, and processing techniques for real-world datasets can be worthwhile. The doubly-stochastic normalization is particularly appealing due to is simplicity, solid theoretical foundation, and resemblance to the row-stochastic/symmetric normalizations (which proved useful in countless applications).

The results reported in this work naturally give rise to several possible future research directions. On the theoretical side, it is of interest to characterize the convergence rate of $\widetilde{W}^{(d)}$ to $W^{(d)}$ also in terms of the number of points $n$ and the covariance matrices $\{\Sigma_i\}$ explicitly. As a particular simpler case, one may consider the high-dimensional setting where both $n$ and $m$ tend to infinity, while the quantity $n/m$ is fixed (or tends to a fixed constant). 
On the practical side, it is of interest to investigate how to best incorporate the affinity matrix from the doubly-stochastic normalization into data analysis and visualization techniques. To that end, it is desirable to derive a method for picking the kernel parameter $\varepsilon$ automatically, or in more generality, to determine how to make use of a variable kernel width (similarly to~\cite{zelnik2005self}) while retaining the robustness to heteroskedastic noise.

\section{Acknowledgements}
We would like to thank Boaz Nadler for his useful comments and suggestions. 
B.L, R.R.C, and Y.K. acknowledge support by NIH grant R01GM131642. R.R.C and Y.K acknowledge support by NIH grant R01HG008383. Y.K. acknowledges support by NIH grant 2P50CA121974. R.R.C acknowledges support by NIH grant 5R01NS10004903.

\begin{appendices}

\section{Proof of Lemma~\ref{lem:existence and uniquness}} \label{appendix:existence and uniqueness proof}
We first recall the definition of a \textit{fully indecomposable} matrix~\cite{bapat1997nonnegative}. A matrix $B$ is called fully indecomposable if there are \textbf{no} permutation matrices $P$ and $Q$ such that 
\begin{equation}
    P B Q = 
    \begin{bmatrix}
    B_1 & \mathbf{0} \\
    B_2 & B_3
    \end{bmatrix}, \label{eq:fully indecomposable form}
\end{equation}
with $B_1$ square. We now proceed to show that $A$ from Lemma~\ref{lem:existence and uniquness} is fully indecomposable.
Since the only zeros in $A$ are on its main diagonal, there is only one zero in every row and every column of $A$. Consequently, any permutation of the rows and columns of $A$ would retain this property, namely have a single zero in every row and every column. Therefore, if $n> 2$, it is impossible to find $P$ and $Q$ such that~\eqref{eq:fully indecomposable form} would hold for $B=A$, since there cannot be a block of zeros in $P A Q$ whose number of rows or columns is greater than $1$. Hence, $A$ is fully indecomposable, and the existence and uniqueness of $\mathbf{d} = [d_1,\ldots,d_n] > 0$ follows from Lemma 4.1 in~\cite{knight2008sinkhorn}.

\section{Proof of Theorem~\ref{thm:Main theorem}} \label{sec:proof of main theorem}
Throughout this proof we omit the superscript ${(m)}$ from the quantities $\mathbf{x}_i^{(m)}$, $\widetilde{\mathbf{x}}_i^{(m)}$, $\eta_i^{(m)}$, $\Sigma_{i}^{(m)}$, $K^{(m)}$, $\widetilde{K}^{(m)}$, ${W}^{(d),(m)}$, $\widetilde{W}^{(d),(m)}$, and it should be noted that the resulting notation corresponds to sequences in the dimension $m$ where $n$ and $\varepsilon$ are fixed. 

Let us define
\begin{equation}
    {u}_i = {d}_i \operatorname{exp}({-\Vert \mathbf{x}_i \Vert^2/\varepsilon}), \qquad {H}_{i,j} = \begin{dcases}
    \operatorname{exp}({2\langle \mathbf{x}_i,\mathbf{x}_j \rangle/\varepsilon}), & i\neq j,\\
    0, & i=j, \label{eq:u_tilde and H_tilde def}
    \end{dcases}
    \end{equation}
for $i,j=1,\ldots,n$. By the definition of ${W}^{(d)}$ in~\eqref{eq:W_d def}, for $i\neq j$ we can write
\begin{equation}
    {W}^{(d)}_{i,j} = {d}_i \operatorname{exp}(-\Vert \mathbf{x}_i - \mathbf{x}_j \Vert^2/\varepsilon) {d}_j = {d}_i e^{-\Vert \mathbf{x}_i \Vert^2/\varepsilon} e^{2\langle \mathbf{x}_i,\mathbf{x}_j \rangle/\varepsilon} e^{-\Vert \mathbf{x}_j\Vert ^2/\varepsilon} {d}_j = {u}_i {H}_{i,j} {u}_j.
\end{equation}
Analogously, we define $\widetilde{H}_{i,j}$ and $\widetilde{u}_i$ by replacing $\{\mathbf{x}_i\}$ and $\{d_i\}$ in~\eqref{eq:u_tilde and H_tilde def} with $\{\widetilde{\mathbf{x}}_i\}$ and $\{\widetilde{d}_i\}$, respectively, and we have that $\widetilde{W}_{i,j} = \widetilde{u}_i \widetilde{H}_{i,j} \widetilde{u}_j$.

Let $\odot$ denote the Hadamard (element-wise) product, $\mathbf{u}=[u_1,\ldots,u_n]^T$, and  $\widetilde{\mathbf{u}}=[\widetilde{u}_1,\ldots,\widetilde{u}_n]^T$. We can write
\begin{align}
    \Vert \widetilde{W}^{(d)} - W^{(d)} \Vert_F &= \Vert \operatorname{diag}(\widetilde{\mathbf{u}}) \widetilde{H} \operatorname{diag}(\widetilde{\mathbf{u}}) - \operatorname{diag}({\mathbf{u}}) H \operatorname{diag}({\mathbf{u}})\Vert_F \nonumber \\
    &= \Vert ({\mathbf{u}} {\mathbf{u}}^T)\odot(\widetilde{H} - H) + \widetilde{H} \odot (\widetilde{\mathbf{u}} \widetilde{\mathbf{u}}^T - \mathbf{u}\mathbf{u}^T)\Vert_F \nonumber \\ 
    &\leq \max_{i,j} \{ {u}_i {u}_j\} \cdot \Vert \widetilde{H} - H \Vert_F + \max_{i,j} \{\widetilde{H}_{i,j}\} \cdot \Vert \widetilde{\mathbf{u}} \widetilde{\mathbf{u}}^T - \mathbf{u}\mathbf{u}^T \Vert_F. \label{eq:W_tilde - W frob bound}
\end{align}
We begin by bounding the quantity $\Vert \widetilde{H} - H \Vert_F$, which is the subject of the following Lemma.
\begin{lem} \label{lem:H_tilde error}
For all $i\neq j$,
\begin{equation}
    \vert \widetilde{H}_{i,j} - H_{i,j} \vert = \mathcal{O}_p(m^{-1/2}).
\end{equation}
\end{lem}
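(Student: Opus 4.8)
The plan is to reduce the claim to a concentration statement about the noisy inner products $\langle \widetilde{\mathbf{x}}_i, \widetilde{\mathbf{x}}_j \rangle$ for $i\neq j$, and then transfer that to $\widetilde{H}_{i,j}$ through the local Lipschitz behaviour of the exponential. Writing $\widetilde{\mathbf{x}}_i = \mathbf{x}_i + \eta_i$ and expanding,
\[
    \langle \widetilde{\mathbf{x}}_i, \widetilde{\mathbf{x}}_j \rangle = \langle \mathbf{x}_i, \mathbf{x}_j \rangle + \langle \mathbf{x}_i, \eta_j \rangle + \langle \eta_i, \mathbf{x}_j \rangle + \langle \eta_i, \eta_j \rangle.
\]
The conceptual heart of the argument is that, since $i\neq j$ forces $\eta_i$ and $\eta_j$ to be independent and mean-zero (and the $\mathbf{x}_i$ are deterministic), each of the three cross terms has expectation zero, so $\mathbb{E}\langle \widetilde{\mathbf{x}}_i, \widetilde{\mathbf{x}}_j \rangle = \langle \mathbf{x}_i, \mathbf{x}_j \rangle$ with no additive bias. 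This is exactly the off-diagonal structure that the doubly-stochastic normalization preserves, in contrast with the squared norms appearing on the diagonal. It then remains to show each cross term is $\mathcal{O}_p(m^{-1/2})$, which I would establish via Chebyshev's inequality after bounding variances.

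For the two linear terms, $\operatorname{Var}(\langle \mathbf{x}_i, \eta_j \rangle) = \mathbf{x}_i^T \Sigma_j \mathbf{x}_i \leq \Vert \Sigma_j \Vert_2 \Vert \mathbf{x}_i \Vert^2 \leq C_\eta m^{-1}$, using the hypotheses $\Vert \Sigma_j \Vert_2 \leq C_\eta m^{-1}$ and $\Vert \mathbf{x}_i \Vert \leq 1$, and symmetrically for $\langle \eta_i, \mathbf{x}_j \rangle$. The bilinear term $\langle \eta_i, \eta_j \rangle$ is the main computation: by independence of $\eta_i$ and $\eta_j$ the expectation factors, giving
\[
    \operatorname{Var}(\langle \eta_i, \eta_j \rangle) = \mathbb{E}\langle \eta_i, \eta_j \rangle^2 = \sum_{k,l} (\Sigma_i)_{k,l} (\Sigma_j)_{k,l} = \operatorname{Tr}(\Sigma_i \Sigma_j).
\]
I would then invoke the trace inequality $\operatorname{Tr}(\Sigma_i \Sigma_j) \leq \Vert \Sigma_i \Vert_2 \operatorname{Tr}(\Sigma_j)$ for positive semidefinite matrices, combined with $\operatorname{Tr}(\Sigma_j) \leq m \Vert \Sigma_j \Vert_2 \leq C_\eta$, to conclude $\operatorname{Tr}(\Sigma_i \Sigma_j) \leq C_\eta^2 m^{-1}$. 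Hence all three terms have variance $\mathcal{O}(m^{-1})$, and Chebyshev yields $\langle \widetilde{\mathbf{x}}_i, \widetilde{\mathbf{x}}_j \rangle - \langle \mathbf{x}_i, \mathbf{x}_j \rangle = \mathcal{O}_p(m^{-1/2})$.

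Finally I would pass from the inner products to $\widetilde{H}_{i,j} = \exp(2\langle \widetilde{\mathbf{x}}_i, \widetilde{\mathbf{x}}_j \rangle/\varepsilon)$. Setting $a = 2\langle \mathbf{x}_i, \mathbf{x}_j \rangle/\varepsilon$, Cauchy--Schwarz with $\Vert \mathbf{x}_i \Vert \leq 1$ gives $|a| \leq 2/\varepsilon$, so $H_{i,j} = e^a \leq e^{2/\varepsilon}$ is bounded uniformly in $m$. With $\Delta_m = 2(\langle \widetilde{\mathbf{x}}_i, \widetilde{\mathbf{x}}_j \rangle - \langle \mathbf{x}_i, \mathbf{x}_j \rangle)/\varepsilon = \mathcal{O}_p(m^{-1/2})$, we have $|\widetilde{H}_{i,j} - H_{i,j}| = e^a |e^{\Delta_m} - 1|$, and since $\Delta_m \to 0$ in probability the elementary inequality $|e^x - 1| \leq |x| e^{|x|}$ gives $|e^{\Delta_m} - 1| = \mathcal{O}_p(m^{-1/2})$; multiplying by the bounded deterministic factor $e^a$ preserves the rate. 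The main obstacle I expect is the bilinear term: one must carefully use the independence of $\eta_i$ and $\eta_j$ to identify its second moment as $\operatorname{Tr}(\Sigma_i \Sigma_j)$, and then bound that trace so the $m^{-1}$ decay survives even though $\operatorname{Tr}(\Sigma_j) = \mathbb{E}\Vert \eta_j \Vert^2$ may itself be of order one.
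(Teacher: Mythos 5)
Your proposal is correct and follows essentially the same route as the paper's proof: the same decomposition of $\langle \widetilde{\mathbf{x}}_i, \widetilde{\mathbf{x}}_j \rangle$ into three mean-zero cross terms, the same variance bounds (including identifying $\operatorname{Var}\langle \eta_i,\eta_j\rangle = \operatorname{Tr}(\Sigma_i\Sigma_j)$ and bounding it by $C_\eta^2 m^{-1}$, via an equivalent trace inequality), Chebyshev, and then transferring the $\mathcal{O}_p(m^{-1/2})$ rate through the exponential using the boundedness of $H_{i,j}$. Your handling of the last step via $|e^{x}-1|\leq |x|e^{|x|}$ is a slightly more explicit version of the paper's first-order Taylor argument, but it is not a different approach.
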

\begin{proof}
Let us write
\begin{equation}
    \langle \widetilde{\mathbf{x}}_i , \widetilde{\mathbf{x}}_j \rangle = \langle \mathbf{x}_i , \mathbf{x}_j \rangle + \langle \mathbf{x}_i , \eta_j \rangle + \langle \eta_i , \mathbf{x}_j \rangle + \langle \eta_i , \eta_j \rangle. \label{eq:x_i_tilde x_j_tilde inner product}
\end{equation}
According to~\eqref{eq:noise def} and the conditions in Theorem~\ref{thm:Main theorem}, for $i\neq j$ we have
\begin{align}
    &\mathbb{E}\left\{ \langle \mathbf{x}_i , \eta_j \rangle + \langle \eta_i , \mathbf{x}_j \rangle + \langle \eta_i , \eta_j \rangle \right\} = 0, \\ 
    & \operatorname{Var}\left\{ \langle \mathbf{x}_i , \eta_j \rangle \right\} = \mathbb{E} [\mathbf{x}_i^T \eta_j \eta_j^T \mathbf{x}_i ] = \mathbf{x}_i^T \Sigma_j \mathbf{x}_i \leq \Vert \mathbf{x}_i \Vert^2 \Vert \Sigma_j \Vert_2 \leq C_\eta m^{-1}, \\ 
    & \operatorname{Var}\left\{ \langle \mathbf{x}_j , \eta_i \rangle \right\} = \mathbb{E} [\mathbf{x}_j^T \eta_i \eta_i^T \mathbf{x}_j ] = \mathbf{x}_j^T \Sigma_i \mathbf{x}_j \leq \Vert \mathbf{x}_j \Vert^2 \Vert \Sigma_i \Vert_2 \leq C_\eta m^{-1}, \\ 
    &\operatorname{Var}\left\{ \langle \eta_i , \eta_j \rangle \right\} = \mathbb{E}[\eta_i^T \eta_j \eta_j^T \eta_i] = \sum_{k=1}^m \sum_{\ell=1}^n \mathbb{E} [\eta_{i}[k] \eta_{i}[\ell]] ] \mathbb{E} \left[ \eta_{j}[k] \eta_{j}[\ell]\right] \nonumber \\ 
    &= \operatorname{Tr}\{ \Sigma_i \Sigma_j\} \leq m \Vert \Sigma_i \Sigma_j \Vert_2 \leq m \Vert \Sigma_i \Vert_2 \Vert \Sigma_j \Vert_2 \leq C_\eta^2 m^{-1}. 
\end{align}
Therefore, 
\begin{equation}
    \operatorname{Var}\left\{ \langle \mathbf{x}_i , \eta_j \rangle + \langle \eta_i , \mathbf{x}_j \rangle + \langle \eta_i , \eta_j \rangle \right\} \leq {\frac{6C_\eta+ 3C_\eta^2}{m}},
\end{equation}
where we used the inequality $(a+b+c)^2\leq 3(a^2+b^2+c^2)$.
Consequently, Chebyshev's inequality yields that for any $p>0$
\begin{equation}
    \operatorname{Pr}\left\{\left\vert \langle \mathbf{x}_i , \eta_j \rangle + \langle \eta_i , \mathbf{x}_j \rangle + \langle \eta_i , \eta_j \rangle \right\vert > \sqrt{\frac{6C_\eta + 3C_\eta^2}{m(1-p)}} \right\} \leq p,
\end{equation}
which implies
\begin{equation}
   \left\vert \langle \mathbf{x}_i , \eta_j \rangle + \langle \eta_i , \mathbf{x}_j \rangle + \langle \eta_i , \eta_j \rangle \right\vert = \mathcal{O}_p\left(m^{-1/2}\right).
\end{equation}
Using the above for $i\neq j$, a first-order Taylor expansion of $\operatorname{exp}(y)$ around $y=0$ gives
\begin{equation}
    \operatorname{exp}\{ 2(\langle \mathbf{x}_i , \eta_j \rangle + \langle \eta_i , \mathbf{x}_j \rangle + \langle \eta_i , \eta_j \rangle)/\varepsilon\} = 1 + \mathcal{O}_p(m^{-1/2}),
\end{equation}
and by~\eqref{eq:x_i_tilde x_j_tilde inner product} we have
\begin{equation}
    \widetilde{H}_{i,j} = e^{2\langle \widetilde{\mathbf{x}}_i,\widetilde{\mathbf{x}}_j \rangle/\varepsilon} = e^{2\langle \mathbf{x}_i,\mathbf{x}_j \rangle/\varepsilon} (1 + \mathcal{O}_p(m^{-1/2})) = H_{i,j} (1 + \mathcal{O}_p(m^{-1/2})) = H_{i,j} + \mathcal{O}_p(m^{-1/2}),
\end{equation}
where we used $H_{i,j} = e^{\langle \mathbf{x}_i,\mathbf{x}_j \rangle/\varepsilon} \leq e^{\Vert \mathbf{x}_i\Vert \Vert \mathbf{x}_j\Vert /\varepsilon} \leq e^{1/\varepsilon}$ in the last equality.
\end{proof}
Using Lemma~\ref{lem:H_tilde error} and applying the union bound on the off-diagonal entries of $\widetilde{H} - H$, we obtain
\begin{equation}
    \Vert \widetilde{H} - H \Vert_F  = \mathcal{O}_p (m^{-1/2}), \label{eq:H_tilde - H frob bound}
\end{equation}

Continuing, we bound the quantities $\max_{i,j}\{u_i u_j\}$ and $\max_{i,j} \{\widetilde{H}_{i,j}\}$ from~\eqref{eq:W_tilde - W frob bound}. Towards that end, we have the following result.
\begin{prop} \label{prop:H_ij and u_i bounds}
Under the conditions of Theorem~\ref{thm:Main theorem}, $\{H_{i,j}\}_{i\neq j}$ and $\{u_i\}_{i=1}^n$ are upper- and lower-bounded by positive constants independent of $m$.
\end{prop}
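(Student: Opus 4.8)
The plan is to treat the two families separately, handling the elementary bounds on $\{H_{i,j}\}$ first and then exploiting the doubly-stochastic constraint to control $\{u_i\}$, whose definition involves the implicit scaling factors $\{d_i\}$.

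First I would bound $\{H_{i,j}\}_{i\neq j}$ directly. By Cauchy--Schwarz and the hypothesis $\Vert \mathbf{x}_i\Vert \leq 1$ we have $\lvert\langle \mathbf{x}_i,\mathbf{x}_j\rangle\rvert \leq 1$, so that $H_{i,j} = \exp(2\langle\mathbf{x}_i,\mathbf{x}_j\rangle/\varepsilon)$ satisfies $c_1 \leq H_{i,j} \leq c_2$ for all $i\neq j$, where $c_1 := e^{-2/\varepsilon}$ and $c_2 := e^{2/\varepsilon}$ depend only on $\varepsilon$. These are exactly the sought uniform bounds for $H$, and in fact the remainder of the argument will only use that $H$ is symmetric with zero diagonal and off-diagonal entries in $[c_1,c_2]$.

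The substance lies in bounding $\{u_i\}$. The starting point is the identity $W^{(d)}_{i,j} = u_i H_{i,j} u_j$ together with the fact that $W^{(d)}$ is symmetric and doubly-stochastic. Writing $S := \sum_{k=1}^n u_k$, the unit row-sum condition reads $u_i \sum_{j\neq i} H_{i,j} u_j = 1$ for every $i$. The key step is a ``complement-sum'' identity: summing the row-sums gives $\sum_{i\neq j} u_i H_{i,j} u_j = n$, and for any fixed $\ell$ the terms with $i=\ell$ and with $j=\ell$ each contribute exactly $1$ (by the row- and column-sum conditions), so that $\sum_{i,j\neq\ell,\, i\neq j} u_i H_{i,j} u_j = n-2$. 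Upper bounding the left side by $c_2(S-u_\ell)^2$ then yields $S - u_\ell \geq \sqrt{(n-2)/c_2}$ for every $\ell$. This is precisely where the hypothesis $n>2$ enters, and it is what prevents the scaling factors from concentrating on a single index.

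With this in hand the remaining bounds follow. Applying the row-sum condition at the index $\ell^\star$ achieving $u_{\max} := \max_i u_i$ and using $H_{\ell^\star,j}\geq c_1$ gives $1 \geq u_{\max}\, c_1 (S - u_{\max}) \geq u_{\max}\, c_1 \sqrt{(n-2)/c_2}$, hence $u_{\max} \leq \sqrt{c_2}/(c_1\sqrt{n-2})$, a constant independent of $m$. Consequently $S \leq n\, u_{\max}$ is also bounded, and the per-row relation $u_i = 1/\sum_{j\neq i}H_{i,j}u_j \geq 1/(c_2(S-u_i)) \geq 1/(c_2 S)$ produces a uniform positive lower bound on every $u_i$. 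I expect the main obstacle to be the upper bound, i.e., establishing the complement-sum inequality and using it to rule out blow-up of the scaling factors; once blow-up is excluded, the lower bound and the bounds on $H$ are comparatively routine. Since all resulting constants depend only on $n$ and $\varepsilon$ through $c_1,c_2$, they are independent of $m$, as required.
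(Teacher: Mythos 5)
Your proof is correct, and for the scaling factors it takes a genuinely different route from the paper's. The bounds on $\{H_{i,j}\}_{i\neq j}$ are obtained the same way in both arguments (Cauchy--Schwarz with $\Vert\mathbf{x}_i\Vert\leq 1$; you keep the factor $2$ from the definition of $H$, which the paper's displayed bound actually drops -- immaterial either way). For $\{u_i\}$, however, the paper argues by compactness: the set of symmetric matrices with zero diagonal and off-diagonal entries in a fixed positive interval is compact, and since $\mathbf{u}$ is uniquely determined by $H$ (Lemma~\ref{lem:existence and uniquness} applied to $H$), uniform upper and lower bounds must exist. That argument is short but non-constructive, and its rigor rests on continuity of the map $H\mapsto\mathbf{u}$, which the paper never establishes (uniqueness alone does not give continuity without, e.g., an a priori bound followed by a subsequence argument). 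You instead extract quantitative bounds directly from the doubly-stochastic constraints: the complement-sum identity $\sum_{i,j\neq\ell,\,i\neq j}u_iH_{i,j}u_j=n-2$ forces $S-u_\ell\geq\sqrt{(n-2)/c_2}$ for every $\ell$, which caps $u_{\max}\leq\sqrt{c_2}/\bigl(c_1\sqrt{n-2}\bigr)$, and then $u_i\geq 1/(c_2 S)$ finishes the lower bound; all steps check out. What your route buys: explicit constants depending only on $n$ and $\varepsilon$, a self-contained argument that sidesteps the continuity question entirely, and a transparent appearance of the hypothesis $n>2$ (which in the paper is hidden inside Lemma~\ref{lem:existence and uniquness}). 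What the paper's route buys: brevity, and the fact that it extends verbatim to any other compact family of admissible kernel matrices without redoing any estimates.
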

\begin{proof}
Observe that for $i\neq j$ and for all $m$,
\begin{equation}
    0 < e^{-1/\varepsilon} \leq H_{i,j} = e^{\langle \mathbf{x}_i,\mathbf{x}_j \rangle/\varepsilon} \leq e^{1/\varepsilon}. \label{eq:H_ij upper and lower bounds}
\end{equation}
Since the set of $n\times n$ matrices satisfying the above is compact, and using the fact that $\{u_i\}>0$ can be uniquely determined by $H$ (from Lemma~\ref{lem:existence and uniquness} applied to $H$), there must exist constants $c_u$, $C_u$ independent of $m$ such that
\begin{equation}
    0 < c_u \leq u_i \leq C_u, \label{eq:u_i upper and lower bounds}
\end{equation}
for all $i$ and $m$.
\end{proof}
Consequently, Proposition~\ref{prop:H_ij and u_i bounds} together with Lemma~\ref{lem:H_tilde error} guarantee that 
\begin{equation}
\begin{aligned}
    \max_{i,j}\{u_i u_j\} = \mathcal{O}_p(1), \qquad\qquad \max_{i,j}\{ \widetilde{H}_{i,j} \} = \max_{i,j}\{ {H}_{i,j} \}+ \mathcal{O}_p(m^{-1/2}) = \mathcal{O}_p(1), \label{eq:max_u_i_u_j and max_H_ij_tilde bounds}
    \end{aligned}
\end{equation}

Next, we turn to bound the quantity $\Vert \widetilde{\mathbf{u}} \widetilde{\mathbf{u}}^T - \mathbf{u}\mathbf{u}^T \Vert_F$ from~\eqref{eq:W_tilde - W frob bound}. From  Lemma~\ref{lem:existence and uniquness} applied to $H$ and $\widetilde{H}$, it follows that $\mathbf{u}$ and $\widetilde{\mathbf{u}}$ are unique. Additionally, by Lemma~\ref{lem:H_tilde error} it is clear that $\widetilde{H} \overset{p}{\longrightarrow} H$. Therefore, we also have that $\widetilde{\mathbf{u}} \overset{p}{\longrightarrow} \mathbf{u}$ (as otherwise we have a contradiction to the uniqueness of $\mathbf{u}$ and $\widetilde{\mathbf{u}}$).
Since $\widetilde{W}^{(d)}$ is doubly-stochastic, we have
\begin{equation}
    \sum_{j=1}^n \widetilde{W}^{(d)}_{i,j} = \sum_{j=1}^n \widetilde{u}_i \widetilde{H}_{i,j} \widetilde{u}_{j} = 1. \label{eq:W_tilde doubly-stochstic}
\end{equation}
Let us define the multivariate functions $\{f_i(A,\textbf{v})\}_{i=1}^n$, where $A\in\mathbb{R}^{n\times n}$, $\mathbf{v}\in\mathbb{R}^n$, as
\begin{equation}
    f_i(A,\mathbf{v}) = {\sum_{j=1}^m  v_i A_{i,j} v_j}. \label{eq:f_i def}
\end{equation}
To bound the error $\Vert \mathbf{\widetilde{u}} - \mathbf{u} \Vert$, we expand $f_i(A,\mathbf{v})$ around $(H,\mathbf{u})$ using a first-order Taylor expansion.
Towards that end, Proposition~\ref{prop:H_ij and u_i bounds} can be used to verify that the second-order partial derivatives of $f_i$ in the vicinity of $(H,\mathbf{u})$ are bounded by constants independent of $m$. In particular,
\begin{equation}
    \max_{(A,\mathbf{v})\in \mathcal{B}_1(H,\mathbf{u}) }\left\vert\frac{\partial^2 f_i}{\partial v_k \partial v_j}\right\vert = \mathcal{O}_p(1), \qquad
    \max_{(A,\mathbf{v})\in \mathcal{B}_1(H,\mathbf{u}) }\left\vert\frac{\partial^2 f_i}{\partial v_k A_{m,j}} \right\vert = \mathcal{O}_p(1), \qquad 
    \frac{\partial^2 f_i}{\partial A_{k,j} A_{m,\ell}} = 0,  \label{eq:second derivatives}
\end{equation}
for all $i,j,k,m,\ell$, where $\mathcal{B}_1(H,\mathbf{u})$ is a ball of radius $1$ in Euclidean space around $(H,\mathbf{u})$:
\begin{equation}
    \mathcal{B}_1(H,\mathbf{u}) = \left\{(A,\mathbf{v}): \Vert A - H \Vert_F^2 + \Vert \mathbf{v} - \mathbf{u}\Vert^2 \leq 1 \right\}.
\end{equation}
The choice of the radius of the ball $\mathcal{B}_1(H,\mathbf{u})$ is arbitrary, and is only required to guarantee that the point $(\widetilde{H},\widetilde{\mathbf{u}})$ is included in $\mathcal{B}_1(H,\mathbf{u})$ for sufficiently large $m$.
Therefore, by~\eqref{eq:W_tilde doubly-stochstic} and~\eqref{eq:second derivatives}, the first-order Taylor expansion of $f_i(A,\mathbf{v})$ around $(H,\mathbf{u})$ gives
\begin{align}
     1 = f_i(\widetilde{H},\widetilde{\mathbf{u}}) &= f_i(H,\mathbf{u}) + \sum_{j=1}^n \frac{\partial f_i}{\partial {v}_j}\bigg\vert_{(H,\mathbf{u})} (\widetilde{u}_j - u_j) + \sum_{k,j=1}^n \frac{\partial f_i}{\partial {A}_{k,j}}\bigg\vert_{(H,\mathbf{u})} (\widetilde{H}_{k,j} - H_{k,j}) \nonumber \\
    &+ \mathcal{O}_p(\Vert \widetilde{\mathbf{u}} - \mathbf{u} \Vert^2 ) + \mathcal{O}_p(\Vert \widetilde{{H}} - {H} \Vert_F^2 ). \label{eq:f_i Taylor}
\end{align}
where
\begin{align}
    \frac{\partial f_i}{\partial {v}_j}\bigg\vert_{(H,\mathbf{u})} = \begin{dcases}
    \frac{1}{u_i}, & j=i, \\
    u_i H_{i,j}, & j\neq i,
    \end{dcases}
    \qquad\qquad
    \frac{\partial f_i}{\partial {A}_{k,j}}\bigg\vert_{(H,\mathbf{u})} = 
    \begin{dcases}
    {u_i u_j}, & k=i, \\
    0, & k\neq i,
    \end{dcases}
\end{align}
and we used the fact that $\sum_{j=1}^n u_i H_{i,j} u_j = 1$ ($W^{(d)}$ is doubly-stochastic).
Next, using that $f_i(H,\mathbf{u}) = 1$, denoting $\widetilde{u}_j - u_j := e_j$, and multiplying both hand sides of~\eqref{eq:f_i Taylor} by $u_i$ ($u_i$ is bounded according to Proposition~\ref{prop:H_ij and u_i bounds}), we can write
\begin{equation}
    e_i = -\sum_{j\neq i}^n u_i^2 H_{i,j} e_j - \sum_{j=1}^n u_i^2 u_j (\widetilde{H}_{i,j} - H_{i,j}) +   \mathcal{O}_p(\Vert \mathbf{e}\Vert^2 ) + \mathcal{O}_p(\Vert \widetilde{{H}} - {H} \Vert_F^2 ), \label{eq:e_i taylor expression}
\end{equation}
where $\mathbf{e} = [e_1,\ldots,e_n]^T$.
Consequently, since $H_{i,i} = 0$, writing~\eqref{eq:e_i taylor expression} in matrix form gives
\begin{equation}
    (I_{n} + \left[\operatorname{diag}(\mathbf{u})\right]^2 H)\mathbf{e} = -\left[\operatorname{diag}(\mathbf{u})\right]^2 (\widetilde{H} - H)\mathbf{u} + \mathcal{O}_p(\Vert \mathbf{e} \Vert^2 ) + \mathcal{O}_p(\Vert \widetilde{{H}} - {H} \Vert_F^2 ), \label{eq:e Taylor matrix form}
\end{equation}
where $I_n$ is the $n\times n$ identity matrix. In order to bound the vector $\mathbf{e}$, we must be able to invert the matrix
\begin{equation}
    G:=I_{n} + \left[\operatorname{diag}(\mathbf{u})\right]^2 H, \label{eq:G def}
\end{equation}
which is the subject of the following Lemma. 
\begin{lem} \label{lem:G is invertible}
Under the conditions of Theorem~\ref{thm:Main theorem}, the matrix $G$ from~\eqref{eq:G def} is invertible for all $m$, and $\Vert G^{-1} \Vert_2 \leq C_G$ for some constant $C_G$ independent of $m$.
\end{lem}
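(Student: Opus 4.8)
The plan is to exhibit $G$ as being similar to a symmetric positive-definite matrix. Writing $D = \operatorname{diag}(\mathbf{u})$ and recalling that $W^{(d)} = D H D$, we have $H = D^{-1} W^{(d)} D^{-1}$, so $[\operatorname{diag}(\mathbf{u})]^2 H = D^2 H = D W^{(d)} D^{-1}$, and therefore from \eqref{eq:G def}
\[
  G = I_n + D W^{(d)} D^{-1} = D\,(I_n + W^{(d)})\,D^{-1}.
\]
Thus $G$ is similar (via $D$) to $I_n + W^{(d)}$, which is symmetric because $W^{(d)}$ is. Consequently $G^{-1} = D\,(I_n + W^{(d)})^{-1} D^{-1}$, and using $\Vert D\Vert_2 \le C_u$ and $\Vert D^{-1}\Vert_2 \le c_u^{-1}$ from Proposition~\ref{prop:H_ij and u_i bounds}, it suffices to bound $\Vert (I_n + W^{(d)})^{-1}\Vert_2$ by a constant independent of $m$, since
\[
  \Vert G^{-1}\Vert_2 \le \frac{C_u}{c_u}\,\Vert (I_n + W^{(d)})^{-1}\Vert_2.
\]

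Next I would show that $I_n + W^{(d)}$ is positive definite, i.e. that $\lambda_{\min}(W^{(d)}) > -1$. As $W^{(d)}$ is a symmetric doubly-stochastic matrix, all its eigenvalues lie in $[-1,1]$, and $\Vert (I_n + W^{(d)})^{-1}\Vert_2 = (1 + \lambda_{\min}(W^{(d)}))^{-1}$. Because all off-diagonal entries of $W^{(d)}$ are strictly positive (since $u_i > 0$ and $H_{i,j} \ge e^{-1/\varepsilon}$ by \eqref{eq:H_ij upper and lower bounds}), the associated Markov chain is irreducible, and for $n > 2$ it admits both a $2$-cycle $i \to j \to i$ and a $3$-cycle $i \to j \to k \to i$, hence is aperiodic. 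By Perron--Frobenius theory, $1$ is then the unique eigenvalue of modulus $1$; in particular $-1$ is not an eigenvalue and $1 + \lambda_{\min}(W^{(d)}) > 0$ for every $m$, which already gives invertibility of $G$.

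The remaining---and main---difficulty is to make this bound uniform in $m$, i.e. to rule out $\lambda_{\min}(W^{(d)})$ drifting toward $-1$ as $m \to \infty$. Here I would invoke a compactness argument of the same flavor as in Proposition~\ref{prop:H_ij and u_i bounds}. The feasible kernels $H$ form a compact set (symmetric, zero diagonal, off-diagonal entries confined to $[e^{-1/\varepsilon}, e^{1/\varepsilon}]$ by \eqref{eq:H_ij upper and lower bounds}); by Lemma~\ref{lem:existence and uniquness} each such $H$ determines $\mathbf{u}$ uniquely, and a standard subsequence-plus-uniqueness argument (as used above to deduce $\widetilde{\mathbf{u}} \to \mathbf{u}$, relying on the uniform bounds $c_u \le u_i \le C_u$) shows that $\mathbf{u}$, and hence $W^{(d)} = \operatorname{diag}(\mathbf{u}) H \operatorname{diag}(\mathbf{u})$, depends continuously on $H$. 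Since $\lambda_{\min}(\cdot)$ is continuous in the matrix entries and is strictly greater than $-1$ at every point of this compact set, its infimum over the set is attained and is itself strictly greater than $-1$. This produces a constant $\delta > 0$, independent of $m$, with $1 + \lambda_{\min}(W^{(d)}) \ge \delta$ for all $m$, whence $\Vert G^{-1}\Vert_2 \le C_u/(c_u\,\delta) =: C_G$. The one point that must be handled carefully is precisely the continuity of the map $H \mapsto \mathbf{u}(H)$, which is where the uniqueness guaranteed by Lemma~\ref{lem:existence and uniquness} is essential.
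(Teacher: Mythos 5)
Your proof is correct and takes essentially the same route as the paper's: the same similarity transformation $G = \operatorname{diag}(\mathbf{u})\,(I_n + W^{(d)})\,[\operatorname{diag}(\mathbf{u})]^{-1}$, a Perron--Frobenius argument to rule out the eigenvalue $-1$ of $W^{(d)}$ (you via irreducibility and aperiodicity of the associated Markov chain, the paper via $(W^{(d)})^2>0$ and Lemma 8.4.3 of Horn--Johnson), and a compactness argument based on Proposition~\ref{prop:H_ij and u_i bounds} for the $m$-uniform bound. If anything, your handling of the last step is more careful than the paper's, since you make explicit the continuity of the map $H \mapsto \mathbf{u}(H)$ (via the subsequence-plus-uniqueness argument resting on Lemma~\ref{lem:existence and uniquness}), which the paper leaves implicit.
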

\begin{proof}
Notice that $G$ is similar to the matrix
\begin{equation}
    \left[\operatorname{diag}(\mathbf{u})\right]^{-1} G \operatorname{diag}(\mathbf{u}) = I_n + \operatorname{diag}(\mathbf{u}) H \operatorname{diag}(\mathbf{u}) = I_n + W^{(d)}.
    \end{equation}
Therefore, $G$ is invertible if $I_n + W^{(d)}$ is invertible.
Since $W^{(d)}$ is symmetric and doubly-stochastic, its largest eigenvalue is exactly $1$, and $\lambda_{\min}\{W^{(d)}\} \geq -1$. Moreover, since $W_{i,j}>0$ for all $i\neq j$, we have that $\{(W^{(d)})^2\}_{i,j} > 0$ for all $i,j$. Therefore, by Lemma 8.4.3 in~\cite{horn_johnson_2012} $W^{(d)}$ has only one eigenvalue with maximal absolute-value (which is $1$). Hence, $\lambda_{\min}\{W^{(d)}\} > -1$, and we obtain that 
\begin{equation}
    \lambda_{\min} \{ G \} = \lambda_{\min} \{ I_n + W^{(d)}\} = 1 + \lambda_{\min} \{ W^{(d)} \} > 0.
\end{equation}
The fact that $\Vert G ^{-1} \Vert_2$ is bounded by some constant independent of $m$ is established by Proposition~\ref{prop:H_ij and u_i bounds} (since the set of all possible matrices $G$ that satisfy~\eqref{eq:H_ij upper and lower bounds} and~\eqref{eq:u_i upper and lower bounds} is compact). 
\end{proof}
Using~\eqref{eq:e Taylor matrix form} together with Lemma~\ref{lem:G is invertible} and Proposition~\ref{prop:H_ij and u_i bounds}, we have that
\begin{align}
    \Vert \mathbf{e} \Vert &\leq C_G \Vert \left[\operatorname{diag}(\mathbf{u})\right]^2 (\widetilde{H} - H)\mathbf{u} \Vert + \mathcal{O}_p(\Vert \mathbf{e} \Vert^2 ) + \mathcal{O}_p(\Vert \widetilde{{H}} - {H} \Vert_F^2 ) \nonumber \\
    &\leq C_G \Vert \operatorname{diag}(\mathbf{u})\Vert_2^2 \cdot \Vert \widetilde{H} - H \Vert_2 \cdot \Vert \mathbf{u}\Vert_2 +  \mathcal{O}_p(\Vert \mathbf{e} \Vert^2 ) + \mathcal{O}_p(\Vert \widetilde{{H}} - {H} \Vert_F^2 ) \nonumber \\
    &= \mathcal{O}_p(\Vert \widetilde{{H}} - {H} \Vert_F ) +  \mathcal{O}_p(\Vert \mathbf{e} \Vert^2 ) + \mathcal{O}_p(\Vert \widetilde{{H}} - {H} \Vert_F^2 ), \label{eq:e bound derivation}
\end{align}
where we used the inequality $\Vert \widetilde{H} - H \Vert_2\leq \Vert \widetilde{H} - H \Vert_F$.
From~\eqref{eq:H_tilde - H frob bound},~\eqref{eq:e bound derivation}, and the fact that $\widetilde{H} \overset{p}{\longrightarrow} H$, $\widetilde{\mathbf{u}} \overset{p}{\longrightarrow} \mathbf{u}$, it follows that
\begin{equation}
    \Vert \mathbf{e} \Vert = \Vert \widetilde{\mathbf{u}} - \mathbf{u} \Vert = \mathcal{O}_p (m^{-1/2}). \label{eq:e bound}
\end{equation}
Consequently,
\begin{equation}
    \Vert \widetilde{\mathbf{u}} \widetilde{\mathbf{u}}^T - \mathbf{u}\mathbf{u}^T \Vert_F = \Vert (\mathbf{u} + \mathbf{e})(\mathbf{u} + \mathbf{e})^T - \mathbf{u}\mathbf{u}^T \Vert_F = \Vert \mathbf{e}\mathbf{u}^T + \mathbf{u}\mathbf{e}^T +\mathbf{e}\mathbf{e}^T \Vert_F \leq 2 \Vert \mathbf{u} \Vert \cdot \Vert \mathbf{e} \Vert + \Vert \mathbf{e} \Vert^2 = \mathcal{O}_p (m^{-1/2}), \label{eq:u_tilde u_tilde_T - u u_T fro bound}
\end{equation}
where we used Proposition~\ref{prop:H_ij and u_i bounds} to bound $\Vert \mathbf{u} \Vert$. Overall, substituting~\eqref{eq:u_tilde u_tilde_T - u u_T fro bound},~\eqref{eq:H_tilde - H frob bound}, and~\eqref{eq:max_u_i_u_j and max_H_ij_tilde bounds} into~\eqref{eq:W_tilde - W frob bound}, we arrive at the required result
\begin{equation}
    \Vert \widetilde{W}^{(d)} - W^{(d)} \Vert_F = \mathcal{O}_p (m^{-1/2}).
\end{equation}

\end{appendices}

\bibliographystyle{plain}
\bibliography{mybib}

\end{document}